\documentclass{article}
\usepackage{microtype}
\usepackage{mathtools}
\usepackage{mathrsfs}
\usepackage{algorithm}
\usepackage{algorithmic}
\usepackage{amssymb}
\usepackage{graphicx}
\usepackage[numbers,sort&compress]{natbib}
\usepackage[colorlinks=true, allcolors=blue]{hyperref}
\usepackage{subcaption}
\usepackage{bm}
\usepackage{amsmath}
\numberwithin{equation}{section}
\usepackage{booktabs}
\usepackage{tabularx}
\usepackage{multirow}
\usepackage{enumerate}
\usepackage{amsthm}
\usepackage{authblk}

\usepackage{fancyhdr}
\newtheorem{theorem}{Theorem}[section]
\newtheorem{lemma}[theorem]{Lemma}

\newtheorem{corollary}[theorem]{Corollary}

\theoremstyle{definition}
\newtheorem{definition}[theorem]{Definition}

\newtheorem{remark}[theorem]{Remark}
\allowdisplaybreaks

\def\e{\mathbb{E}}

\def\rbb{\mathbb{R}}

\newcommand{\bz}{\mathbf{z}}

\newcommand{\bw}{\mathbf{w}}

\newcommand{\xcal}{\mathcal{X}}
\newcommand{\wcal}{\mathcal{W}}

\newcommand{\bx}{\mathbf{x}}

\newcommand{\zcal}{\mathcal{Z}}

\newcommand{\ycal}{\mathcal{Y}}

\newcommand{\ebb}{\mathbb{E}}

\newcommand{\pbb}{\mathbb{P}}

\newcommand{\argmin}{\operatorname{\arg\min}}

\title{{\bf Learning Theory of the SVRG: Generalization and Convergence Analysis}}

\author[1]{Yunwen Lei}
\author[2]{Zimeng Wang}
\author[1]{Xiaoming Yuan}
\affil[1]{Department of Mathematics, The University of Hong Kong, Hong Kong, China\\
\texttt{leiyw@hku.hk, xmyuan@hku.hk}}
\affil[2]{Department of Mathematics and Mathematical Statistics, Ume\r{a} University, Sweden\\
\texttt{zimeng.wang@umu.se}}

\date{May 27, 2026}

\begin{document}

\maketitle

\begin{abstract}
  Variance reduction (VR) methods employ stochastic gradients with decreasing variance, and they have been widely applied to solve large-scale optimization problems in machine learning because of their efficiency. Existing theoretical studies of VR methods are mainly focused on the convergence analysis, leaving the generalization behavior largely unexplored. In this paper, we bridge this gap by developing the first non-vacuous generalization analysis of the representative VR method: Stochastic Variance Reduced Gradient (SVRG), through the lens of algorithmic stability. In particular, we establish sharp stability bounds of the SVRG in both convex and strongly convex settings by exploiting its algorithmic structure. The obtained bounds are data-dependent, because the training errors are incorporated along the trajectory. Our analysis clarifies the interplay between optimization and generalization, leading to optimal excess population risk bounds in both settings. Our approach differs substantially from existing analyses of stochastic algorithms in the sense that we decompose the SVRG update as an SGD-like step plus a zero-mean correction term and then introduce novel Lyapunov functions to absorb the additional gradient terms induced by the reference points. Our analytical framework can be generalized to other VR methods, and we demonstrate the generalization by the well-known Stochastic Average Gradient Accelerated (SAGA) method.
\end{abstract}

\noindent\textbf{Keywords:} SVRG, SAGA, learning theory, algorithmic stability, generalization analysis, convergence analysis, excess population risk.

\section{Introduction}
The rapid growth of model complexity and data scale in modern machine learning has posed significant challenges for traditional optimization algorithms such as gradient descent (GD).
To address this bottleneck, stochastic methods have gained widespread adoption, which balance the computational cost and precision for scalability and efficiency. Among these, the stochastic gradient descent (SGD) method \citep{robbins1951stochastic} and its variants \citep{cotter2011better, dekel2012optimal, loizou2020momentum,nguyen2021unified,fang2019sharp,rosasco2020convergence} have emerged as the dominant optimization framework. Unlike GD, which relies on full gradient computations, SGD leverages random sampling to reduce per-iteration computational costs. This property enables its superior performance in large-scale model training across diverse applications. Unfortunately, SGD suffers from inherent limitations: its convergence rate is sublinear even for strongly convex objectives, as diminishing step sizes are required to ensure asymptotic convergence. This often results in slow practical performance, motivating the development of accelerated variants.
Among these, variance reduction (VR) methods such as the Stochastic Variance Reduced Gradient (SVRG) \citep{johnson2013accelerating,xiao2014proximal}, Stochastic Average Gradient (SAG) \citep{roux2012stochastic}, and Stochastic Average Gradient Accelerated (SAGA) \citep{defazio2014saga} have been particularly successful in achieving faster convergence than SGD. Unlike SGD, which uses stochastic gradient estimates with non-vanishing variance, VR methods employ more sophisticated update rules that progressively reduce the variance of gradient approximations, and hence often exhibit faster convergence both in theory and practice. Specifically, these VR methods admit constant step sizes and can achieve linear convergence rates for strongly convex objectives.

Existing analyses of VR methods have primarily focused on optimization errors (i.e., the performance of models on the training dataset), while in machine learning, the generalization behavior of models on unseen testing samples is of greater interest. Unfortunately, the generalization analysis of VR methods is much lagging behind. There exist only very few investigations on the generalization behavior of SVRG \citep{charles2018stability, liu2024general, meng2017generalization, sun2024understanding}. Specifically, the studies \citep{charles2018stability, meng2017generalization} derive black-box generalization bounds for convergent learning algorithms, expressed in terms of optimization error and training sample size. While these bounds can be specialized to SVRG by invoking its known convergence rates, they fail to leverage the specific algorithmic structure of SVRG and imply loose guarantees. Furthermore, these results require additional assumptions such as the strong convexity and Polyak-\L ojasiewicz (PL) conditions. More recently, some more refined analyses are taken in \citep{liu2024general, sun2024understanding}, which investigate the uniform stability \citep{bousquet2002stability} and on-average stability \citep{shalev2010learnability} of SVRG. Although they consider the algorithmic structure, their analyses require a strong Lipschitzness assumption on the loss functions and very small step sizes for which the method converges at extremely slow rates. %

Motivated by the above discussions, we aim to bridge the gap between the well-established optimization theory and the underdeveloped generalization analysis of VR methods. To this end, we develop the generalization analysis of SVRG through the lens of algorithmic stability \citep{bousquet2002stability}, which is a fundamental framework for quantifying the sensitivity of learning algorithms to perturbations in the training data. We then demonstrate the generality of our approach by extending the analysis to SAGA.
Our main contributions are summarized as follows.
\begin{itemize}
    \item[1).] We develop generalization error bounds for SVRG in both convex and strongly convex settings by investigating its on-average model stability \citep{lei2020fine}, and extend the analysis to SAGA. Notably, we obtain these results without assuming Lipschitzness of the loss functions. To the best of our knowledge, this represents the first non-vacuous generalization analysis of any VR method that can exploit the algorithmic structure (i.e., without interpreting it as a black-box method). Moreover, our stability bounds involve training errors, which show the benefit of optimization to generalization.
    \item[2).] We derive convergence guarantees for SVRG under convex loss functions with milder step size conditions compared to those in \citet{reddi2016stochastic}, and similarly establish optimization error bounds for SAGA with weaker step size requirements than those in the original work \citep{defazio2014saga}. By combining these bounds with our generalization results, we further obtain excess population risk (EPR) bounds of SVRG and SAGA for both convex and strongly convex problems.
    \item[3).] The derived EPR bounds for SVRG and SAGA are optimal. Specifically, our bounds for both methods are of order $O(1/\sqrt{n})$ for convex problems and of order $\widetilde{O}(1/(n\mu))$ for $\mu$-strongly convex problems, where $n$ denotes the sample size and $\widetilde{O}(\cdot)$ hides logarithmic factors. These results match the existing minimax lower bounds for statistical guarantees \citep{agarwal2009information}.
\end{itemize}

Our generalization analysis of SVRG differs significantly from existing ones of SGD and its variants \citep{hardt2016train,lei2020fine}, owing to the more complicated gradient structure inherent in SVRG's two-loop update mechanism. To overcome this difficulty, we split the iterative scheme into two parts: one mimics the SGD step and the other serves as a gradient correction. To handle the redundant terms arising from the gradient correction term, we introduce novel Lyapunov functions to help construct recursion inequalities. Additionally, to remove the Lipschitzness assumption, we leverage the self-bounding property \citep{srebro2010smoothness} to bound the terms involving norms of gradients. These techniques extend naturally to SAGA, and we believe they can further inspire generalization analysis for other VR methods.

The rest of the paper is organized as follows. We discuss related work in Section~\ref{sec:rw} and formulate the problem in Section~\ref{sec:setup}. Sections~\ref{sec:svrg-stab} and~\ref{sec:svrg-epr} present our stability and excess population risk analysis for SVRG in the convex setting, respectively. Section~\ref{sec:saga} presents the analysis of SAGA as an application of our analytical framework. Section~\ref{sec:simulation} is devoted to numerical experiments, and Section~\ref{sec:conclusion} concludes the paper. The omitted proofs for SAGA are collected in Appendix~\ref{sec:saga-proofs}, and the strongly convex analyses of both SVRG and SAGA are deferred to Appendix~\ref{app:sc}.

\section{Related Work}\label{sec:rw}
In this section, we review related work on algorithmic stability and variance reduction methods.

\noindent\textbf{Algorithmic stability.} Algorithmic stability is a fundamental concept in statistical learning theory to study the generalization behavior of learning algorithms~\citep{bousquet2002stability}. Intuitively, a learning algorithm is said to be stable if the output of this algorithm changes slightly if we perturb the training dataset by a single example. Various stability concepts have been introduced in the literature, including the uniform stability~\citep{bousquet2002stability}, on-average stability~\citep{shalev2010learnability}, hypothesis stability~\citep{bousquet2002stability}, argument stability~\citep{liu2017algorithmic}, and on-average model stability~\citep{lei2020fine}. The seminal work~\citep{hardt2016train} pioneered the stability analysis of SGD for smooth and Lipschitz problems. Recent progress relaxes the Lipschitzness assumptions and develops optimistic stability bounds by incorporating the training errors into the stability analysis~\citep{kuzborskij2018data,lei2020fine,schliserman2022stability}. A notable property of the stability analysis is that it can imply algorithm-dependent generalization bounds to understand the balance between optimization and generalization, which motivates the growing interest in studying the stability of various stochastic optimization algorithms, including federated learning~\citep{chen2023minimax,sun2023understanding}, delayed SGD~\citep{deng2025towards}, stochastic proximal point methods~\citep{yuan2023sharper}, zeroth-order SGD~\citep{nikolakakis2022black}, clipped-SGD~\citep{zeng2026stochastic}, Langevin dynamics~\citep{mou2018generalization}, minibatch SGD~\citep{nikolakakis2025select}, differential privacy~\citep{bassily2020stability}, multi-objective learning~\citep{chen2024three} and neural network training~\citep{deora2024optimization, richards2021stability,taheri2024generalization}.

\noindent\textbf{Variance reduction methods.} The concept of variance reduction (VR) techniques was initially developed to mitigate the non-vanishing variance issue in SGD that prevents convergence \citep{gower2020variance, wang2013variance}.
The contemporary era of VR methods began with the stochastic average gradient (SAG) method \citep{roux2012stochastic}, which was then followed by several influential alternatives, including the SAGA \citep{defazio2014saga}, the stochastic variance-reduced gradient (SVRG) \citep{johnson2013accelerating}, and the stochastic dual coordinate ascent (SDCA) \citep{shalev2013stochastic}. Specifically, the SAG method is a stochastic variant of the earlier incremental aggregated gradient method \citep{blatt2007convergent}. For $\mu$-strongly convex and $\alpha$-smooth problems, SAG was the first stochastic method to enjoy a linear convergence rate with a total gradient complexity (number of component gradient evaluations) of $O((\alpha/\mu+n)\log(1/\epsilon))$ to achieve a precision of $\epsilon>0$, using a step size of $O(1/\alpha)$. The SAGA method is an unbiased variant of SAG that employs reference points to construct unbiased stochastic gradients. Similar to SAG, it exhibits a linear convergence rate and a gradient complexity of $O((\alpha/\mu+n)\log(1/\epsilon))$, but offers a simpler theoretical analysis. Another prominent VR method, SVRG, also leverages reference points but reduces the storage cost of SAG while maintaining the same order of gradient complexity. The stability analysis of SVRG was also established in the literature, which, however, either interpreted SVRG as a black-box method~\citep{charles2018stability, meng2017generalization} or implied vacuous risk bounds~\citep{liu2024general, sun2024understanding}. For a broader overview of VR methods, we refer the reader to \citep{allen2018katyusha, csiba2015stochastic, fang2018spider, mairal2015incremental, nguyen2017sarah, shalev2013stochastic}.

\section{Problem Setup}\label{sec:setup}
Let $\pbb$ be a probability measure defined on a sample space $\zcal=\xcal\times\ycal$, where $\xcal$ is an input space and $\ycal$ is an output space.
Let $S=\{\bz_1,\ldots,\bz_n\}$ be a sequence of training examples drawn independently from $\pbb$, from which we aim to learn a model $h:\xcal\mapsto\ycal$ for prediction. We consider parametric models where each model is parameterized by a parameter $\bw$ in a parameter space $\wcal=\rbb^d$.
Let $\ell(\bw;\bz)$ be the loss incurred by applying $\bw$ to do the prediction on $\bz:=(\bx,y)$.
Popular examples of loss functions include the logistic loss $\ell(\bw;\bz)=\log(1+\exp(-y\langle\bw,\bx\rangle))$ and the smoothed hinge loss for classification, as well as the least-squares loss $\ell(\bw;\bz)=\frac{1}{2}(\bw^\top\bx-y)^2$ and Huber loss for regression.
We are interested in minimizing the population risk
\[
L(\bw):=\ebb_{\bz\sim\pbb}[\ell(\bw;\bz)],
\]
where $\ebb_{\bz}[\cdot]$ denotes the expectation w.r.t. $\bz$.
Since the probability measure $\pbb$ is unknown, we consider its empirical approximation
\begin{equation}\label{eq:LS-def}
    L_S(\bw):=\frac{1}{n}\sum_{i=1}^{n}\ell(\bw;\bz_i),
\end{equation}
which is referred to as the empirical risk (training error).

\subsection{Stability and Generalization}
We often apply a learning algorithm to approximately minimize the training error. We denote by $A(S)$ the output model by applying an algorithm $A$ to the dataset $S$. In statistical learning theory (SLT), we are especially interested in the relative behavior of the output model $A(S)$ as compared to the best model $\bw^*:=\arg\min_{\bw\in\wcal} L(\bw)$, which is measured by the excess population risk $L(A(S))-L(\bw^*)$. A standard approach to tackle the excess population risk is to consider the following error decomposition~\citep{bousquet2008tradeoffs}
\begin{equation*}
\ebb_{S, A}[L(A(S))-L(\mathbf{w}^*)]=\ebb_{S, A}[L(A(S))-L_S(A(S))]+\ebb_{S, A}[L_S(A(S))-L_S(\mathbf{w}^*)],
\end{equation*}
where we use the identity $\ebb[L_S(\bw^*)]=L(\bw^*)$ since $\bw^*$ is independent of $S$. We refer to $L(A(S))-L_S(A(S))$ as the generalization gap, which is the difference between testing and training on the output model. We refer to  $L_S(A(S))-L_S(\mathbf{w}^*)$ as the optimization error, which measures the suboptimality of the output model as compared to $\bw^*$ in training.
Generalization gap is a term of central interest in SLT, while optimization error has been extensively studied in optimization theory~\citep{bottou2018optimization}.

In this paper, we leverage the algorithmic stability to study the generalization gap. Intuitively, we say an algorithm is stable if a change of a single example does not lead to a significant change in the behavior of output models. Various stability concepts have been introduced, including uniform stability~\citep{bousquet2002stability}, on-average stability~\citep{shalev2010learnability}, and on-average model stability~\citep{lei2020fine}. In this paper, we focus on the on-average model stability since it can imply fast rates without a Lipschitzness assumption.  %
\begin{definition}[On-average Model Stability \citep{lei2020fine}]\label{def:stability}
    Let $S=\{\bz_1, \ldots, \bz_n\}$, $S^{\prime}=\{\bz_1^{\prime}, \ldots, \bz_n^{\prime}\}$ be drawn independently from $\pbb$. For any $i \in[n]:=\{1, \ldots, n\}$, define
    \begin{equation*}
      S^{(i)}:=\{\bz_1, \ldots, \bz_{i-1}, \bz_i^{\prime}, \bz_{i+1}, \ldots, \bz_n\}
    \end{equation*}
    as the set formed from $S$ by replacing its $i$-th element with $z_i^{\prime}$. Let $\epsilon>0$. We say a randomized algorithm $A$ is  on-average model $\epsilon$-stable if $\mathbb{E}_{S, S^{\prime}, A}[\frac{1}{n} \sum_{i=1}^n\|A(S)-A(S^{(i)})\|^2] \leq \epsilon^2$. %
\end{definition}
Before the analysis, we recall the definitions of convexity, Lipschitzness, and smoothness.
\begin{definition}[Convexity, Lipschitzness, and smoothness]
Let $g:\wcal\mapsto\rbb$ be differentiable. Let $\sigma\geq0, \alpha>0$, and $G>0$.
\begin{enumerate}
  \item[1).] We say $g$ is $\sigma$-strongly convex if
  \[
  g(\bw) \geq g(\bw') + \langle\bw-\bw',\nabla g(\bw')\rangle+\frac{\sigma}{2}\|\bw-\bw'\|^2,\;\forall\bw,\bw'\in\wcal.
  \]
  We say $g$ is convex if the above inequality holds with $\sigma=0$.
  \item[2).] We say $g$ is $G$-Lipschitz continuous if
  $
  |g(\bw)-g(\bw')|\leq G\|\bw-\bw'\|,\;\forall\bw,\bw'\in\wcal.
  $
  \item[3).] We say $g$ is $\alpha$-smooth if
  $
  \|\nabla g(\bw)-\nabla g(\bw')\|\leq \alpha\|\bw-\bw'\|,\;\forall\bw,\bw'\in\wcal.
  $
\end{enumerate}
\end{definition}

We also recall the self-bounding property of smooth functions, which is a key tool to remove the Lipschitzness assumption in our analysis.
\begin{lemma}[\textit{Self-bounding} property~\citep{srebro2010smoothness}]\label{lem:self-bound}
    For $\bz\in \zcal$, if the function $\bw \rightarrow \ell(\bw; \bz)$ is nonnegative and $\alpha$-smooth, then $\|\nabla \ell(\bw ; \bz)\|^2 \leq 2 \alpha \ell(\bw ; \bz)$ for all $\bw\in\wcal$.
\end{lemma}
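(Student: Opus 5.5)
The plan is to apply the descent inequality given by $\alpha$-smoothness at a gradient step of size $1/\alpha$. Fix $\bz$ and write $g(\bw):=\ell(\bw;\bz)$, which is nonnegative and $\alpha$-smooth. The first ingredient is the standard upper quadratic bound. For all $\bw,\bw'\in\wcal$,
\[
g(\bw')\le g(\bw)+\langle \nabla g(\bw),\bw'-\bw\rangle+\frac{\alpha}{2}\|\bw'-\bw\|^2 .
\]
I would derive this by writing $g(\bw')-g(\bw)=\int_0^1\langle\nabla g(\bw+t(\bw'-\bw)),\bw'-\bw\rangle\,dt$. Subtracting $\langle\nabla g(\bw),\bw'-\bw\rangle$ leaves an integrand that is bounded via Cauchy--Schwarz and the Lipschitz property of $\nabla g$ by $\alpha t\|\bw'-\bw\|^2$. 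Integrating in $t$ gives the factor $\alpha/2$.

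Next, I would specialize to the gradient step $\bw'=\bw-\frac{1}{\alpha}\nabla g(\bw)$. Substituting into the quadratic bound gives
\[
g(\bw')\le g(\bw)-\frac{1}{\alpha}\|\nabla g(\bw)\|^2+\frac{1}{2\alpha}\|\nabla g(\bw)\|^2=g(\bw)-\frac{1}{2\alpha}\|\nabla g(\bw)\|^2 .
\]
Finally, nonnegativity gives $g(\bw')\ge 0$, so $0\le g(\bw)-\frac{1}{2\alpha}\|\nabla g(\bw)\|^2$. Rearranging yields $\|\nabla \ell(\bw;\bz)\|^2\le 2\alpha\,\ell(\bw;\bz)$ for every $\bw$. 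Since $\bz$ was arbitrary, this proves the lemma.

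There is no real obstacle in this argument. The only step that needs care is justifying the integral form of the descent lemma, which uses only differentiability along a segment and the Lipschitz continuity of $\nabla g$. Convexity is not needed anywhere.
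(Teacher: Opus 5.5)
Your proof is correct. The paper gives no argument of its own for this lemma and simply cites \citep{srebro2010smoothness}; your route (the smoothness upper bound evaluated at $\bw-\frac{1}{\alpha}\nabla\ell(\bw;\bz)$, followed by nonnegativity) is the standard proof and yields the sharp constant $2\alpha$. The one implicit ingredient worth stating is that this trial point must lie in the domain, which holds because $\wcal=\rbb^d$ is unconstrained; on a constrained domain the lemma can fail.
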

Our stability analysis also requires the following lemma on the coercivity of smooth and convex functions.
\begin{lemma}[\textit{Coercivity}~\citep{hardt2016train}]\label{lem:coercivity}
    For $\bz\in \zcal$, if the function $\bw \rightarrow \ell(\bw; \bz)$ is convex and $\alpha$-smooth, then
    \[
    \langle \bw-\bw', \nabla \ell(\bw; \bz)-\nabla \ell(\bw';\bz)\rangle \ge \frac{1}{\alpha}\|\nabla \ell(\bw; \bz)-\nabla \ell(\bw'; \bz)\|^2,~\forall \bw,\bw'\in\wcal.
    \]
\end{lemma}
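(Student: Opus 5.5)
The plan is to derive the coercivity inequality of Lemma~\ref{lem:coercivity} from a co-coercivity argument applied to an auxiliary function built from $\ell(\cdot;\bz)$. Fix $\bz$ and write $f(\bw)=\ell(\bw;\bz)$. The first step is to establish, for every $\bw,\bw'$,
\[
f(\bw)\ge f(\bw')+\langle \nabla f(\bw'),\bw-\bw'\rangle+\frac{1}{2\alpha}\|\nabla f(\bw)-\nabla f(\bw')\|^2 .
\]
Writing this inequality with the roles of $\bw$ and $\bw'$ swapped and adding the two copies cancels the function values. What remains is exactly $\langle \bw-\bw',\nabla f(\bw)-\nabla f(\bw')\rangle\ge \frac1\alpha\|\nabla f(\bw)-\nabla f(\bw')\|^2$, which is the claim.

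To prove the displayed inequality, fix $\bw'$ and introduce $\phi(\bu)=f(\bu)-\langle\nabla f(\bw'),\bu\rangle$. This function is convex and $\alpha$-smooth, since it differs from $f$ by a linear term. Its gradient vanishes at $\bu=\bw'$, so by convexity $\bw'$ is a global minimizer of $\phi$. Next I use the descent lemma, which follows from $\alpha$-smoothness by integrating the gradient along a segment: $\phi(\bv)\le \phi(\bu)+\langle\nabla\phi(\bu),\bv-\bu\rangle+\frac\alpha2\|\bv-\bu\|^2$. Applying it with $\bu=\bw$ and $\bv=\bw-\frac1\alpha\nabla\phi(\bw)$ gives
\[
\phi(\bw')\le \phi\big(\bw-\tfrac1\alpha\nabla\phi(\bw)\big)\le \phi(\bw)-\frac1{2\alpha}\|\nabla\phi(\bw)\|^2 .
\]
Substituting $\nabla\phi(\bw)=\nabla f(\bw)-\nabla f(\bw')$ and unwinding the definition of $\phi$ yields the displayed inequality.

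The only slightly delicate point is the minimizer argument for $\phi$. It needs convexity, because a stationary point of a convex function is a global minimizer, and it needs the domain $\wcal=\rbb^d$ to be unconstrained, so that the gradient step $\bv$ is a legitimate point at which to evaluate $\phi$. Both conditions hold in the present setting. The descent lemma itself is routine, via $\phi(\bv)-\phi(\bu)=\int_0^1\langle\nabla\phi(\bu+t(\bv-\bu)),\bv-\bu\rangle dt$ together with Cauchy--Schwarz and the Lipschitz continuity of the gradient.
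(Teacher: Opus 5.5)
Your argument is correct. The auxiliary function $\phi$ is minimized at $\bw'$, the descent-lemma step gives the $\frac{1}{2\alpha}$ lower bound, and adding the two symmetric copies gives the claim with constant $\frac{1}{\alpha}$. The paper gives no proof of its own (it cites \citet{hardt2016train}), but your intermediate inequality is exactly Eq.~\eqref{coercivity-ll}, which the paper uses in the proof of Theorem~\ref{thm:svrg-opt-cvx}, so your route is the standard derivation the paper relies on.
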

The following lemma gives a quantitative connection between the generalization gap and the on-average model stability for smooth problems. %
\begin{lemma}[\citep{lei2020fine}]\label{lem:general-stablity}
    Let $S, S^{\prime}$, and $S^{(i)}$ be constructed as in Definition \ref{def:stability}, and $\gamma>0$. If for any $\bz\in\zcal$, the function $\mathbf{w} \mapsto \ell(\mathbf{w} ; \bz)$ is nonnegative and $\alpha$-smooth, then
        \begin{equation*}
            \ebb_{S, A}[L(A(S))-L_S(A(S))] \leq \frac{\alpha}{\gamma} \ebb_{S, A}[L_S(A(S))]+\frac{\alpha+\gamma}{2 n} \sum_{i=1}^n \ebb_{S, S^{\prime}, A}[\|A(S^{(i)})-A(S)\|^2].
        \end{equation*}
\end{lemma}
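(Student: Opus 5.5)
The plan is the standard symmetrization argument of \citet{lei2020fine}. First, since $\bz_i$ and $\bz_i'$ are i.i.d.\ and $A(S^{(i)})$ does not depend on $\bz_i$, we have $\ebb[L(A(S))]=\frac1n\sum_{i=1}^n\ebb[\ell(A(S^{(i)});\bz_i)]$. Also $\ebb[L_S(A(S))]=\frac1n\sum_i\ebb[\ell(A(S);\bz_i)]$. Hence
\[
\ebb_{S,A}[L(A(S))-L_S(A(S))]=\frac1n\sum_{i=1}^n\ebb\big[\ell(A(S^{(i)});\bz_i)-\ell(A(S);\bz_i)\big].
\]
The task thus reduces to bounding each summand pointwise by the training loss at $A(S)$ plus the squared distance $\|A(S^{(i)})-A(S)\|^2$.

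For fixed $i$, write $\bw=A(S)$ and $\bw'=A(S^{(i)})$. By $\alpha$-smoothness of $\ell(\cdot;\bz_i)$,
\[
\ell(\bw';\bz_i)-\ell(\bw;\bz_i)\le\langle\nabla\ell(\bw;\bz_i),\bw'-\bw\rangle+\frac{\alpha}{2}\|\bw'-\bw\|^2 .
\]
Apply Cauchy--Schwarz and then Young's inequality with parameter $\gamma$:
\[
\langle\nabla\ell(\bw;\bz_i),\bw'-\bw\rangle\le\frac{1}{2\gamma}\|\nabla\ell(\bw;\bz_i)\|^2+\frac{\gamma}{2}\|\bw'-\bw\|^2 .
\]
Next, Lemma~\ref{lem:self-bound}, which uses nonnegativity, gives $\|\nabla\ell(\bw;\bz_i)\|^2\le2\alpha\ell(\bw;\bz_i)$. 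The gradient term is therefore at most $\frac{\alpha}{\gamma}\ell(A(S);\bz_i)$.

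Combining, each summand is bounded by
\[
\frac{\alpha}{\gamma}\ell(A(S);\bz_i)+\frac{\alpha+\gamma}{2}\|A(S^{(i)})-A(S)\|^2 .
\]
Averaging over $i$ and taking expectations gives $\frac{\alpha}{\gamma}\ebb[L_S(A(S))]$ plus the stated stability term, which is exactly the claimed bound.

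The only delicate point is the symmetrization identity in the first step. It must be justified carefully: swapping $\bz_i\leftrightarrow\bz_i'$ preserves the joint distribution of $(S,S')$, and the algorithm's internal randomness is independent of the data. The rest is routine.
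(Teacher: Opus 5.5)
Your proof is correct and is essentially the argument of Lei and Ying (2020), which the paper cites rather than reproducing. That argument uses the symmetrization identity $\ebb[L(A(S))]=\frac1n\sum_i\ebb[\ell(A(S^{(i)});\bz_i)]$, the smoothness upper bound, Young's inequality with parameter $\gamma$, and the self-bounding property to turn $\frac{1}{2\gamma}\|\nabla\ell(A(S);\bz_i)\|^2$ into $\frac{\alpha}{\gamma}\ell(A(S);\bz_i)$, with constants combining exactly as you state.
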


\subsection{The SVRG Method}
We study the SVRG method~\citep{johnson2013accelerating}, which is arguably the most classic VR method for minimizing the empirical risk $L_S(\bw)$ defined in Eq.~\eqref{eq:LS-def}. The detailed implementation of SVRG is summarized in Algorithm~\ref{algo: svrg}, which features a two-loop structure: an outer loop indexed by $t$ that updates the reference point, and an inner loop consisting of $m$ stochastic gradient steps performed using the same reference point. Compared with some other VR methods, such as SAG and SAGA, SVRG reduces the storage cost from $O(nd)$ to $O(d)$ by maintaining only a single reference gradient.

\begin{algorithm}
    \caption{SVRG for minimizing $L_S(\bw)$}
    \label{algo: svrg}
    \begin{algorithmic}
    \REQUIRE initial point $\bw_1 = \bx_m^1 \in \rbb^d$; step size $\eta>0$.
    \FOR{$t = 1, 2, \dots$}
    \STATE Set $\bx_0^{t+1}=\bx_m^t$ (option \uppercase\expandafter{\romannumeral 1}) or $\bx_0^{t+1}=\bw_t$ (option \uppercase\expandafter{\romannumeral 2}).
    \FOR{$k = 0, \dots, m-1$}
    \STATE Sample an index $i_k$ uniformly at random from $[n]$.
    \STATE Set $g_k^{t+1}=\nabla \ell(\bx_k^{t+1}; \bz_{i_k})-\nabla \ell(\bw_t; \bz_{i_k})+\nabla L_S(\bw_t)$.
    \STATE Update $\bx_{k+1}^{t+1}=\bx_{k}^{t+1}-\eta g_k^{t+1}$.
    \ENDFOR
    \STATE Update $\bw_{t+1} \in\{\bx_k^{t+1}\}_{k=0}^{m-1}$ uniformly at random.
    \ENDFOR
    \end{algorithmic}
\end{algorithm}

In the next two sections (Sections~\ref{sec:svrg-stab} and \ref{sec:svrg-epr}), we present a detailed analysis of SVRG, including stability, convergence, and excess population risk (EPR) guarantees. To better illustrate the main ideas, we focus on the behavior of SVRG under the assumption that $\ell(\cdot;\bz)$ is nonnegative, convex, and $\alpha$-smooth for all $\bz\in\zcal$, whereas the analysis for the strongly convex case is deferred to Appendix~\ref{app:sc}. For simplicity, we use option~\uppercase\expandafter{\romannumeral 1} for the inner-loop initialization in the convex setting.

\section{Stability Analysis of SVRG}\label{sec:svrg-stab}
In this section, we conduct stability analysis for SVRG, which together with Lemma~\ref{lem:general-stablity} yields generalization error bounds. In particular, given neighboring datasets $S$ and $S^{(i)}$ as defined in Definition~\ref{def:stability}, we aim at establishing on-average model stability bound $\|\bx_m^{t+1}-\widetilde{\bx}_m^{t+1}\|^2$ for $t\ge 1$, where $\{\bx_k^{t}\}_{t,k}$ and $\{\widetilde{\bx}_k^{t}\}_{t,k}$ are the sequences generated by SVRG applied to $S$ and $S^{(i)}$, respectively. This is much more challenging than the stability analysis of SGD~\citep{hardt2016train,lei2020fine}, and the difficulty mainly arises from: (i) the two-loop structure and (ii) the more complicated gradient constructions of SVRG.

\paragraph{Technical Strategies.} To overcome challenge (i), we adopt an inner-to-outer analysis: we first fix an outer-loop index $t\ge 1$ and analyze the corresponding inner-loop updates, then telescope the resulting inequalities over the inner loop to derive a recursive relation for the outer-loop iterates. For notational convenience, we omit the superscript and denote $\bx_{k} := \bx_{k}^{t+1}$. Then the core step of SVRG reads as
\begin{equation}\label{eq:svrg-update}
    \bx_{k+1}=\bx_k-\eta\big(\nabla \ell(\bx_k; \bz_{i_k})-\nabla \ell(\bw_t; \bz_{i_k})+\nabla L_S(\bw_t)\big),
\end{equation}
which differs from SGD by involving additional gradient terms evaluated at the reference point $\bw_t$. To mitigate the complexity brought by (ii), we define auxiliary operators $G_k$ and $d_k$ via
\begin{equation}\label{def:svrg-G-d}
    G_k:=\bx_k-\eta \nabla \ell(\bx_k; \bz_{i_k}), \quad d_k:=\nabla \ell(\bw_t; \bz_{i_k})- \nabla L_S(\bw_t).
\end{equation}
Then Eq.~\eqref{eq:svrg-update} can be written as
\begin{equation}\label{eq:svrg-core}
    \bx_{k+1} = G_k + \eta d_k.
\end{equation}
Hereafter, we denote by $\e[\cdot]$ the expectation taken over all randomness and $\e_{k}[\cdot]$ the conditional expectation given all history up to time $(t, k)$.
From the uniform randomness of index $i_k\in[n]$, we know that $\e_k[d_k]=0$.
Therefore, the reformulation Eq.~\eqref{eq:svrg-core} can be viewed as an SGD-like update $G_k$ plus a zero-mean correction term $\eta d_k$, which serves as the key starting point for analyzing the stability bound of SVRG within the inner loop.
After telescoping the inner loop, we then introduce a sequence of Lyapunov functions $\{U_t\}_{t\ge 1}$ defined by (we omit the dependency on $i$ in $U_t$ for simplicity):
\begin{equation}\label{eq:svrg-lyapunov}
U_t := \|\bx_m^t - \widetilde{\bx}_m^t\|^2 + \frac{2m\eta^2}{n} \sum_{j=1}^n \big\|\nabla \ell(\bw_t; \bz_j) - \nabla \ell(\bw_t^{(i)}; \bz_j^{(i)})\big\|^2,
\end{equation}
where $\{\bw_t^{(i)}\}_{t\ge 1}$ is the sequence of reference points generated by SVRG applied to $S^{(i)}=\big\{\bz_j^{(i)}\big\}_{j=1}^n$. The introduction of $U_t$ helps balance the additional terms arising from $d_t$, which in turn allows us to derive a recursive relation for $\{U_t\}_{t\ge 1}$ and hence establish the stability bound for SVRG.
Note that the proposed analysis framework based on the reformulation Eq.~\eqref{eq:svrg-core} and the use of Lyapunov functions is fairly general and can be extended to other VR methods with similar gradient structures. As an illustration, we apply a similar approach to analyze SAGA in Section~\ref{sec:saga}.

The following theorem presents the stability bounds for SVRG in the convex setting. The upper bounds involve a summation of the empirical errors at the iterates, and improve for models with small training errors, which is consistent with existing analysis for SGD~\citep{lei2020fine}. To the best of our knowledge, this gives the first non-vacuous stability analysis for SVRG and more generally, for any VR method in the literature.

\begin{theorem}[Stability bounds]\label{thm:svrg-stab-cvx}
Let $S,S^{(i)}$ be defined in Definition~\ref{def:stability}. Let $\{\bx_k^{t}\}_{t,k}$ and $\{\widetilde{\bx}_k^{t}\}_{t,k}$ be the sequences produced by Algorithm~\ref{algo: svrg} with option \uppercase\expandafter{\romannumeral 1} applied to $S$ and $S^{(i)}$, respectively.
Assume that for any $\bz\in\zcal$, the map $\bw\mapsto\ell(\bw;\bz)$ is nonnegative, convex, and $\alpha$-smooth.
If $\eta \leq \frac{1}{2 \alpha}$, then for any $t\ge 1$, we have
\begin{equation}\label{eq:svrg-stab-thm}
\e\big[\|\bx_m^{t+1} - \widetilde{\bx}_m^{t+1}\|^2\big] \leq \frac{16e\alpha m\eta^2}{n} L(\bw_1) + \frac{8e\alpha(4+mt/n)}{n} \eta^2 \sum_{l=1}^t \sum_{k=0}^{m-1} \e\big[L_S(\bx_k^{l+1})\big],
\end{equation}
where $e$ denotes Euler's number.
\end{theorem}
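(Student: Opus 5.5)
We couple the two runs on $S$ and $S^{(i)}$ through the same random indices $i_k$ and the same random choices of $\bw_{t+1}$. We then work inside a fixed outer iteration with the decomposition \eqref{eq:svrg-core}, $\bx_{k+1}=G_k+\eta d_k$, and $\widetilde{\bx}_{k+1}=\widetilde G_k+\eta\widetilde d_k$, where $\widetilde d_k=\nabla\ell(\bw_t^{(i)};\bz^{(i)}_{i_k})-\nabla L_{S^{(i)}}(\bw_t^{(i)})$.

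\textbf{Inner-loop step.} We expand
\[
\|\bx_{k+1}-\widetilde{\bx}_{k+1}\|^2=\|G_k-\widetilde G_k\|^2+2\eta\langle G_k-\widetilde G_k,d_k-\widetilde d_k\rangle+\eta^2\|d_k-\widetilde d_k\|^2 .
\]
The cross term is the delicate one. Since $\e_k[d_k-\widetilde d_k]=0$, we have $\e_k\langle \bx_k-\widetilde{\bx}_k,d_k-\widetilde d_k\rangle=0$. What remains is $-2\eta^2\langle\nabla\ell(\bx_k;\bz_{i_k})-\nabla\ell(\widetilde{\bx}_k;\bz^{(i)}_{i_k}),d_k-\widetilde d_k\rangle$, which we bound by Young's inequality, splitting it into gradient-difference squares and $\eta^2\|d_k-\widetilde d_k\|^2$. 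For $\|G_k-\widetilde G_k\|^2$ we follow the SGD analysis of \citet{lei2020fine}:
\begin{itemize}
\item When $i_k\neq i$ (probability $1-1/n$), coercivity (Lemma~\ref{lem:coercivity}) with $\eta\le 1/(2\alpha)$ gives nonexpansiveness with a surplus $-\eta(\frac{2}{\alpha}-\eta)\|\nabla\ell(\bx_k)-\nabla\ell(\widetilde{\bx}_k)\|^2$. This surplus is used to absorb the gradient squares produced by Young's inequality.
\item When $i_k=i$, we use $\|a+b\|^2\le(1+p)\|a\|^2+(1+1/p)\|b\|^2$ together with the self-bounding property (Lemma~\ref{lem:self-bound}), which gives $\|\nabla\ell(\bx_k;\bz_i)\|^2\le 2\alpha\ell(\bx_k;\bz_i)$.
\end{itemize}
We then bound $\e_k\|d_k-\widetilde d_k\|^2\le\frac1n\sum_j\|\nabla\ell(\bw_t;\bz_j)-\nabla\ell(\bw_t^{(i)};\bz^{(i)}_j)\|^2=:D_t$, since the variance is at most the second moment. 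The result is a recursion of the form
\[
\e\|\bx_{k+1}-\widetilde{\bx}_{k+1}\|^2\le(1+p/n)\,\e\|\bx_k-\widetilde{\bx}_k\|^2+2\eta^2\,\e D_t+\frac{c(1+1/p)\alpha\eta^2}{n}\,\e[\ell(\bx_k;\bz_i)+\ell(\widetilde{\bx}_k;\bz_i')].
\]

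\textbf{Telescoping and the outer recursion.} We sum over $k=0,\dots,m-1$ using option I, so that $\bx_0^{t+1}=\bx_m^t$. This gives $\|\bx_m^{t+1}-\widetilde{\bx}_m^{t+1}\|^2$ in terms of $\|\bx_m^t-\widetilde{\bx}_m^t\|^2$, plus $2m\eta^2 D_t$, plus loss sums. We must then relate $D_{t+1}$ back to quantities in the current loop. Because $\bw_{t+1}$ is a uniformly random inner iterate,
\[
D_{t+1}\le\frac{1}{m}\sum_{k}\frac2n\sum_j\|\nabla\ell(\bx_k;\bz_j)-\nabla\ell(\widetilde{\bx}_k;\bz_j)\|^2+\text{(terms with } j=i\text{)}.
\]
The first part is controlled by the coercivity surplus, summed over the inner loop. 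The $j=i$ terms are $O(\alpha/n)$ times losses by self-bounding. This is exactly why the Lyapunov function $U_t$ in \eqref{eq:svrg-lyapunov} carries the weight $2m\eta^2/n\cdot\sum_j$. The goal is
\[
\e U_{t+1}\le(1+p/n)^m\,\e U_t+\frac{c\alpha\eta^2(1+1/p)}{n}\sum_k\e[\ell(\bx_k^{t+1};\bz_i)+\ell(\widetilde{\bx}_k^{t+1};\bz'_i)].
\]

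\textbf{Unrolling.} We unroll over $t$ and choose $p=n/(mt)$, so that $(1+p/n)^{mt}\le e$ and $1+1/p=1+mt/n$. The initial term is $U_1=\frac{2m\eta^2}{n}\sum_j\|\nabla\ell(\bw_1;\bz_j)-\nabla\ell(\bw_1;\bz_j^{(i)})\|^2$; by self-bounding, after averaging over $i$ and taking expectation, it gives the term of order $\alpha m\eta^2 L(\bw_1)/n$. Finally we average over $i$ and use symmetry: $\e[\ell(\widetilde{\bx}_k;\bz_i')]=\e[\ell(\bx_k;\bz_i)]$ and $\frac1n\sum_i\ell(\bx_k;\bz_i)=L_S(\bx_k)$. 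This yields \eqref{eq:svrg-stab-thm}, where the extra "4" comes from the $D_t$ contributions that are not multiplied by $1+1/p$.

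\textbf{Main obstacle.} The hardest step is the coefficient bookkeeping when $D_{t+1}$ is absorbed: the coercivity surplus $\eta(2/\alpha-\eta)$ has to dominate both the Young-inequality remainder and the $2m\eta^2\cdot\frac{2}{m}$ contribution feeding $U_{t+1}$. I would first try Young's inequality with weight $1$ on the cross term and check that $\eta\le1/(2\alpha)$ suffices. If it does not, I would switch to $\|\nabla\ell(\bx_k)-\nabla\ell(\widetilde{\bx}_k)\|^2\le\alpha\langle\cdot,\cdot\rangle$ and keep the inner products instead of the squared norms.
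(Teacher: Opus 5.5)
Your route is the paper's: the split $\bx_{k+1}=G_k+\eta d_k$, the zero-mean cross term with Cauchy--Schwarz on the $\eta^2$ remainder, the bound $\e_k\|d_k-\widetilde d_k\|^2\le D_t$, the two-case treatment of $\|G_k-\widetilde G_k\|^2$, the Lyapunov function $U_t$, and $p=n/(mt)$. The gap is at the step you flagged, and your bound on $D_{t+1}$ causes it.

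With the factor $\frac{2}{n}$ you wrote, the term $2m\eta^2 D_{t+1}$ contributes $\frac{4\eta^2}{n}$ for each pair $(k,j)$ with $j\neq i$. The net coercivity surplus, after paying the $\frac{\eta^2}{n}$ Cauchy--Schwarz remainder, is only $\frac{2}{n}(\frac{\eta}{\alpha}-\eta^2)$. Absorption then needs $\frac{\eta}{\alpha}\ge 3\eta^2$, i.e.\ $\eta\le\frac{1}{3\alpha}$, so $\eta\le\frac{1}{2\alpha}$ does not suffice.

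Your fallback does not rescue this. Replacing the squared gradient difference by $\alpha$ times the inner product is just coercivity again, and it yields the same threshold $\eta\le\frac{1}{3\alpha}$. Shrinking the Lyapunov weight does not help either: the telescoped inner loop produces $2m\eta^2(1+p/n)^m D_t$, which essentially forces the weight $2m\eta^2$.

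The fix is that no inequality is needed at this step. The runs select $\bw_{t+1}$ and $\bw^{(i)}_{t+1}$ with the same uniform index, and $\bz^{(i)}_j=\bz_j$ for $j\neq i$. Hence
\[
\e[D_{t+1}]=\frac{1}{nm}\sum_{k=0}^{m-1}\sum_{j\neq i}\e\|\nabla\ell(\bx_k^{t+1};\bz_j)-\nabla\ell(\widetilde{\bx}_k^{t+1};\bz_j)\|^2+\frac{1}{nm}\sum_{k=0}^{m-1}\e\|\nabla\ell(\bx_k^{t+1};\bz_i)-\nabla\ell(\widetilde{\bx}_k^{t+1};\bz_i')\|^2,
\]
which is exactly \eqref{eq:svrg-stab-rule}. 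Keep the $j=i$ term in this mixed form; do not split it against $\nabla\ell(\widetilde{\bx}_k;\bz_i)$, since the coercivity surplus only covers $j\neq i$. Self-bounding and exchangeability bound it by $8\alpha\e[L_S(\bx_k^{t+1})]$. This accounts for $2$ of the $4$ in $4+1/p$; the other $2$ come from the inner loop. With this identity the absorption requirement becomes $\frac{2}{n}(\frac{\eta}{\alpha}-\eta^2)\ge\frac{2\eta^2}{n}$, which is exactly $\eta\le\frac{1}{2\alpha}$.

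Note also that your displayed inner recursion has already dropped the surplus
\[
-\frac{2}{n}\Big(\frac{\eta}{\alpha}-\eta^2\Big)\sum_{j\neq i}\e\|\nabla\ell(\bx_k;\bz_j)-\nabla\ell(\widetilde{\bx}_k;\bz_j)\|^2.
\]
You must carry it through the telescoping, where its weights $(1+p/n)^{m-1-k}\ge 1$ only help, because it is the term that absorbs the $j\neq i$ part of $2m\eta^2 D_{t+1}$.
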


\begin{proof}[Proof of Theorem~\ref{thm:svrg-stab-cvx}]
Consider a fixed outer-loop index $t\ge 1$. Recall the definition of $G_k$ and $d_k$ in Eq.~\eqref{def:svrg-G-d}, and similarly denote $G_k^{(i)}$ and $d_k^{(i)}$ as the counterparts generated from $S^{(i)}$. Then we have Eq.~\eqref{eq:svrg-core} and similarly $\bx_{k+1}^{(i)} = G_k^{(i)} + \eta d_k^{(i)}$.
Expanding $\|\bx_{k+1}-\bx_{k+1}^{(i)}\|^2$ gives
\begin{align}
\|\bx_{k+1}-\bx_{k+1}^{(i)}\|^2=&\|G_k-G_k^{(i)}\|^2+\eta^2\|d_k-d_k^{(i)}\|^2+2 \eta\langle G_k-G_k^{(i)}, d_k-d_k^{(i)}\rangle\notag \\
= & \|G_k-G_k^{(i)}\|^2+\eta^2\|d_k-d_k^{(i)}\|^2+2 \eta\langle \bx_k-\bx_k^{(i)}, d_k-d_k^{(i)}\rangle \notag\\
&-2 \eta^2\langle\nabla \ell(\bx_k; \bz_{i_k})-\nabla \ell(\bx_k^{(i)}; \bz_{i_k}^{(i)}), d_k-d_k^{(i)}\rangle \notag\\
\leq&  \|G_k-G_k^{(i)}\|^2+2 \eta^2\|d_k-d_k^{(i)}\|^2+\eta^2\|\nabla \ell(\bx_k; \bz_{i_k})-\nabla \ell(\bx_k^{(i)}; \bz_{i_k}^{(i)})\|^2\notag\\
&+2 \eta\langle \bx_k-\bx_k^{(i)}, d_k-d_k^{(i)}\rangle,\label{eq:svrg-stab-expand}
\end{align}
where $\bz_j^{(i)}$ represents the $j$-th data sample in $S^{(i)}$ and the inequality follows by applying the Cauchy-Schwarz's inequality
\[
-2\eta^2\langle\nabla \ell(\bx_k; \bz_{i_k})-\nabla \ell(\bx_k^{(i)}; \bz_{i_k}^{(i)}), d_k-d_k^{(i)}\rangle\leq \eta^2\|d_k-d_k^{(i)}\|^2+\eta^2\|\nabla \ell(\bx_k; \bz_{i_k})-\nabla \ell(\bx_k^{(i)}; \bz_{i_k}^{(i)})\|^2.
\]
Noting that $\e_k[d_k]=\e_k[d_k^{(i)}]=0$ from the randomness of $i_k$, we deduce that
\[
\e_k[\langle \bx_k-\bx_k^{(i)}, d_k-d_k^{(i)}\rangle]=0.
\]
Therefore, taking expectations on both sides of Eq.~\eqref{eq:svrg-stab-expand} leads to
\begin{equation}\label{eq:svrg-stab1}
    \e[\|\bx_{k+1}-\bx_{k+1}^{(i)}\|^2]\leq \e[\|G_k-G_k^{(i)}\|^2]+2 \eta^2\e[\|d_k-d_k^{(i)}\|^2]+\eta^2\e[\|\nabla \ell(\bx_k; \bz_{i_k})-\nabla \ell(\bx_k^{(i)}; \bz_{i_k}^{(i)})\|^2].
\end{equation}
To bound $\e\big[\|d_k - d_k^{(i)}\|^2\big]$, note that
\begin{multline*}
  \|d_k-d_k^{(i)}\|^2 =\big\|\nabla \ell(\bw_t; \bz_{i_k})-\nabla \ell(\bw_t^{(i)}; \bz_{i_k}^{(i)})\big\|^2+\big\|\nabla L_S(\bw_t)-\nabla L_{S^{(i)}}(\bw_t^{(i)})\big\|^2 \\
  -2\big\langle\nabla \ell(\bw_t; \bz_{i_k})-\nabla \ell(\bw_t^{(i)}; \bz_{i_k}^{(i)}), \nabla L_S(\bw_t)-\nabla L_{S^{(i)}}(\bw_t^{(i)})\big\rangle.
\end{multline*}
Taking the conditional expectation $\e_k[\cdot]$ on both sides and noting that $\e_k[\nabla \ell(\bw_t; \bz_{i_k})]=\nabla L_S(\bw_t)$ and $\e_k[\nabla \ell(\bw_t^{(i)}; \bz_{i_k}^{(i)})]=\nabla L_{S^{(i)}}(\bw_t^{(i)})$, we have
\begin{align}
\e_k\big[\|d_k - d_k^{(i)}\|^2\big]=&\e_k\big[\big\|\nabla \ell(\bw_t; \bz_{i_k})-\nabla \ell(\bw_t^{(i)}; \bz_{i_k}^{(i)})\big\|^2\big] -\big\|\nabla L_S(\bw_t)-\nabla L_{S^{(i)}}(\bw_t^{(i)})\big\|^2\notag\\
\leq & \frac{1}{n} \sum_{j=1}^n \big\|\nabla \ell(\bw_t; \bz_j) - \nabla \ell(\bw_t^{(i)}; \bz_j^{(i)})\big\|^2 \label{eq:svrg-d-square}.
\end{align}
Next, we bound $\e[\|G_k-G_k^{(i)}\|^2]$. To this end, we obtain from the definitions of $G_k$ and $G_k^{(i)}$ that
\begin{align*}
    \|G_k-G_k^{(i)}\|^2 = & \|\bx_k-\bx_k^{(i)}\|^2+\eta^2\|\nabla \ell(\bx_k; \bz_{i_k})-\nabla \ell(\bx_k^{(i)};\bz_{i_k}^{(i)})\|^2\\
    &-2 \eta\langle \bx_k-\bx_k^{(i)}, \nabla \ell(\bx_k; \bz_{i_k})-\nabla \ell(\bx_k^{(i)};\bz_{i_k}^{(i)})\rangle.
\end{align*}
First, we consider the case $i_k\neq i$ (with probability $1-1/n$), which implies $\bz_{i_k}^{(i)}=\bz_{i_k}$. Then by coercivity of $\ell$ (Lemma \ref{lem:coercivity}), we have
\[
\langle \bx_k-\bx_k^{(i)}, \nabla \ell(\bx_k; \bz_{i_k})-\nabla \ell(\bx_k^{(i)};\bz_{i_k}^{(i)})\rangle \ge \frac{1}{\alpha}\|\nabla \ell(\bx_k; \bz_{i_k})-\nabla \ell(\bx_k^{(i)}; \bz_{i_k})\|^2.
\]
Otherwise if $i_k=i$ (with probability $1/n$), then $\bz_{i_k}=\bz_i$ and $\bz_{i_k}^{(i)}=\bz_i^{\prime}$. By Young's inequality, for any $p>0$ we have
\[
-2 \eta\langle \bx_k-\bx_k^{(i)}, \nabla \ell(\bx_k; \bz_{i_k})-\nabla \ell(\bx_k^{(i)}; \bz_{i_k}^{(i)})\rangle \leq p\|\bx_k-\bx_k^{(i)}\|^2+\frac{\eta^2}{p}\|\nabla \ell(\bx_k; \bz_i)-\nabla \ell(\bx_k^{(i)}; \bz_i^{\prime})\|^2.
\]
Combining the two cases leads to
\begin{align}
\e\big[\|G_k - G_k^{(i)}\|^2\big]\leq & \Big(1+\frac{p}{n}\Big) \e\big[\|\bx_k - \bx_k^{(i)}\|^2\big] - \frac{1}{n}\Big(\frac{2\eta}{\alpha} - \eta^2\Big) \sum_{j \neq i} \e\big[\big\|\nabla \ell(\bx_k; \bz_j) - \nabla \ell(\bx_k^{(i)}; \bz_j)\big\|^2\big] \notag\\
& + \frac{1+1/p}{n} \eta^2 \e\big[\big\|\nabla \ell(\bx_k; \bz_i) - \nabla \ell(\bx_k^{(i)}; \bz_i^{\prime})\big\|^2\big].\label{eq:svrg-G-square}
\end{align}
Applying Eq.~\eqref{eq:svrg-d-square} and Eq.~\eqref{eq:svrg-G-square} on Eq.~\eqref{eq:svrg-stab1} gives
\begin{multline}\label{eq:svrg-stab2}
\e\big[\|\bx_{k+1} - \bx_{k+1}^{(i)}\|^2\big] \leq \Big(1+\frac{p}{n}\Big) \e\big[\|\bx_k - \bx_k^{(i)}\|^2\big] - \frac{2}{n}\Big(\frac{\eta}{\alpha} - \eta^2\Big) \sum_{j \neq i} \e\big[\big\|\nabla \ell(\bx_k; \bz_j) - \nabla \ell(\bx_k^{(i)}; \bz_j)\big\|^2\big] \\
+ \frac{2+1/p}{n} \eta^2 \e\big[\big\|\nabla \ell(\bx_k; \bz_i) - \nabla \ell(\bx_k^{(i)}; \bz_i^{\prime})\big\|^2\big]+ \frac{2\eta^2}{n} \sum_{j=1}^n \e\big[\big\|\nabla \ell(\bw_t; \bz_j) - \nabla \ell(\bw_t^{(i)}; \bz_j^{(i)})\big\|^2\big].
\end{multline}

Next, we analyze for the outer loop.
Telescoping Eq.~\eqref{eq:svrg-stab2} for $k=0, \ldots, m-1$ and noting that $\bx_0^{t+1} = \bx_m^t$ from option \uppercase\expandafter{\romannumeral 1}, we get
\begin{align}
&\e\big[\|\bx_m^{t+1} - \widetilde{\bx}_m^{t+1}\|^2\big] \notag\\
\leq & \Big(1+\frac{p}{n}\Big)^m \e\big[\|\bx_m^t - \widetilde{\bx}_m^t\|^2\big] - \frac{2}{n}\Big(\frac{\eta}{\alpha} - \eta^2\Big) \sum_{k=0}^{m-1}\Big(1+\frac{p}{n}\Big)^{m-1-k} \sum_{j \neq i} \e\big[\big\|\nabla \ell(\bx_k^{t+1}; \bz_j) - \nabla \ell(\widetilde{\bx}_k^{t+1}; \bz_j)\big\|^2\big] \notag\\
& + \frac{2+1/p}{n} \eta^2 \sum_{k=0}^{m-1}\Big(1+\frac{p}{n}\Big)^{m-1-k} \e\big[\big\|\nabla \ell(\bx_k^{t+1}; \bz_i) - \nabla \ell(\widetilde{\bx}_k^{t+1}; \bz_i^{\prime})\big\|^2\big] \notag\\
& + \frac{2\eta^2}{n} \sum_{k=0}^{m-1}\Big(1+\frac{p}{n}\Big)^{m-1-k} \cdot \sum_{j=1}^n \e\big[\big\|\nabla \ell(\bw_t; \bz_j) - \nabla \ell(\bw_t^{(i)}; \bz_j^{(i)})\big\|^2\big].\label{eq:svrg-stab2-1}
\end{align}
Since $\bw_{t+1}$ (resp., $\bw_{t+1}^{(i)}$) is randomly selected from $\{\bx_k^{t+1}\}_{k=0}^{m-1}$ (resp., $\{\widetilde{\bx}_k^{t+1}\}_{k=0}^{m-1}$), we have
\begin{align}
& \frac{1}{n}\sum_{j=1}^{n}\e\big[\big\|\nabla \ell(\bw_{t+1}; \bz_j) - \nabla \ell(\bw_{t+1}^{(i)}; \bz_j^{(i)})\big\|^2\big] = \frac{1}{nm}\sum_{j=1}^{n} \sum_{k=0}^{m-1} \e\big[\big\|\nabla \ell(\bx_k^{t+1}; \bz_j) - \nabla \ell(\widetilde{\bx}_k^{t+1}; \bz_j^{(i)})\big\|^2\big] \notag\\
= & \frac{1}{nm} \sum_{k=0}^{m-1} \sum_{j \neq i} \e\big[\big\|\nabla \ell(\bx_k^{t+1}; \bz_j) - \nabla \ell(\widetilde{\bx}_k^{t+1}; \bz_j)\big\|^2\big]+ \frac{1}{nm} \sum_{k=0}^{m-1} \e\big[\big\|\nabla \ell(\bx_k^{t+1}; \bz_i) - \nabla \ell(\widetilde{\bx}_k^{t+1}; \bz_i^{\prime})\big\|^2\big]\label{eq:svrg-stab-rule}.
\end{align}
Recall Lyapunov functions $\{U_t\}_{t\geq 1}$ defined in Eq.~\eqref{eq:svrg-lyapunov}.
Then applying Eq.~\eqref{eq:svrg-stab2-1} + Eq.~\eqref{eq:svrg-stab-rule} $\times 2 m \eta^2$ yields
\begin{align}
\e\big[U_{t+1}\big] \leq & \Big(1+\frac{p}{n}\Big)^m \e\big[\|\bx_m^t - \widetilde{\bx}_m^t\|^2\big] \notag\\
& - \sum_{k=0}^{m-1}\Big[\frac{2}{n}\Big(\frac{\eta}{\alpha} - \eta^2\Big)\Big(1+\frac{p}{n}\Big)^{m-1-k} - \frac{2\eta^2}{n}\Big] \sum_{j \neq i} \e\big[\big\|\nabla \ell(\bx_k^{t+1}; \bz_j) - \nabla \ell(\widetilde{\bx}_k^{t+1}; \bz_j)\big\|^2\big] \notag\\
& + \sum_{k=0}^{m-1}\Big[\frac{2+1/p}{n} \eta^2 \Big(1+\frac{p}{n}\Big)^{m-1-k} + \frac{2\eta^2}{n}\Big] \e\big[\big\|\nabla \ell(\bx_k^{t+1}; \bz_i) - \nabla \ell(\widetilde{\bx}_k^{t+1}; \bz_i^{\prime})\big\|^2\big]\notag\\
& + \frac{2\eta^2}{n} \sum_{k=0}^{m-1}\Big(1+\frac{p}{n}\Big)^{m-1-k} \cdot \sum_{j=1}^n \e\big[\big\|\nabla \ell(\bw_t; \bz_j) - \nabla \ell(\bw_t^{(i)}; \bz_j^{(i)})\big\|^2\big]\notag\\
\leq &\Big(1+\frac{p}{n}\Big)^m \e\Big[\|\bx_m^t - \widetilde{\bx}_m^t\|^2 + \frac{2m\eta^2}{n} \sum_{j=1}^n \big\|\nabla \ell(\bw_t; \bz_j) - \nabla \ell(\bw_t^{(i)}; \bz_j^{(i)})\big\|^2\Big] \notag\\
& - \sum_{k=0}^{m-1}\Big[\frac{2}{n}\Big(\frac{\eta}{\alpha} - \eta^2\Big)\Big(1+\frac{p}{n}\Big)^{m-1-k} - \frac{2\eta^2}{n}\Big]\sum_{j \neq i} \e\big[\big\|\nabla \ell(\bx_k^{t+1}; \bz_j) - \nabla \ell(\widetilde{\bx}_k^{t+1}; \bz_j)\big\|^2\big] \notag\\
& + \sum_{k=0}^{m-1}\Big[\frac{2+1/p}{n} \eta^2 \Big(1+\frac{p}{n}\Big)^{m-1-k} + \frac{2\eta^2}{n}\Big] \e\big[\big\|\nabla \ell(\bx_k^{t+1}; \bz_i) - \nabla \ell(\widetilde{\bx}_k^{t+1}; \bz_i^{\prime})\big\|^2\big],\label{eq:svrg-stab3}
\end{align}
where the last inequality follows from 
\[
\sum_{k=0}^{m-1}\big(1+\frac{p}{n}\big)^{m-1-k} = \sum_{k=0}^{m-1}\big(1+\frac{p}{n}\big)^k \leq m\big(1+\frac{p}{n}\big)^m.
\]
Since $\eta \leq \frac{1}{2\alpha}$, we have
\begin{equation}\label{eq:svrg-stab4}
\frac{2}{n}\Big(\frac{\eta}{\alpha} - \eta^2\Big)\Big(1+\frac{p}{n}\Big)^{m-1-k} \ge \frac{2}{n}\Big(\frac{\eta}{\alpha} - \eta^2\Big) \ge \frac{2\eta^2}{n},~\forall\,0\leq k \leq m-1.
\end{equation}
Additionally, we have for all $0\leq k \leq m-1$ that
\begin{equation}\label{eq:svrg-stab5}
\frac{2+1/p}{n} \eta^2 \Big(1+\frac{p}{n}\Big)^{m-1-k} + \frac{2\eta^2}{n} \leq \frac{2+1/p}{n} \eta^2 \Big(1+\frac{p}{n}\Big)^{m} + \frac{2\eta^2}{n} \leq \frac{4+1/p}{n}\Big(1+\frac{p}{n}\Big)^{m} \eta^2.
\end{equation}
Moreover, by Schwarz's inequality and the self-bounding property of $\ell$ (Lemma \ref{lem:self-bound}), we obtain
\begin{align}
    \e[\|\nabla \ell(\bx_k^{t+1}; \bz_{i})-\nabla \ell(\widetilde{\bx}_k^{t+1}; \bz_i^{\prime})\|^2] &\leq \e\big[2\|\nabla \ell(\bx_k^{t+1}; \bz_i)\|^2+2 \| \nabla\ell(\widetilde{\bx}_k^{t+1}; \bz_i^{\prime}) \|^2\big]\notag\\
    &\leq 4 \alpha\e[\ell(\bx_k^{t+1}; \bz_i)+\ell(\widetilde{\bx}_k^{t+1}; \bz_i^{\prime})]\notag\\
    &=8\alpha\ebb[\ell(\bx_k^{t+1};\bz_{i})] = 8\alpha\ebb[L_S(\bx_k^{t+1})]\label{eq:svrg-8alpha},
\end{align}
where the last equality holds due to the symmetry of the algorithm
\[
\ebb[\ell(\bx_k^{t+1};\bz_{i})] = \frac{1}{n}\sum_{i=1}^n \ebb[\ell(\bx_k^{t+1};\bz_{i})] = \ebb[L_S(\bx_k^{t+1})].
\]
Applying Eq.~\eqref{eq:svrg-stab4}, Eq.~\eqref{eq:svrg-stab5}, and Eq.~\eqref{eq:svrg-8alpha} to Eq.~\eqref{eq:svrg-stab3}, we obtain
\begin{align}
\e\big[U_{t+1}\big] & \leq \Big(1+\frac{p}{n}\Big)^m \e\big[U_t\big] + \frac{8\alpha(4+1/p)}{n}\Big(1+\frac{p}{n}\Big)^{m} \eta^2 \sum_{k=0}^{m-1} \e\big[L_S(\bx_k^{t+1})\big] \notag\\
& \leq \Big(1+\frac{p}{n}\Big)^{mt} \e[U_1] + \frac{8\alpha(4+1/p)}{n}\Big(1+\frac{p}{n}\Big)^{m} \eta^2 \sum_{l=1}^t \Big(1+\frac{p}{n}\Big)^{m(t-l)} \sum_{k=0}^{m-1} \e\big[L_S(\bx_k^{l+1})\big],\label{eq:svrg-stab6}
\end{align}
where the last inequality is obtained by applying the first inequality recursively.

Setting $p = \frac{n}{mt}$, then it follows from $l\ge 1$ that
\begin{equation}\label{eq:svrg-stab7}
\Big(1+\frac{p}{n}\Big)^{m}\Big(1+\frac{p}{n}\Big)^{m(t-l)} \leq \Big(1+\frac{p}{n}\Big)^{mt} = \Big(1+\frac{1}{mt}\Big)^{mt} \leq e.
\end{equation}
Since $\bx_m^1 = \widetilde{\bx}_m^1 = \bw_1=\bw_1^{(i)}$, we have
\begin{align}\label{eq:svrg-U1}
\e[U_1] & = \frac{2m\eta^2}{n} \sum_{j=1}^n \e\big[\big\|\nabla \ell(\bw_1; \bz_j) - \nabla \ell(\bw_1; \bz_j^{(i)})\big\|^2\big] \notag\\
& = \frac{2m\eta^2}{n} \e\big[\big\|\nabla \ell(\bw_1; \bz_i) - \nabla \ell(\bw_1; \bz_i^{\prime})\big\|^2\big] \leq \frac{16\alpha m\eta^2}{n} \e\big[L_S(\bw_1)\big].
\end{align}
Plugging Eq.~\eqref{eq:svrg-stab7} and Eq.~\eqref{eq:svrg-U1} into Eq.~\eqref{eq:svrg-stab6} and noting that $\e[L_S(\bw_1)] = L(\bw_1)$ yields
\[
\e\big[\|\bx_m^{t+1} - \widetilde{\bx}_m^{t+1}\|^2\big] \leq \e\big[U_{t+1}\big] \leq \frac{16e\alpha m\eta^2}{n} L(\bw_1) + \frac{8e\alpha(4+mt/n)}{n} \eta^2 \sum_{l=1}^t \sum_{k=0}^{m-1} \e\big[L_S(\bx_k^{l+1})\big],
\]
which completes the proof.
\end{proof}

\begin{remark}[Novelty\label{rem:novelty}]
    Note that our stability analysis for SVRG extends beyond existing analyses of SGD or its variants~\citep{hardt2016train,lei2020fine}.
    To estimate the stability bound $\|\bx_{k+1}-\bx_{k+1}^{(i)}\|^2$ within the inner loop, we utilize the reformulation Eq.~\eqref{eq:svrg-core} to derive
    \[
    \|\bx_{k+1}-\bx_{k+1}^{(i)}\|^2=\|G_k-G_k^{(i)}\|^2+\eta^2\|d_k-d_k^{(i)}\|^2+2 \eta\langle G_k-G_k^{(i)}, d_k-d_k^{(i)}\rangle.
    \]
    In the above decomposition, the first term on the RHS corresponds to the stability bound of the standard SGD, which can be controlled using similar arguments as established in~\citep{hardt2016train,lei2020fine}. However, the other two terms are introduced due to the correction term $d_k$ in the SVRG gradient construction, which require careful and refined estimation to prevent deteriorating the overall stability bound.
    For this purpose, we first decompose $2\eta\langle G_k-G_k^{(i)}, d_k-d_k^{(i)}\rangle$ into two components via
    \[
    2\eta\langle G_k-G_k^{(i)}, d_k-d_k^{(i)}\rangle=2\eta\langle \bx_k-\bx_k^{(i)}, d_k-d_k^{(i)}\rangle -2 \eta^2\langle\nabla \ell(\bx_k; \bz_{i_k})-\nabla \ell(\bx_k^{(i)}; \bz_{i_k}^{(i)}), d_k-d_k^{(i)}\rangle,
    \]
    where the first term vanishes after taking expectation since $\e_k[d_k]=0$, and the other term can be further controlled by Cauchy-Schwartz inequality:
    \[
    -2 \eta^2\langle\nabla \ell(\bx_k; \bz_{i_k})-\nabla \ell(\bx_k^{(i)}; \bz_{i_k}^{(i)}), d_k-d_k^{(i)}\rangle\leq\eta^2\|d_k-d_k^{(i)}\|^2+\eta^2\|\nabla \ell(\bx_k; \bz_{i_k})-\nabla \ell(\bx_k^{(i)}; \bz_{i_k}^{(i)})\|^2.
    \]
    Using uniform randomness of index $i_k$, we further have
    \[
    \e_k\big[\|d_k - d_k^{(i)}\|^2\big]\leq \frac{1}{n} \sum_{j=1}^n \big\|\nabla \ell(\bw_t; \bz_j) - \nabla \ell(\bw_t^{(i)}; \bz_j^{(i)})\big\|^2,
    \]
    which is fixed for all $k=0, \ldots, m-1$ within the inner loop. After telescoping the stability bound for the inner loop, we introduce the Lyapunov function $U_t$ in Eq.~\eqref{eq:svrg-lyapunov} to balance the redundant terms, and establish the following recursive relation for the outer loop:
    \begin{align*}
        &\e\big[\|\bx_m^{t+1}-\widetilde{\bx}_m^{t+1}\|^2\big]\leq\e[U_{t+1}] \\
        \leq & \Big(1+\frac{1}{mt}\Big)^m \e[U_t]+\frac{8\alpha(4+mt/n)}{n}\Big(1+\frac{1}{mt}\Big)^{m} \eta^2 \sum_{k=0}^{m-1}\e\big[L_S(\bx_k^{t+1})\big]\\
        \leq &\Big(1+\frac{1}{mt}\Big)^{mt} \e[U_1] + \frac{8\alpha(4+mt/n)}{n}\Big(1+\frac{1}{mt}\Big)^{m} \eta^2 \sum_{l=1}^t\Big(1+\frac{1}{mt}\Big)^{m(t-l)} \sum_{k=0}^{m-1} \e\big[L_S(\bx_k^{l+1})\big]\\
        \leq & e \ebb[U_1] + \frac{8e\alpha(4+mt/n)}{n} \eta^2 \sum_{l=1}^t \sum_{k=0}^{m-1} \e\big[L_S(\bx_k^{l+1})\big].
    \end{align*}
    Different from the stability bound of SGD~\citep{lei2020fine}, we have an additional term $e \ebb[U_1]$, which corresponds to $16e\alpha m\eta^2 L(\bw_1) / n$ in our stability bound Eq.~\eqref{eq:svrg-stab-thm}. Fortunately, as we show in Corollary~\ref{co:svrg-epr-convex}, this additional term does not deteriorate the excess population risk bound for properly chosen parameters.
\end{remark}

\subsection{Discussion with Existing Work}
In this subsection, we present a comparison of our analyses to the existing stability analyses of SVRG \citep{charles2018stability, liu2024general, meng2017generalization} and SCAFFOLD \citep{sun2024understanding}, which extends SVRG to the context of federated learning.

The work \citep{charles2018stability, meng2017generalization} studied SVRG by interpreting it as a black-box optimization algorithm, and related it to convergence rates under some additional assumptions, such as the strong convexity and the Polyak-Lojasiewicz (PL) condition. Specifically, under a $\mu$-PL condition and $G$-Lipschitzness assumption, the work \citep{charles2018stability} showed that SVRG is $O\big(G(\epsilon_{c}/\mu)^{1/2}+G^2/(\mu n)\big)$-uniformly stable, where $\epsilon_c>0$ denotes the optimization error of SVRG. Consequently, their approach fails to exploit the specific algorithmic structure of SVRG, leading to loose guarantees. In contrast, our analysis fully leverages the iterative schemes of SVRG to get tighter bounds.
For convex, $\alpha$-smooth and $G$-Lipschitz problems, the work~\citep{liu2024general} showed that zeroth-order SVRG with $\eta\leq C/(tm)$ is $O(G^2t^{2C\alpha}/n)$-uniformly stable, where $t>0$ is the outer-loop index and $m>0$ denotes the inner-loop iteration number. The recent work~\citep{sun2024understanding} considered a VR method called SCAFFOLD for federated learning.
Under an additional assumption that the variance of $\nabla \ell(\bw;\bz)$ is bounded by $\sigma^2>0$, SCAFFOLD \citep{sun2024understanding} is shown to be $O\big(G(G + \sigma)t^C \log t/n\big)$ on-average stable with a step size $\eta\leq C/(2\alpha t)$. However, these step sizes are too small, for which the methods converge only at extremely slow rates. Therefore, the stability analyses~\citep{liu2024general, sun2024understanding} cannot balance between optimization and generalization.
Moreover, these results rely on a strong Lipschitzness assumption on the loss functions~\citep{charles2018stability, liu2024general, meng2017generalization, sun2024understanding}, which is unnecessary for standard convergence analyses of SVRG. Notably, we remove this Lipschitzness assumption by leveraging the self-bounding property (Lemma \ref{lem:self-bound}). Besides, utilizing the on-average model stability, the established bound incorporates a weighted sum of empirical risks rather than the Lipschitz constant, clarifying how optimization enhances stability and generalization by producing models with small empirical risks.

\section{Excess Population Risk Analysis of SVRG}\label{sec:svrg-epr}
In this section, we study the excess population risk (EPR) bounds of SVRG. To this aim, we first analyze the optimization error of SVRG, and then combine it with the generalization gap obtained from our stability analysis to derive EPR bounds. We show that with properly chosen step size and total iterations, SVRG can achieve EPR bounds of order $O(1/\sqrt{n})$, which is minimax optimal for convex problems~\citep{agarwal2009information}.
\subsection{Convergence Rates}
We start by analyzing the decay of the optimization error for SVRG. The result established in the following theorem relaxes the step size requirement from $\eta<1/(4\alpha)$ in the existing result of~\citep[Theorem 6]{reddi2016stochastic} to $\eta<1/(2\alpha)$.
\begin{theorem}[Optimization error]
    \label{thm:svrg-opt-cvx}
    Assume for any $\bz\in\zcal$, the map $\bw\mapsto\ell(\bw;\bz)$ is nonnegative, convex, and $\alpha$-smooth.
    Let $\{\bx_k^{t}\}_{t, k}$ be the sequence produced by Algorithm~\ref{algo: svrg} with option \uppercase\expandafter{\romannumeral 1} applied to $S$. Denote $\bw_S:=\argmin_{\bw\in\wcal} L_S(\bw)$ and $\bar{\bw}_t := \frac{1}{mt} \sum_{l=1}^t \sum_{k=0}^{m-1} \bx_k^{l+1}$. If $\eta<\frac{1}{2\alpha}$, then for any $t\ge 1$ we have
    \[
    \e_A\big[L_S(\bar{\bw}_t) - L_S(\bw_S)\big] \leq \frac{1}{2Mm\eta t}\big[\|\bw_1 - \bw_S\|^2 + 4\alpha m\eta^2 (L_S(\bw_1) - L_S(\bw_S))\big],
    \]
    where $M>0$ is given by
    \begin{equation}\label{def:M}
        M:=M(\eta)=\begin{cases}
        1,\quad\textit{if}~0<\eta\leq \frac{1}{4\alpha},\\
        2(1-2\alpha\eta),\quad\textit{if}~\frac{1}{4\alpha}<\eta<\frac{1}{2\alpha}.
    \end{cases}
    \end{equation}
\end{theorem}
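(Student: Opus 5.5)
Fix an outer index $t\ge1$ and write $\bx_k=\bx_k^{t+1}$. The plan is to derive a one-step inequality for $\|\bx_k-\bw_S\|^2$, sum it over the inner loop, and close the outer recursion with a Lyapunov function of the same shape as the bracket in the statement:
\[
V_t:=\|\bx_m^t-\bw_S\|^2+c\,m\eta^2\big(L_S(\bw_t)-L_S(\bw_S)\big),
\]
with $c$ chosen near $4\alpha$. Telescoping $V_t$ over $t$ and using convexity (Jensen) for $\bar\bw_t$ should then give the bound. Since $\bx_0^2=\bx_m^1=\bw_1$ under option~I, $V_1$ is exactly the bracket in the statement.

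\textbf{One-step inequality.} Expand
\[
\|\bx_{k+1}-\bw_S\|^2=\|\bx_k-\bw_S\|^2-2\eta\langle g_k,\bx_k-\bw_S\rangle+\eta^2\|g_k\|^2 .
\]
Since $\e_k[g_k]=\nabla L_S(\bx_k)$, convexity gives $\e_k\langle g_k,\bx_k-\bw_S\rangle\ge L_S(\bx_k)-L_S(\bw_S)$. For the second moment, write $g_k=a_k-b_k$ with $a_k=\nabla\ell(\bx_k;\bz_{i_k})-\nabla\ell(\bw_S;\bz_{i_k})$ and $b_k=\nabla\ell(\bw_t;\bz_{i_k})-\nabla\ell(\bw_S;\bz_{i_k})-\nabla L_S(\bw_t)$; the latter is centered because $\nabla L_S(\bw_S)=0$. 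Two standard facts from smoothness and convexity are needed:
\[
\e_k\|a_k\|^2\le 2\alpha\big(L_S(\bx_k)-L_S(\bw_S)\big),\qquad \e_k\|b_k\|^2\le 2\alpha\big(L_S(\bw_t)-L_S(\bw_S)\big).
\]
The second uses variance $\le$ second moment. Both come from $\|\nabla\ell(\bw;\bz)-\nabla\ell(\bw_S;\bz)\|^2\le 2\alpha[\ell(\bw;\bz)-\ell(\bw_S;\bz)-\langle\nabla\ell(\bw_S;\bz),\bw-\bw_S\rangle]$, a co-coercivity-type inequality in the spirit of Lemma~\ref{lem:coercivity}, averaged over $\bz_j$.

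\textbf{The step-size threshold.} The naive bound $\|a-b\|^2\le2\|a\|^2+2\|b\|^2$ yields the factor $(1-2\alpha\eta)$ in front of $L_S(\bx_k)-L_S(\bw_S)$, which only gives the known condition $\eta<1/(4\alpha)$ with constant $M=1$ on $(0,1/(4\alpha)]$. To reach $\eta<1/(2\alpha)$, use the weighted inequality $\|a-b\|^2\le(1+\beta)\|a\|^2+(1+1/\beta)\|b\|^2$. The coefficient of the function-value gap at $\bx_k$ becomes $2\eta-2\alpha\eta^2(1+\beta)$.
\begin{itemize}
\item For $\eta\le1/(4\alpha)$, take $\beta=1$. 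The coefficient is at least $\eta=M\eta$, and the $\bw_t$-term carries $4\alpha\eta^2$.
\item For $1/(4\alpha)<\eta<1/(2\alpha)$, pick $\beta$ so that $2\eta-2\alpha\eta^2(1+\beta)=2(1-2\alpha\eta)\eta=M\eta$, i.e.\ $\beta=1/(\alpha\eta)-1\in(1,3)$. The $\bw_t$-term then carries $2\alpha\eta^2(1+1/\beta)$.
\end{itemize}
In either case this gives
\[
\e_k\|\bx_{k+1}-\bw_S\|^2\le\|\bx_k-\bw_S\|^2-M\eta\big(L_S(\bx_k)-L_S(\bw_S)\big)+C\eta^2\big(L_S(\bw_t)-L_S(\bw_S)\big).
\]

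\textbf{Closing the recursion and the main obstacle.} Summing over $k=0,\dots,m-1$ gives
\[
\|\bx_m^{t+1}-\bw_S\|^2+M\eta\sum_k\big(L_S(\bx_k)-L_S(\bw_S)\big)\le\|\bx_m^t-\bw_S\|^2+Cm\eta^2\big(L_S(\bw_t)-L_S(\bw_S)\big).
\]
Because $\bw_{t+1}$ is uniform over $\{\bx_k^{t+1}\}_{k=0}^{m-1}$, we have $\e[L_S(\bw_{t+1})]=\frac1m\sum_k\e[L_S(\bx_k^{t+1})]$. So part of the left-hand gap sum, namely $Cm\eta^2$ times the $\bw_{t+1}$ gap, reconstructs $V_{t+1}$. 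The remainder, $(M\eta-C\eta^2)\sum_k(\cdot)$, is the useful decrease, and it should be at least of order $\tfrac12M\eta$ per step. Telescoping over $l=1,\dots,t$ and applying Jensen then yields $\frac{V_1}{(\text{const})Mm\eta t}$.

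The delicate step is matching constants: making $M\eta$ minus the absorbed $C\eta^2$ leave exactly $2M\eta$-type mass (hence the $2M$ in the denominator) with $C=4\alpha$ appearing in $V_1$. I would tune $\beta$ (equivalently $C$) jointly with this absorption rather than fixing it beforehand, and if necessary use the sharper identity $\e_k\|b_k\|^2=\text{second moment}-\|\nabla L_S(\bw_t)\|^2$. I expect this bookkeeping across the two step-size regimes to be the main obstacle; the rest is routine.
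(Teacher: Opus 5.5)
\textbf{There is a genuine gap, and it is not bookkeeping.} First a slip: $\beta=1/(\alpha\eta)-1$ gives $1+\beta=1/(\alpha\eta)$, hence $2\eta-2\alpha\eta^2(1+\beta)=0$. The equation you wrote is actually solved by $\beta=1$, so your two regimes are the same inequality.

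More fundamentally, no choice of $\beta$ or of the Lyapunov weight $c$ can work in your framework. You use only $\e_k\langle g_k,\bx_k-\bw_S\rangle\ge L_S(\bx_k)-L_S(\bw_S)$ and convert both second moments to function-value gaps. Then the $\bw_t$-term carries $2\alpha\eta^2(1+1/\beta)$. Absorbing it into $V_{t+1}$ requires $c\ge 2\alpha(1+1/\beta)$, and this leaves a net decrease per inner step of at most
\[
2\eta-2\alpha\eta^2\big(2+\beta+1/\beta\big)\le 2\eta(1-4\alpha\eta),
\]
because $\beta+1/\beta\ge 2$.
\begin{itemize}
\item This vanishes at $\eta=1/(4\alpha)$ and is negative beyond it.
\item Even for $\eta<1/(4\alpha)$ it falls short of the $2\eta$ that $M=1$ requires. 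You would only obtain the weaker bound with $2M$ replaced by $2(1-4\alpha\eta)$.
\item The sharper variance identity does not help. It subtracts $\|\nabla L_S(\bw_t)\|^2$, which admits no lower bound of the form $\kappa\,(L_S(\bw_t)-L_S(\bw_S))$ without a PL-type assumption.
\end{itemize}
This is exactly the convexity-only argument that confines Reddi et al.\ to $\eta<1/(4\alpha)$.

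\textbf{The missing idea is to lower-bound the inner product using convexity and smoothness jointly.} Write $D(\bw):=\frac1n\sum_{j=1}^n\|\nabla\ell(\bw;\bz_j)-\nabla\ell(\bw_S;\bz_j)\|^2$. Apply
\[
\ell(\bw_S;\bz_j)\ge\ell(\bx_k;\bz_j)+\langle\bw_S-\bx_k,\nabla\ell(\bx_k;\bz_j)\rangle+\frac{1}{2\alpha}\|\nabla\ell(\bx_k;\bz_j)-\nabla\ell(\bw_S;\bz_j)\|^2
\]
and average over $j$ to get
\[
\e_k\langle g_k,\bx_k-\bw_S\rangle\ge L_S(\bx_k)-L_S(\bw_S)+\frac{1}{2\alpha}D(\bx_k).
\]
To exploit the resulting $-\frac{\eta}{\alpha}D(\bx_k)$, keep the variance bound and the Lyapunov function in the same currency $D$ rather than converting to gaps up front:
\begin{itemize}
\item use $\e_k\|g_k\|^2\le 2D(\bx_k)+2D(\bw_t)$ (your $a_k,b_k$ split with $\beta=1$);
\item use $P_t:=\|\bx_m^t-\bw_S\|^2+2m\eta^2D(\bw_t)$, absorbed via $\e[D(\bw_{t+1})]=\frac1m\sum_k\e[D(\bx_k^{t+1})]$.
\end{itemize}
The gap then keeps its full coefficient $-2\eta$, while $D(\bx_k)$ has net coefficient $4\eta^2-\eta/\alpha$.
\begin{itemize}
\item For $\eta\le 1/(4\alpha)$ this coefficient is nonpositive and can be dropped, giving decrease $2\eta$.
\item For $1/(4\alpha)<\eta<1/(2\alpha)$, apply $D(\bx)\le 2\alpha(L_S(\bx)-L_S(\bw_S))$ only to this positive term, giving decrease $4\eta(1-2\alpha\eta)$.
\end{itemize}
Hence $2M\eta\sum_k\e[L_S(\bx_k^{t+1})-L_S(\bw_S)]\le\e[P_t-P_{t+1}]$. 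The same inequality at $\bw_1$ bounds $P_1$ by the bracket in the statement, and telescoping plus Jensen finish as you planned. Converting $\|a_k\|^2$ to a gap at the outset discards precisely the term that offsets the two $2\eta^2D(\bx_k)$ contributions.
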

\begin{proof}[Proof of Theorem~\ref{thm:svrg-opt-cvx}]
Consider a fixed outer-loop index $t\ge 1$ and denote $\bx_{k} := \bx_{k}^{t+1}$ and $g_k:=g_k^{t+1}$, then the core step of SVRG can be written as $\bx_{k+1} = \bx_k - \eta g_k$. Expanding $\|\bx_{k+1} - \bw_S\|^2$ gives
\begin{equation}\label{eq:svrg-opt1}
\|\bx_{k+1} - \bw_S\|^2 = \|\bx_k - \bw_S\|^2 + \eta^2\|g_k\|^2 - 2\eta\big\langle \bx_k - \bw_S, g_k\big\rangle.
\end{equation}
Since $\e_k[g_k]=\nabla L_S(\bx_k)$, we obtain that
\begin{align}
\e_k[\langle \bx_k-\bw_S, g_k\rangle]=&\langle \bx_k-\bw_S, \nabla L_S(\bx_k)\rangle = \frac{1}{n}\sum_{j=1}^n \langle \bx_k-\bw_S, \nabla \ell(\bx_k;\bz_j)\rangle \notag\\
\ge & \frac{1}{n}\sum_{j=1}^n \Big(\ell(\bx_k;\bz_j)-\ell(\bw_S;\bz_j)+\frac{1}{2\alpha}\|\nabla \ell(\bx_k;\bz_j)-\nabla \ell(\bw_S;\bz_j)\|^2\Big)\notag\\
=&L_S(\bx_k)-L_S(\bw_S)+\frac{1}{2\alpha n}\sum_{j=1}^n \|\nabla \ell(\bx_k;\bz_j)-\nabla \ell(\bw_S;\bz_j)\|^2,\label{eq:svrg-opt-1-1}
\end{align}
where the inequality follows from coercivity of each $\ell(\cdot;\bz_j)$ due to the convexity and $\alpha$-smoothness~\citep{johnson2013accelerating}:
\begin{equation}\label{coercivity-ll}
  \ell(\bw;\bz_j) \geq \ell(\bw';\bz_j)+\langle\bw-\bw',\nabla\ell(\bw';\bz_j)\rangle+\frac{1}{2\alpha}\|\nabla \ell(\bw;\bz_j)-\nabla \ell(\bw';\bz_j)\|^2,\quad\forall \bw,\bw'\in\wcal.
\end{equation}
To bound $\|g_k\|^2$, from $\nabla L_S(\bw_S)=0$ and $\e_k[g_k]=\nabla L_S(\bx_k)$, we know that
\begin{align}
    \e_k[\|g_k\|^2] =&\e_k\big[\big\|\nabla \ell(\bx_k; \bz_{i_k})-\nabla \ell(\bw_S;\bz_{i_k})-\nabla \ell(\bw_t; \bz_{i_k})+\nabla \ell(\bw_S;\bz_{i_k})+\nabla L_S(\bw_t)\big\|^2\big]\notag\\
     \leq & 2\e_k[\|\nabla \ell(\bx_k; \bz_{i_k})-\nabla \ell(\bw_S;\bz_{i_k})\|^2]+2\e_k\big[\big\|\nabla \ell(\bw_t; \bz_{i_k})-\nabla \ell(\bw_S;\bz_{i_k})-\nabla L_S(\bw_t)\big\|^2\big]\notag\\
    \leq & 2\e_k[\|\nabla \ell(\bx_k; \bz_{i_k})-\nabla \ell(\bw_S;\bz_{i_k})\|^2]+2\e_k[\|\nabla \ell(\bw_t; \bz_{i_k})-\nabla \ell(\bw_S;\bz_{i_k})\|^2]\notag\\
    = & \frac{2}{n} \sum_{j=1}^n\|\nabla \ell(\bx_k; \bz_j)-\nabla \ell(\bw_S; \bz_j)\|^2+\frac{2}{n} \sum_{j=1}^n\|\nabla \ell(\bw_t; \bz_j)-\nabla \ell(\bw_S; \bz_j)\|^2\label{eq:svrg-opt-1-2},
\end{align}
where we apply Young's inequality to derive the first inequality, and the second inequality holds by noting that $\ebb[\|\zeta-\ebb[\zeta]\|^2]=\ebb[\|\zeta\|^2]-\|\ebb[\zeta]\|^2$ for any random vector $\zeta$.

Taking expectation on both sides of Eq.~\eqref{eq:svrg-opt1} and applying Eq.~\eqref{eq:svrg-opt-1-1} and Eq.~\eqref{eq:svrg-opt-1-2} yields
\begin{multline}\label{eq:svrg-opt2}
\e_A\big[\|\bx_{k+1} - \bw_S\|^2\big] \leq \; \e_A\big[\|\bx_k - \bw_S\|^2\big] + \frac{2\eta^2}{n} \sum_{j=1}^n \e_A\big[\big\|\nabla \ell(\bw_t; \bz_j) - \nabla \ell(\bw_S; \bz_j)\big\|^2\big] \\
+ \frac{2\eta^2 - \eta/\alpha}{n} \sum_{j=1}^n \e_A\big[\big\|\nabla \ell(\bx_k; \bz_j) - \nabla \ell(\bw_S; \bz_j)\big\|^2\big] - 2\eta \e_A\big[L_S(\bx_k) - L_S(\bw_S)\big].
\end{multline}

Next, we analyze the outer loop. Summing Eq.~\eqref{eq:svrg-opt2} over $k = 0, \ldots, m-1$ and noting that $\bx_0^{t+1} = \bx_m^t$ from option \uppercase\expandafter{\romannumeral 1}, we have
\begin{multline}\label{eq:svrg-opt3}
\e_A\big[\|\bx_m^{t+1} - \bw_S\|^2\big] \leq \; \e_A\big[\|\bx_m^t - \bw_S\|^2\big] + \frac{2m\eta^2}{n} \sum_{j=1}^n \e_A\big[\big\|\nabla \ell(\bw_t; \bz_j) - \nabla \ell(\bw_S; \bz_j)\big\|^2\big] \\
+ \frac{2\eta^2 - \eta/\alpha}{n} \sum_{k=0}^{m-1} \sum_{j=1}^n \e_A\big[\big\|\nabla \ell(\bx_k^{t+1}; \bz_j) - \nabla \ell(\bw_S; \bz_j)\big\|^2\big] - 2\eta \sum_{k=0}^{m-1} \e_A\big[L_S(\bx_k^{t+1}) - L_S(\bw_S)\big].
\end{multline}
Since $\bw_{t+1}$ is chosen uniformly at random from $\{\bx_k^{t+1}\}_{k=0}^{m-1}$, it follows that
\begin{equation}\label{eq:svrg-opt4}
\frac{1}{n}\sum_{j=1}^{n}\e_A\big[\big\|\nabla \ell(\bw_{t+1}; \bz_j) - \nabla \ell(\bw_S; \bz_j)\big\|^2\big] = \frac{1}{n m} \sum_{j=1}^{n}\sum_{k=0}^{m-1} \e_A\big[\big\|\nabla \ell(\bx_k^{t+1}; \bz_j) - \nabla \ell(\bw_S; \bz_j)\big\|^2\big].
\end{equation}
Define the Lyapunov function
\[
P_t := \|\bx_m^t - \bw_S\|^2 + \frac{2m\eta^2}{n} \sum_{j=1}^n \big\|\nabla \ell(\bw_t; \bz_j) - \nabla \ell(\bw_S; \bz_j)\big\|^2.
\]
Applying Eq.~\eqref{eq:svrg-opt3} + Eq.~\eqref{eq:svrg-opt4} $\times 2m\eta^2$ gives
\begin{multline}\label{eq:svrg-opt5}
\e_A\big[P_{t+1}\big] \leq \e_A\big[P_t\big]+ \frac{4\eta^2 - \eta/\alpha}{n} \sum_{k=0}^{m-1} \sum_{j=1}^n \e_A\big[\big\|\nabla \ell(\bx_k^{t+1}; \bz_j) - \nabla \ell(\bw_S; \bz_j)\big\|^2\big] \\
- 2\eta \sum_{k=0}^{m-1} \e_A\big[L_S(\bx_k^{t+1}) - L_S(\bw_S)\big].
\end{multline}
We now consider two cases within the range of $0<\eta<\frac{1}{2\alpha}$. Firstly, if $\eta \leq \frac{1}{4\alpha}$, then $4\eta^2 - \eta/\alpha \leq 0$. Hence it follows from Eq.~\eqref{eq:svrg-opt5} that
\[
2\eta \sum_{k=0}^{m-1} \e_A\big[L_S(\bx_k^{t+1}) - L_S(\bw_S)\big] \leq \e_A\big[P_t - P_{t+1}\big].
\]
Otherwise if $\frac{1}{4\alpha} < \eta < \frac{1}{2\alpha}$, then $4\eta^2 - \eta/\alpha > 0$. By Eq.~\eqref{coercivity-ll}, together with $\nabla L_S(\bw_S)=0$, we obtain
\begin{align}
\frac{1}{n} \sum_{j=1}^n \big\|\nabla \ell(\bx; \bz_j) - \nabla \ell(\bw_S; \bz_j)\big\|^2 \leq & \frac{1}{n} \sum_{j=1}^n 2 \alpha\big(\ell(\bx; \bz_j)-\ell(\bw_S; \bz_j)-\langle\nabla \ell(\bw_S; \bz_j), \bx-\bw_S\rangle\big)\notag\\
=& 2\alpha(L_S(\bx) - L_S(\bw_S)),~\forall \bx\in\wcal.\label{eq:svrg-opt5-1}
\end{align}
Applying Eq.~\eqref{eq:svrg-opt5-1} with $\bx = \bx_k^{t+1}$ on Eq.~\eqref{eq:svrg-opt5} implies
\[
4\eta(1 - 2\alpha\eta) \sum_{k=0}^{m-1} \e_A\big[L_S(\bx_k^{t+1}) - L_S(\bw_S)\big] \leq \e_A\big[P_t - P_{t+1}\big].
\]
Combining the above two cases, we know from the definition of $M$ in Eq.~\eqref{def:M} that
\begin{equation}\label{eq:svrg-opt6}
2M\eta \sum_{k=0}^{m-1} \e_A\big[L_S(\bx_k^{t+1}) - L_S(\bw_S)\big] \leq \e_A\big[P_t - P_{t+1}\big].
\end{equation}
Let $t=l$ in Eq.~\eqref{eq:svrg-opt6} and sum over $l=1, \ldots, t$, we obtain
\begin{equation}\label{eq:svrg-opt7}
\sum_{l=1}^t \sum_{k=0}^{m-1} \e_A\big[L_S(\bx_k^{l+1}) - L_S(\bw_S)\big] \leq \frac{1}{2M\eta} P_1.
\end{equation}
Additionally, we have from $\bw_1=\bx_m^1$ and Eq.~\eqref{eq:svrg-opt5-1} with $\bx = \bw_1$ that
\[
P_1 \leq \|\bw_1 - \bw_S\|^2 + 4\alpha m\eta^2(L_S(\bw_1) - L_S(\bw_S)).
\]
Therefore, it follows from Eq.~\eqref{eq:svrg-opt7} and the convexity of $L_S$ that
\begin{align*}
\e_A\big[L_S(\bar{\bw}_t) - L_S(\bw_S)\big] \leq & \frac{1}{mt}\sum_{l=1}^t \sum_{k=0}^{m-1} \e_A\big[L_S(\bx_k^{l+1}) - L_S(\bw_S)\big]\\
\leq & \frac{1}{2Mm\eta t}\big(\|\bw_1 - \bw_S\|^2 + 4\alpha m\eta^2 (L_S(\bw_1) - L_S(\bw_S))\big),
\end{align*}
which completes the proof.
\end{proof}

\begin{remark}\label{rk:svrg-cvx-opt}
The construction of the Lyapunov function $P_t$ in the proof of Theorem~\ref{thm:svrg-opt-cvx} is inspired by the stability analysis of Theorem~\ref{thm:svrg-stab-cvx}. Specifically, we use Eq.~\eqref{eq:svrg-opt4} to absorb the additional gradient terms arising from the reference point $\bw_t$, analogously to how Eq.~\eqref{eq:svrg-stab-rule} enables the Lyapunov function $U_t$ in the stability proof. We also note that~\citep{reddi2016stochastic} uses different Lyapunov functions to establish an optimization error bound for SVRG under the more restrictive step size constraint $\eta<1/(4\alpha)$. In contrast, Theorem~\ref{thm:svrg-opt-cvx} extends this to $\eta<1/(2\alpha)$, offering greater flexibility in step size selection. This improvement stems from the refined lower bound Eq.~\eqref{eq:svrg-opt-1-1}, which jointly exploits convexity and smoothness of $L_S$. In comparison,~\citep{reddi2016stochastic} relies on convexity alone, which yields a weaker bound without the summation term in Eq.~\eqref{eq:svrg-opt-1-1} and necessitates the more restrictive step size condition.
\end{remark}

\subsection{Excess Population Risk Bounds}
Next, we apply Lemma~\ref{lem:general-stablity} with the stability bounds established in Theorem~\ref{thm:svrg-stab-cvx} to derive generalization error bounds. Combining them with the optimization error bounds in Theorem~\ref{thm:svrg-opt-cvx}, we obtain the following EPR bounds. For ease of presentation, we denote $A\lesssim B$ if there exists a universal constant $c>0$ such that $A\leq c B$, and $A\asymp B$ if both $A\lesssim B$ and $B\lesssim A$ hold.
\begin{theorem}[Excess population risk]\label{thm:svrg-epr-cvx}
    Assume that for any $\bz\in\zcal$, the map $\bw\mapsto\ell(\bw;\bz)$ is nonnegative, convex, and $\alpha$-smooth.
    Let $\{\bx_k^t\}_{t,k}$ be the sequence produced by Algorithm~\ref{algo: svrg} with option \uppercase\expandafter{\romannumeral 1} applied to $S$ with $\eta<\frac{1}{2\alpha}$ and denote $\bar{\bw}_t:=\frac{1}{mt}\sum_{l=1}^t\sum_{k=0}^{m-1}\bx_k^{l+1}$.
    Then for any $t\ge 1$ and $\gamma>0$, we have
\begin{align*}
\e\big[L(\bar{\bw}_t)\big] - L(\bw^*)\lesssim & \; \frac{1}{\gamma}\Big(L(\bw^*) + \frac{1}{m\eta t}\big(\e_S\big[\|\bw_1 - \bw_S\|^2\big] + m\eta^2 L(\bw_1)\big)\Big) \notag\\
& + \frac{(1+\gamma)(1+mt/n)}{n} \eta^2\Big(mt L(\bw^*) + \frac{1}{\eta}\big(\e_S\big[\|\bw_1 - \bw_S\|^2\big] + m\eta^2 L(\bw_1)\big)\Big) \notag\\
& + \frac{(1+\gamma)m\eta^2}{n} L(\bw_1) + \frac{1}{m\eta t}\big(\e_S\big[\|\bw_1 - \bw_S\|^2\big] + m\eta^2 L(\bw_1)\big).
\end{align*}
\end{theorem}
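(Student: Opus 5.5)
We start from the error decomposition of Section~\ref{sec:setup} with $A(S)=\bar{\bw}_t$:
\[
\e[L(\bar{\bw}_t)]-L(\bw^*)=\e[L(\bar{\bw}_t)-L_S(\bar{\bw}_t)]+\e[L_S(\bar{\bw}_t)-L_S(\bw^*)].
\]
Since $L_S(\bw_S)\le L_S(\bw^*)$, the optimization term is at most $\e[L_S(\bar{\bw}_t)-L_S(\bw_S)]$. Taking $\e_S$ in Theorem~\ref{thm:svrg-opt-cvx} and using $L_S(\bw_1)-L_S(\bw_S)\le L_S(\bw_1)$ with $\e_S[L_S(\bw_1)]=L(\bw_1)$ ($\bw_1$ is data independent), this term is $\lesssim \frac{1}{m\eta t}(\e_S\|\bw_1-\bw_S\|^2+m\eta^2L(\bw_1))$. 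This is the last term of the claimed bound. Here $M$ is treated as an absolute constant, which is what ``$\lesssim$'' hides for a fixed $\eta<1/(2\alpha)$; the same convention applies to $\alpha$.

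For the generalization term we apply Lemma~\ref{lem:general-stablity} to $A(S)=\bar{\bw}_t$. First, $\e[L_S(\bar{\bw}_t)]\le L(\bw^*)+\text{opt error}$, which gives the $\frac1\gamma(\cdot)$ term. Second, we need a stability bound for the averaged output $\bar{\bw}_t$ rather than for $\bx_m^{t+1}$. By Jensen,
\[
\|\bar{\bw}_t-\bar{\bw}_t^{(i)}\|^2\le \frac{1}{mt}\sum_{l,k}\|\bx_k^{l+1}-\widetilde{\bx}_k^{l+1}\|^2 .
\]
So we need the stability of every inner iterate. Rerunning the telescoping of Eq.~\eqref{eq:svrg-stab2} only up to step $k$ (instead of $m$), and using the Lyapunov bound $\e[U_l]$ from Eq.~\eqref{eq:svrg-stab6}, gives $\e\|\bx_k^{l+1}-\widetilde{\bx}_k^{l+1}\|^2\lesssim \e[U_l]+\frac{(1+1/p)\eta^2}{n}\sum_{k'<k}\e[L_S(\bx_{k'}^{l+1})]+$ (reference gradient term already in $U_l$). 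With $p=n/(mt)$ and $l\le t$, each such bound is dominated by the right-hand side of Eq.~\eqref{eq:svrg-stab-thm}. Concretely,
\[
\e\|\bx_k^{l+1}-\widetilde{\bx}_k^{l+1}\|^2\lesssim \frac{m\eta^2}{n}L(\bw_1)+\frac{1+mt/n}{n}\eta^2\sum_{l'=1}^t\sum_{k'=0}^{m-1}\e[L_S(\bx_{k'}^{l'+1})].
\]
Checking that the partial inner-loop telescoping inherits the same constants, up to absolute factors, is the main technical step. The negative gradient-difference terms must still dominate the $2\eta^2/n$ reference terms; Eq.~\eqref{eq:svrg-stab4} holds termwise, so this is fine.

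Next we bound the sum of training errors along the trajectory. By Eq.~\eqref{eq:svrg-opt7}, after taking $\e_S$,
\[
\sum_{l,k}\e[L_S(\bx_k^{l+1})]\le mt\,\e[L_S(\bw_S)]+\frac{1}{2M\eta}\e_S[P_1]\lesssim mtL(\bw^*)+\frac1\eta(\e_S\|\bw_1-\bw_S\|^2+m\eta^2L(\bw_1)),
\]
where $\e[L_S(\bw_S)]\le L(\bw^*)$. Multiplying by $\frac{(\alpha+\gamma)}{2}\cdot\frac{(1+mt/n)\eta^2}{n}$ produces the second line of the bound. The $U_1$ contribution produces the term $\frac{(1+\gamma)m\eta^2}{n}L(\bw_1)$. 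Summing the three pieces gives the theorem.

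The main obstacle is the stability of the averaged iterate, i.e.\ extending Theorem~\ref{thm:svrg-stab-cvx} from $\bx_m^{t+1}$ to all intermediate $\bx_k^{l+1}$ with uniform constants. The first approach is the partial telescoping described above. If that proves messy, a fallback is a crude bound: note that $\|\bx_k^{l+1}-\widetilde{\bx}_k^{l+1}\|^2$ appears in $\e[U_{l+1}]$-type inequalities obtained by stopping the inner recursion at $k$, which is the same computation with $m$ replaced by $k\le m$.
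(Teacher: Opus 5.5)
Your proposal is correct and follows the paper's proof essentially step by step. Like you, the paper bounds each intermediate iterate $\bx_s^{t+1}$ by telescoping Eq.~\eqref{eq:svrg-stab2} only up to $s$, observes that this bound is increasing in $s$ and hence dominated by the right-hand side of Eq.~\eqref{eq:svrg-stab3}, and then applies Jensen, Lemma~\ref{lem:general-stablity}, and the trajectory bound Eq.~\eqref{eq:svrg-opt7}, with $M$ and $\alpha$ absorbed into the constants. One small simplification: in the partial telescoping the gradient-difference terms are nonpositive (since $\eta/\alpha-\eta^2\ge 0$) and can simply be dropped, because no next-reference-point term has to be absorbed there, so no domination argument is needed.
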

\begin{proof}[Proof of Theorem~\ref{thm:svrg-epr-cvx}]
Note that $\frac{\eta}{\alpha} - \eta^2 \geq 0$ since $\eta < \frac{1}{2\alpha}$. For any $s\in[m]$, applying Eq.~\eqref{eq:svrg-stab2} recursively and dropping the negative term yields
\begin{align*}
\e\big[\|\bx_s^{t+1} - \widetilde{\bx}_s^{t+1}\|^2\big] \leq & \Big(1+\frac{p}{n}\Big)^s \e\big[\|\bx_m^t - \widetilde{\bx}_m^t\|^2\big]\\
& + \frac{2+1/p}{n} \eta^2\sum_{k=0}^{s-1}\Big(1+\frac{p}{n}\Big)^{s-1-k} \e\big[\big\|\nabla \ell(\bx_k^{t+1}; \bz_i) - \nabla \ell(\widetilde{\bx}_k^{t+1}; \bz_i^{\prime})\big\|^2\big]\\
& + \frac{2\eta^2}{n}\sum_{k=0}^{s-1}\Big(1+\frac{p}{n}\Big)^{s-1-k} \sum_{j=1}^n \e\big[\big\|\nabla \ell(\bw_t; \bz_j) - \nabla \ell(\bw_t^{(i)}; \bz_j^{(i)})\big\|^2\big],
\end{align*}
which is an increasing function in $s$. Therefore, the RHS of Eq.~\eqref{eq:svrg-stab3} after dropping the negative term also upper bounds $\e\big[\|\bx_s^{t+1} - \widetilde{\bx}_s^{t+1}\|^2\big]$ for all $s\in[m]$. Then it follows from Theorem~\ref{thm:svrg-stab-cvx} and convexity of $\|\cdot\|^2$ that
\begin{equation}\label{eq:svrg-epr-stab}
\e\big[\|\bar{\bw}_t - \bar{\bw}_t^{(i)}\|^2\big] \leq \frac{16e\alpha m\eta^2}{n} L(\bw_1) + \frac{8e\alpha(4+mt/n)\eta^2}{n} \sum_{l=1}^t \sum_{k=0}^{m-1} \e\big[L_S(\bx_k^{l+1})\big].
\end{equation}
Applying Lemma~\ref{lem:general-stablity} and plugging in Eq.~\eqref{eq:svrg-epr-stab} gives
\begin{multline}\label{eq:svrg-gen1}
\e\big[L(\bar{\bw}_t) - L_S(\bar{\bw}_t)\big] \leq \frac{\alpha}{\gamma} \e[L_S(\bar{\bw}_t)]+\frac{\alpha+\gamma}{2 n} \sum_{i=1}^n \e[\|\bar{\bw}_t-\bar{\bw}_t^{(i)}\|^2] \\
\leq \frac{\alpha}{\gamma} \e\big[L_S(\bar{\bw}_t)\big] + \frac{4(\alpha+\gamma)e\alpha(4+mt/n)}{n} \eta^2 \sum_{l=1}^t \sum_{k=0}^{m-1} \e\big[L_S(\bx_k^{l+1})\big]+ \frac{8(\alpha+\gamma)e\alpha m\eta^2}{n} L(\bw_1).
\end{multline}
On the other hand, we know from the proof of Theorem~\ref{thm:svrg-opt-cvx} and $0\leq \e[L_S(\bw_S)] \leq L(\bw^*)$ that
\begin{equation}\label{eq:svrg-gen2}
\e\big[L_S(\bar{\bw}_t)\big] \leq L(\bw^*) + \frac{1}{2Mm\eta t}\big(\e_S\big[\|\bw_1 - \bw_S\|^2\big] + 4\alpha m\eta^2 L(\bw_1)\big)
\end{equation}
and
\begin{equation}\label{eq:svrg-gen3}
\sum_{l=1}^t \sum_{k=0}^{m-1} \e\big[L_S(\bx_k^{l+1})\big] \leq mt L(\bw^*) + \frac{1}{2M\eta}\big(\e_S\big[\|\bw_1 - \bw_S\|^2\big] + 4\alpha m\eta^2 L(\bw_1)\big).
\end{equation}
Plugging Eq.~\eqref{eq:svrg-gen2} and Eq.~\eqref{eq:svrg-gen3} into Eq.~\eqref{eq:svrg-gen1} and omitting constants leads to the generalization error bound
\begin{align}
\e\big[L(\bar{\bw}_t) - L_S(\bar{\bw}_t)\big] \lesssim & \frac{1}{\gamma}\Big(L(\bw^*) + \frac{1}{m\eta t}\big(\e_S\big[\|\bw_1 - \bw_S\|^2\big] + m\eta^2 L(\bw_1)\big)\Big) \notag\\
& + \frac{(1+\gamma)(1+mt/n)}{n} \eta^2 \Big(mt L(\bw^*) + \frac{1}{\eta}\big(\e_S\big[\|\bw_1 - \bw_S\|^2\big] + m\eta^2 L(\bw_1)\big)\Big) \notag\\
& + \frac{(1+\gamma)m\eta^2}{n} L(\bw_1)\label{eq:svrg-gen4}.
\end{align}
Combining Eq.~\eqref{eq:svrg-gen2} and Eq.~\eqref{eq:svrg-gen4}, we obtain the desired excess population risk bound
\begin{align*}
&\e\big[L(\bar{\bw}_t)\big] - L(\bw^*) = \e\big[L(\bar{\bw}_t) - L_S(\bar{\bw}_t)\big] + \e\big[L_S(\bar{\bw}_t) - L(\bw^*)\big] \notag\\
\lesssim & \; \frac{1}{\gamma}\Big(L(\bw^*) + \frac{1}{m\eta t}\big(\e_S\big[\|\bw_1 - \bw_S\|^2\big] + m\eta^2 L(\bw_1)\big)\Big) \notag\\
& + \frac{(1+\gamma)(1+mt/n)}{n} \eta^2\Big(mt L(\bw^*) + \frac{1}{\eta}\big(\e_S\big[\|\bw_1 - \bw_S\|^2\big] + m\eta^2 L(\bw_1)\big)\Big) \notag\\
& + \frac{(1+\gamma)m\eta^2}{n} L(\bw_1) + \frac{1}{m\eta t}\big(\e_S\big[\|\bw_1 - \bw_S\|^2\big] + m\eta^2 L(\bw_1)\big),
\end{align*}
which completes the proof.
\end{proof}

We then specify the choices of $m,t,\gamma$, and $\eta$ to derive the following corollary, which shows that SVRG achieves optimal EPR bounds of order $O\big(\sqrt{1/n}\big)$ in the convex setting. Note the condition $mt \asymp n$ ensures that the total number of inner-loop iterations scales proportionally with $n$.
\begin{corollary}\label{co:svrg-epr-convex}
Let the assumptions in Theorem \ref{thm:svrg-epr-cvx} hold and assume that $\e_S[\|\bw_1-\bw_S\|^2]<\infty$. Then we can take $m t \asymp n$, $\eta \asymp \frac{1}{\sqrt{n L(\bw_1)}}$, and $\gamma = \sqrt{n L(\bw_1)}$ to derive
\[
\ebb[L(\bar{\bw}_t)-L(\bw^*)]\lesssim \sqrt{L(\bw_1)/n}.
\]
\end{corollary}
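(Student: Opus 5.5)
The corollary follows by plugging the stated parameter choices into the bound of Theorem~\ref{thm:svrg-epr-cvx} and checking that each term is $O(\sqrt{L(\bw_1)/n})$. Throughout, I treat $\alpha$ and $\e_S[\|\bw_1-\bw_S\|^2]$ as constants absorbed into $\lesssim$; the latter is finite by assumption. I will also use $L(\bw^*)\le L(\bw_1)$. I assume $nL(\bw_1)\gtrsim 1$, so that $\eta\asymp 1/\sqrt{nL(\bw_1)}$ satisfies $\eta<1/(2\alpha)$ after adjusting constants. This makes the theorem applicable.

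With $mt\asymp n$, I first record the basic quantities. We have $1+mt/n\asymp 1$ and $m\eta t\asymp n\eta\asymp\sqrt{n/L(\bw_1)}$. Write $B:=\e_S[\|\bw_1-\bw_S\|^2]+m\eta^2L(\bw_1)$. Since $m\le mt\asymp n$, we get $m\eta^2L(\bw_1)\lesssim n\cdot\frac{1}{nL(\bw_1)}L(\bw_1)=1$, so $B\lesssim 1$. This bound on $m\eta^2 L(\bw_1)$ is the point that deals with the extra $U_1$ term flagged in Remark~\ref{rem:novelty}.

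I then check the four groups of terms in turn.
\begin{enumerate}
\item[(i)] The first term is $\frac1\gamma\big(L(\bw^*)+\frac{B}{m\eta t}\big)\lesssim \frac{L(\bw_1)}{\sqrt{nL(\bw_1)}}+\frac{1}{\sqrt{nL(\bw_1)}}\cdot\sqrt{L(\bw_1)/n}$. Its first part is $\sqrt{L(\bw_1)/n}$. Its second part is $1/n$, which is at most $\sqrt{L(\bw_1)/n}$ when $nL(\bw_1)\gtrsim1$.
\item[(ii)] Since $1+\gamma\lesssim\gamma$, the second term is at most $\frac{\gamma\eta^2}{n}\big(mtL(\bw^*)+B/\eta\big)\lesssim \frac{\gamma\eta^2}{n}\,nL(\bw_1)+\frac{\gamma\eta}{n}$. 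Using $\gamma\eta\asymp1$, this equals $\eta L(\bw_1)+1/n\asymp \sqrt{L(\bw_1)/n}+1/n$.
\item[(iii)] The third term is $\frac{(1+\gamma)m\eta^2}{n}L(\bw_1)\lesssim \frac{\gamma}{n}\cdot m\eta^2L(\bw_1)\lesssim\frac{\gamma}{n}=\sqrt{L(\bw_1)/n}$.
\item[(iv)] The last term is $B/(m\eta t)\lesssim\sqrt{L(\bw_1)/n}$.
\end{enumerate}

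The only delicate step is the regime issue. The choice of $\gamma$ needs $\gamma\gtrsim1$, and $\eta$ must satisfy the step-size condition; both require $nL(\bw_1)\gtrsim1$. In the degenerate case where $L(\bw_1)$ is tiny, I would instead cap $\eta$ at a constant below $1/(2\alpha)$, and the leftover $1/n$ terms would then be stated as lower order. I expect this bookkeeping, and not any new inequality, to be the main obstacle. With all terms bounded, summing (i)--(iv) gives $\e[L(\bar\bw_t)]-L(\bw^*)\lesssim\sqrt{L(\bw_1)/n}$.
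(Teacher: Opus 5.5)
Your proposal is correct and follows essentially the same route as the paper's proof. You substitute $mt\asymp n$, $\eta\asymp 1/\sqrt{nL(\bw_1)}$, $\gamma=\sqrt{nL(\bw_1)}$ into Theorem~\ref{thm:svrg-epr-cvx}, use $m\eta^2L(\bw_1)\lesssim 1$ and $L(\bw^*)\le L(\bw_1)$, and bound each term by $\sqrt{L(\bw_1)/n}$, absorbing the $1/n$ terms via $nL(\bw_1)\gtrsim 1$, which the paper also uses implicitly. Your degenerate-case detour is unnecessary, because the hypothesis $\eta<1/(2\alpha)$ together with $\eta\asymp 1/\sqrt{nL(\bw_1)}$ already forces $nL(\bw_1)\gtrsim 1$; and in that excluded regime the $1/n$ terms would dominate $\sqrt{L(\bw_1)/n}$, so they would not be lower order.
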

\begin{proof}[Proof of Corollary~\ref{co:svrg-epr-convex}]
Firstly, it follows from Theorem \ref{thm:svrg-epr-cvx} and boundedness of $\e_S[\|\bw_1-\bw_S\|^2]$ that
\begin{multline}\label{eq:svrg-cor-epr1}
\e\big[L(\bar{\bw}_t)\big] - L(\bw^*)\lesssim \frac{L(\bw^*)}{\gamma}+ \frac{1}{\gamma m\eta t}\big(1 + m\eta^2 L(\bw_1)\big) + \frac{(1+\gamma)(1+mt/n)\eta^2 mt L(\bw^*)}{n}\\
+ \frac{(1+\gamma)(1+mt/n)\eta}{n}\big(1 + m\eta^2 L(\bw_1)\big) + \frac{(1+\gamma)m\eta^2}{n} L(\bw_1) + \frac{1}{m\eta t}\big(1 + m\eta^2 L(\bw_1)\big).
\end{multline}
From the condition on $\gamma$ and $L(\bw^*) \leq L(\bw_1)$, we have that
\begin{equation}\label{eq:svrg-cor-epr2}
\frac{L(\bw^*)}{\gamma} \leq \frac{L(\bw_1)}{\sqrt{n L(\bw_1)}}=\sqrt{\frac{L(\bw_1)}{n}}.
\end{equation}
Furthermore, it follows from $mt\asymp n$ that
\begin{equation}\label{eq:svrg-cor-epr3}
    \frac{(1+\gamma)(1+mt/n)\eta^2 mt L(\bw^*)}{n} \lesssim \frac{\sqrt{nL(\bw_1)} \cdot 1 \cdot \frac{1}{nL(\bw_1)} \cdot n\cdot L(\bw_1)}{n} = \sqrt{\frac{L(\bw_1)}{n}}.
\end{equation}
To proceed, note that $mt\asymp n$ implies $m\lesssim n$, which further yields
\[
1 + m\eta^2 L(\bw_1) \asymp 1 + m \cdot\frac{1}{n L(\bw_1)}\cdot L(\bw_1) \lesssim 1.
\]
Then we can bound the remaining terms in \eqref{eq:svrg-cor-epr1} as follows:
\begin{align}
&\frac{1}{\gamma m\eta t}\big(1 + m\eta^2 L(\bw_1)\big) \lesssim \frac{1}{\gamma m\eta t} \asymp \frac{1}{\sqrt{nL(\bw_1)} \cdot n/\sqrt{nL(\bw_1)}} = \frac{1}{n}\lesssim \sqrt{\frac{L(\bw_1)}{n}},\label{eq:svrg-cor-epr4}\\
&\frac{(1+\gamma)(1+mt/n)\eta}{n}\big(1 + m\eta^2 L(\bw_1)\big) \lesssim \frac{\sqrt{nL(\bw_1)} \cdot 1 / \sqrt{nL(\bw_1)}}{n} = \frac{1}{n} \lesssim \sqrt{\frac{L(\bw_1)}{n}},\label{eq:svrg-cor-epr5}\\
&\frac{(1+\gamma)m\eta^2}{n} L(\bw_1) \lesssim \frac{\sqrt{nL(\bw_1)} \cdot n \cdot \frac{1}{nL(\bw_1)}}{n}\cdot L(\bw_1) = \sqrt{\frac{L(\bw_1)}{n}},\label{eq:svrg-cor-epr6}\\
&\frac{1}{m\eta t}(1 + m\eta^2 L(\bw_1)) \lesssim \frac{1}{m\eta t} \asymp \frac{1}{n/\sqrt{nL(\bw_1)}} = \sqrt{\frac{L(\bw_1)}{n}}.\label{eq:svrg-cor-epr7}
\end{align}
Plugging Eq.~\eqref{eq:svrg-cor-epr2}-Eq.~\eqref{eq:svrg-cor-epr7} into Eq.~\eqref{eq:svrg-cor-epr1} gives
\[
\ebb[L(\bar{\bw}_t)-L(\bw^*)]\lesssim \sqrt{L(\bw_1)/n},
\]
which completes the proof.
\end{proof}

\begin{remark}
    Corollary~\ref{co:svrg-epr-convex} establishes the conditions for selecting the parameters $m, t, \eta$, and $\gamma$ to obtain EPR bounds of order $O(\sqrt{L(\bw_1)/n})$ in the convex case.
    Notably, the derived bound of order $O(1/\sqrt{n})$ remains minimax optimal in the general case. Therefore, our stability and optimization analysis implies optimal EPR bounds.
    Furthermore, it shows that the generalization behavior depends on the initial point $\bw_1$. If we initialize SVRG with a point of small population risk, then our analysis can capture this property to show that SVRG attains a good generalization behavior. This is in spirit similar to the existing data-dependent stability analysis of SGD~\citep{kuzborskij2018data}.
\end{remark}

\section{Extension to SAGA}\label{sec:saga}
In Section~\ref{sec:svrg-stab}, we established stability bounds for SVRG. The key technical ingredients are the reformulation Eq.~\eqref{eq:svrg-core}, which decomposes the SVRG update into an SGD step plus a zero-mean correction term, and the Lyapunov function Eq.~\eqref{eq:svrg-lyapunov}, which absorbs the additional gradient terms arising from the reference point. This analytical approach is not specific to SVRG; rather, it can be extended to a broad class of VR methods.

In this section, we take SAGA \citep{defazio2014saga} as a concrete example and demonstrate how the same approach yields a stability analysis for SAGA. Moreover, by imitating the Lyapunov-based convergence analysis of Theorem~\ref{thm:opt-cvx}, we obtain an optimization bound for SAGA that improves upon the original analysis in \citep{defazio2014saga}. Combining the stability and convergence results, we then derive optimal EPR bounds for SAGA. For brevity, we assume throughout this section that $\ell(\cdot;\bz)$ is nonnegative, convex, and $\alpha$-smooth for all $\bz\in\zcal$, and state only the main results. All proofs are deferred to Appendix~\ref{sec:saga-proofs}. The strongly convex case of SAGA is discussed separately in Appendix~\ref{app:sc}.

The implementation of SAGA is summarized in Algorithm~\ref{algo: saga}. Note that in practice, one can maintain a table of $n$ gradients instead of the reference points $\{\phi_{t,j}\}_{j=1}^n$. Compared with SVRG, SAGA eliminates the outer loop and instead employs a simple single-loop update scheme, at the cost of maintaining a full gradient table of size $O(nd)$ instead of $O(d)$.
\begin{algorithm}
    \caption{SAGA for minimizing $L_S(\bw)$}
    \label{algo: saga}
    \begin{algorithmic}
    \REQUIRE initial point $\bw_1 \in \rbb^d$; step size $\eta>0$.
    \STATE Set $\phi_{1, j}:=\bw_1$ for all $j\in[n]$.
    \FOR{$t = 1, 2, \dots$}
    \STATE Sample an index $i_t$ uniformly at random from $[n]$.
    \STATE Set $\phi_{t+1,i_t}:=\bw_t$ and $\phi_{t+1,j}:=\phi_{t,j}$ for all $j\neq i_t$.
    \STATE Set $g_t=\nabla \ell(\bw_t; \bz_{i_t})-\nabla \ell(\phi_{t, i_t}; \bz_{i_t})+\frac{1}{n} \sum_{j=1}^n \nabla \ell(\phi_{t, j}; \bz_j)$.
    \STATE Update $\bw_{t+1}=\bw_t-\eta g_t$.
    \ENDFOR
    \end{algorithmic}
\end{algorithm}

We first analyze the stability of SAGA. Following the decomposition Eq.~\eqref{def:svrg-G-d}--\eqref{eq:svrg-core} introduced in Section~\ref{sec:svrg-stab}, we similarly define
\begin{equation}\label{eq:def-G-d}
    G_t:=\bw_t-\eta \nabla \ell(\bw_t; \bz_{i_t}), \quad d_t:=\nabla \ell(\phi_{t, i_t}; \bz_{i_t})-\frac{1}{n} \sum_{j=1}^n \nabla \ell(\phi_{t, j}; \bz_j).
\end{equation}
Then it follows from Algorithm~\ref{algo: saga} that $\bw_{t+1}=G_t+\eta d_t$ with $\e_t[d_t]=0$, since $i_t\in[n]$ is randomly selected. To balance the additional gradient terms arising from the reference points $\{\phi_{t,j}\}_{j=1}^n$, we define the following Lyapunov function for SAGA
\begin{equation}\label{eq:saga-lyapunov}
\Phi_t:=\|\bw_t-\bw_t^{(i)}\|^2+2 \eta^2 \sum_{j=1}^n \| \nabla \ell(\phi_{t, j}; \bz_j)-\nabla \ell(\phi_{t,j}^{(i)}; \bz_j^{(i)})\|^2.
\end{equation}

With the above two technical tools, we establish the stability bound for SAGA in the upcoming theorem, which mirrors the structure of Theorem~\ref{thm:svrg-stab-cvx}. To the best of our knowledge, this gives the first stability analysis for SAGA in the literature.

\begin{theorem}[Stability bounds]\label{thm:stab-cvx}
Let $S,S^{(i)}$ be defined in Definition~\ref{def:stability}. Let $\{\bw_t\}_{t\ge 1}$ and $\{\bw_t^{(i)}\}_{t\ge 1}$ be the sequences produced by Algorithm~\ref{algo: saga} applied to $S$ and $S^{(i)}$, respectively.
Assume that for any $\bz\in\zcal$, the map $\bw\mapsto\ell(\bw;\bz)$ is nonnegative, convex, and $\alpha$-smooth.
If $\eta \leq \frac{1}{2 \alpha}$, then for any $t\ge 1$, we have
\[
\e[\|\bw_{t+1}-\bw_{t+1}^{(i)}\|^2] \leq \frac{8e\alpha (4+t / n)}{n} \eta^2 \sum_{k=1}^t \e[L_S(\bw_k)]+16e\alpha\eta^2L(\bw_1),
\]
where $e$ denotes Euler's number.
\end{theorem}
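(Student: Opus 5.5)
We mirror the proof of Theorem~\ref{thm:svrg-stab-cvx}, now with a single loop and the Lyapunov function $\Phi_t$ in Eq.~\eqref{eq:saga-lyapunov}. Write $\bw_{t+1}=G_t+\eta d_t$ and $\bw_{t+1}^{(i)}=G_t^{(i)}+\eta d_t^{(i)}$, with $G_t,d_t$ as in Eq.~\eqref{eq:def-G-d}; the two runs share the index $i_t$.

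\textbf{Step 1: one-step bound.} Expanding $\|\bw_{t+1}-\bw_{t+1}^{(i)}\|^2$ as in Eq.~\eqref{eq:svrg-stab-expand}, the cross term $2\eta\langle\bw_t-\bw_t^{(i)},d_t-d_t^{(i)}\rangle$ has zero conditional expectation. The remaining cross term is handled by Cauchy--Schwarz, which gives
\[
\e\|\bw_{t+1}-\bw_{t+1}^{(i)}\|^2\le\e\|G_t-G_t^{(i)}\|^2+2\eta^2\e\|d_t-d_t^{(i)}\|^2+\eta^2\e\|\nabla\ell(\bw_t;\bz_{i_t})-\nabla\ell(\bw_t^{(i)};\bz_{i_t}^{(i)})\|^2 .
\]
The variance identity yields
\[
\e_t\|d_t-d_t^{(i)}\|^2\le \frac1n\sum_{j=1}^n\|\nabla\ell(\phi_{t,j};\bz_j)-\nabla\ell(\phi^{(i)}_{t,j};\bz^{(i)}_j)\|^2 .
\]
The term $\e\|G_t-G_t^{(i)}\|^2$ is split by cases, exactly as in Eq.~\eqref{eq:svrg-G-square}: coercivity (Lemma~\ref{lem:coercivity}) when $i_t\ne i$, and Young's inequality with parameter $p$ when $i_t=i$. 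The result is the analogue of Eq.~\eqref{eq:svrg-stab2}. It has factor $(1+p/n)$, a negative term $-\frac2n(\eta/\alpha-\eta^2)\sum_{j\neq i}\|\nabla\ell(\bw_t;\bz_j)-\nabla\ell(\bw_t^{(i)};\bz_j)\|^2$, a term $\frac{2+1/p}{n}\eta^2\|\nabla\ell(\bw_t;\bz_i)-\nabla\ell(\bw_t^{(i)};\bz_i')\|^2$, and the table term $\frac{2\eta^2}{n}\sum_j\|\cdots\|^2$.

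\textbf{Step 2: evolution of the table term.} Set $D_t:=\sum_j\|\nabla\ell(\phi_{t,j};\bz_j)-\nabla\ell(\phi^{(i)}_{t,j};\bz^{(i)}_j)\|^2$. Since entry $i_t$ is replaced by the current iterate, we get
\[
\e_t D_{t+1}=(1-\tfrac1n)D_t+\tfrac1n\sum_j\|\nabla\ell(\bw_t;\bz_j)-\nabla\ell(\bw_t^{(i)};\bz_j^{(i)})\|^2 .
\]
This is the SAGA substitute for Eq.~\eqref{eq:svrg-stab-rule}. Adding $2\eta^2\times$ this to the one-step bound gives the following for $\Phi_{t+1}$.
\begin{itemize}
\item The table contributions combine into $(1-\frac1n)2\eta^2D_t+\frac{2\eta^2}{n}D_t=2\eta^2D_t$, which is at most $(1+p/n)\cdot 2\eta^2 D_t$.
\item The $j\neq i$ gradient-difference terms carry coefficient $-\frac2n(\eta/\alpha-\eta^2)+\frac{2\eta^2}{n}$. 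This is $\le0$ when $\eta\le1/(2\alpha)$, as in Eq.~\eqref{eq:svrg-stab4}, so they can be dropped.
\item The $j=i$ terms carry coefficient $\frac{4+1/p}{n}\eta^2$.
\end{itemize}
Hence
\[
\e\Phi_{t+1}\le(1+p/n)\,\e\Phi_t+\frac{4+1/p}{n}\eta^2\,\e\|\nabla\ell(\bw_t;\bz_i)-\nabla\ell(\bw_t^{(i)};\bz_i')\|^2 .
\]
By the self-bounding property and symmetry, exactly as in Eq.~\eqref{eq:svrg-8alpha}, the last expectation is at most $8\alpha\e[L_S(\bw_t)]$.

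\textbf{Step 3: unroll and initialize.} Unrolling over $k=1,\dots,t$ with $p=n/t$ gives $(1+1/t)^t\le e$ and $4+1/p=4+t/n$, which produces the first term of the bound. For $\Phi_1$, all $\phi_{1,j}=\bw_1=\bw_1^{(i)}$, so only $j=i$ survives. Therefore
\[
\e\Phi_1=2\eta^2\e\|\nabla\ell(\bw_1;\bz_i)-\nabla\ell(\bw_1;\bz_i')\|^2\le16\alpha\eta^2L(\bw_1),
\]
which after multiplication by $e$ gives the second term. Finally, $\|\bw_{t+1}-\bw_{t+1}^{(i)}\|^2\le\Phi_{t+1}$ completes the argument.

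The main obstacle is Step~2: getting the Lyapunov weight right so that the $(1-1/n)$ decay of the table exactly absorbs the $\frac{2\eta^2}{n}D_t$ injected by $d_t$. One must also check that the $\frac{2\eta^2}{n}$ fresh-gradient inflow is dominated by the coercivity gain under $\eta\le1/(2\alpha)$. Unlike SVRG, no factor $m$ is needed, because entries are refreshed one at a time rather than in blocks.
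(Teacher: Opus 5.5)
Your proposal is correct and follows essentially the same route as the paper's proof. Both use the decomposition $\bw_{t+1}=G_t+\eta d_t$, the variance bound on $d_t-d_t^{(i)}$, the table-evolution identity added with weight $2\eta^2$ so the table terms recombine into $2\eta^2 D_t\le(1+p/n)\,2\eta^2 D_t$, the condition $\eta\le 1/(2\alpha)$ to drop the $j\neq i$ terms, the self-bounding estimate $8\alpha\e[L_S(\bw_t)]$, and unrolling with $p=n/t$ together with $\e[\Phi_1]\le 16\alpha\eta^2 L(\bw_1)$.
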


We now analyze the convergence of SAGA using an approach similar to that used in the proof of Theorem~\ref{thm:svrg-opt-cvx}. Our approach substantially simplifies the original analysis in \citep{defazio2014saga}. Moreover, while the original work only provides convergence guarantees for the specific step size choice $\eta=1/(3\alpha)$, our analysis allows for a wider range of step sizes $\eta<1/(2\alpha)$, which provides more flexibility in practice.
\begin{theorem}[Optimization error]\label{thm:opt-cvx}
    Assume for any $\bz\in\zcal$, the map $\bw\mapsto\ell(\bw;\bz)$ is nonnegative, convex, and $\alpha$-smooth.
    Let $\{\bw_t\}_{t\ge 1}$ be the sequence produced by Algorithm~\ref{algo: saga} applied to $S$. Denote $\bw_S:=\argmin_{\bw\in\wcal} L_S(\bw)$ and $\bar{\bw}_t:=\frac{1}{t}\sum_{k=1}^t \bw_k$. If $\eta<\frac{1}{2\alpha}$, then for any $t\ge 1$ we have
    \[
    \e_A[L_S(\bar{\bw}_t)]-L_S(\bw_S) \leq \frac{1}{2M\eta t}\big[\|\bw_1-\bw_S\|^2+4 n \alpha \eta^2(L_S(\bw_1)-L_S(\bw_S))\big],
    \]
    where $M>0$ is given by Eq.~\eqref{def:M}.
\end{theorem}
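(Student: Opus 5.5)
The plan is to imitate the proof of Theorem~\ref{thm:svrg-opt-cvx}, with the Lyapunov function now built from the SAGA gradient table in the same spirit as $\Phi_t$ in Eq.~\eqref{eq:saga-lyapunov}. Define
\[
Q_t:=\|\bw_t-\bw_S\|^2+2\eta^2\sum_{j=1}^n\|\nabla\ell(\phi_{t,j};\bz_j)-\nabla\ell(\bw_S;\bz_j)\|^2 .
\]
Here the coefficient is $2\eta^2$ without the factor $1/n$, because each table entry is refreshed only with probability $1/n$ per step.

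\textbf{One-step bound on the distance.} Expand $\|\bw_{t+1}-\bw_S\|^2=\|\bw_t-\bw_S\|^2+\eta^2\|g_t\|^2-2\eta\langle\bw_t-\bw_S,g_t\rangle$. Since $\e_t[g_t]=\nabla L_S(\bw_t)$, the cross term is handled exactly as in Eq.~\eqref{eq:svrg-opt-1-1} using Eq.~\eqref{coercivity-ll}. It gives $-2\eta(L_S(\bw_t)-L_S(\bw_S))-\frac{\eta}{\alpha n}\sum_j\|\nabla\ell(\bw_t;\bz_j)-\nabla\ell(\bw_S;\bz_j)\|^2$.

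For $\e_t\|g_t\|^2$, add and subtract $\nabla\ell(\bw_S;\bz_{i_t})$ and apply Young's inequality. The remaining term $\nabla\ell(\phi_{t,i_t};\bz_{i_t})-\nabla\ell(\bw_S;\bz_{i_t})-\frac1n\sum_j\nabla\ell(\phi_{t,j};\bz_j)$ is centered, because $\nabla L_S(\bw_S)=0$. The variance inequality then yields
\[
\e_t\|g_t\|^2\le \frac2n\sum_j\|\nabla\ell(\bw_t;\bz_j)-\nabla\ell(\bw_S;\bz_j)\|^2+\frac2n\sum_j\|\nabla\ell(\phi_{t,j};\bz_j)-\nabla\ell(\bw_S;\bz_j)\|^2 .
\]

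\textbf{Evolution of the table term.} Since $\phi_{t+1,i_t}=\bw_t$ and all other entries are unchanged,
\[
\e_t\sum_j\|\nabla\ell(\phi_{t+1,j};\bz_j)-\nabla\ell(\bw_S;\bz_j)\|^2=\Big(1-\frac1n\Big)\sum_j\|\nabla\ell(\phi_{t,j};\bz_j)-\nabla\ell(\bw_S;\bz_j)\|^2+\frac1n\sum_j\|\nabla\ell(\bw_t;\bz_j)-\nabla\ell(\bw_S;\bz_j)\|^2 .
\]
Multiply this by $2\eta^2$ and add it to the distance bound. The $-\frac{2\eta^2}{n}\sum_j\|\cdot\|^2$ term on the old table exactly cancels the $+\frac{2\eta^2}{n}$ table term coming from $\eta^2\|g_t\|^2$; this cancellation is why the coefficient $2\eta^2$ was chosen. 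The result is
\[
\e[Q_{t+1}]\le\e[Q_t]+\frac{4\eta^2-\eta/\alpha}{n}\sum_j\e\|\nabla\ell(\bw_t;\bz_j)-\nabla\ell(\bw_S;\bz_j)\|^2-2\eta\,\e[L_S(\bw_t)-L_S(\bw_S)],
\]
which is the analogue of Eq.~\eqref{eq:svrg-opt5}.

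\textbf{Case split and conclusion.} If $\eta\le\frac1{4\alpha}$, the middle term is nonpositive and is dropped. If $\frac1{4\alpha}<\eta<\frac1{2\alpha}$, bound it by Eq.~\eqref{eq:svrg-opt5-1}, which gives the coefficient $4\eta(1-2\alpha\eta)$. Either way,
\[
2M\eta\,\e[L_S(\bw_t)-L_S(\bw_S)]\le\e[Q_t-Q_{t+1}].
\]
Telescope over $1,\dots,t$ and drop $Q_{t+1}\ge0$. Since $\phi_{1,j}=\bw_1$ for all $j$, Eq.~\eqref{eq:svrg-opt5-1} gives $Q_1\le\|\bw_1-\bw_S\|^2+4n\alpha\eta^2(L_S(\bw_1)-L_S(\bw_S))$. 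Jensen's inequality for the convex function $L_S$ at $\bar\bw_t$ then finishes the proof.

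The main obstacle is the bookkeeping in the table-evolution step. One must check that the $1/n$ refresh probability produces exactly the cancellation above, with no leftover old-table term. One must also check that the fresh term $\frac{2\eta^2}{n}\sum_j\|\nabla\ell(\bw_t;\bz_j)-\nabla\ell(\bw_S;\bz_j)\|^2$ combines with the $\frac{2\eta^2}{n}$ term from $\|g_t\|^2$ to give $4\eta^2$, matching the SVRG case.
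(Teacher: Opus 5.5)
Your proposal is correct and follows the paper's proof essentially step for step. Your $Q_t$ is exactly the paper's Lyapunov function $V_t$, and the remaining steps all coincide with the argument in Appendix~\ref{sec:proof-conv-cvx}: the cross-term bound via Eq.~\eqref{coercivity-ll}, the centered-variance bound on $\e_t\|g_t\|^2$, the $2\eta^2$-weighted table recursion with its exact cancellation, the case split producing $M$, and the telescoping followed by Jensen's inequality.
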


Now we can plug the stability bounds established in Theorem~\ref{thm:stab-cvx} into Lemma~\ref{lem:general-stablity} to derive generalization bounds, which, together with the optimization error bounds in Theorem~\ref{thm:opt-cvx}, imply the following excess population risk (EPR) bounds.
\begin{theorem}[Excess population risk]\label{thm:epr-cvx}
    Assume that for any $\bz\in\zcal$, the map $\bw\mapsto\ell(\bw;\bz)$ is nonnegative, convex, and $\alpha$-smooth.
    Let $\{\bw_t\}_{t\ge 1}$ be the sequence produced by Algorithm~\ref{algo: saga} applied to $S$ with $\eta<\frac{1}{2\alpha}$ and denote $\bar{\bw}_t:=\frac{1}{t}\sum_{k=1}^t \bw_k$.
    Then for any $t\ge 1$ and $\gamma>0$, we have
\begin{align*}
\e[L(\bar{\bw}_t)]-L(\bw^*) \lesssim & \frac{1}{\gamma}\Big(L(\bw^*)+\frac{1}{\eta t}\big(\e_S[\|\bw_1-\bw_S\|^2]+n \eta^2L(\bw_1)\big)\Big) \notag\\
& + \frac{(1+\gamma)(1+t / n)}{n} \eta^2\Big(t L(\bw^*)+\frac{1}{\eta}\big(\e_S[\|\bw_1-\bw_S\|^2]+n \eta^2L(\bw_1)\big)\Big) \notag \\
& + (1+\gamma) \eta^2 L(\bw_1)+\frac{1}{\eta t}\big(\e_S[\|\bw_1-\bw_S\|^2]+n \eta^2L(\bw_1)\big).
\end{align*}
\end{theorem}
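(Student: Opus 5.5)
The plan mirrors the proof of Theorem~\ref{thm:svrg-epr-cvx}, with SAGA ingredients (Theorems~\ref{thm:stab-cvx} and~\ref{thm:opt-cvx}) in place of the SVRG ones. We decompose
\[
\e[L(\bar{\bw}_t)]-L(\bw^*)=\e[L(\bar{\bw}_t)-L_S(\bar{\bw}_t)]+\e[L_S(\bar{\bw}_t)-L(\bw^*)].
\]
The generalization gap is then handled by Lemma~\ref{lem:general-stablity} with $A(S)=\bar{\bw}_t$, and the optimization part by Theorem~\ref{thm:opt-cvx} together with $0\le \e[L_S(\bw_S)]\le \e[L_S(\bw^*)]=L(\bw^*)$.

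\textbf{Step 1 (stability of the averaged iterate).} Theorem~\ref{thm:stab-cvx} controls only $\e\|\bw_{k+1}-\bw_{k+1}^{(i)}\|^2$ for a single $k$. The averaged iterate needs uniformity in $k$. The right-hand side
\[
\frac{8e\alpha(4+k/n)}{n}\eta^2\sum_{s=1}^{k}\e[L_S(\bw_s)]+16e\alpha\eta^2L(\bw_1)
\]
is nondecrasing in $k$, so each $\e\|\bw_{k}-\bw_{k}^{(i)}\|^2$ with $k\le t$ is bounded by the same expression with $k$ replaced by $t$. The case $k=1$ is trivial, since $\bw_1=\bw_1^{(i)}$. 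One caveat: the proof of Theorem~\ref{thm:stab-cvx} presumably chooses $p=n/t$ depending on the horizon. If monotonicity is not literally available, I would rerun the recursion for $\Phi_k$ with the fixed choice $p=n/t$ for every $k\le t$, giving the factor $(1+1/t)^{k}\le e$, exactly as the SVRG proof did. Convexity of $\|\cdot\|^2$ (Jensen over the average) then gives
\[
\e\|\bar{\bw}_t-\bar{\bw}_t^{(i)}\|^2\le \frac{8e\alpha(4+t/n)}{n}\eta^2\sum_{k=1}^t\e[L_S(\bw_k)]+16e\alpha\eta^2L(\bw_1).
\]

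\textbf{Step 2 (plug into Lemma~\ref{lem:general-stablity}).} Averaging over $i$, the lemma yields
\[
\frac{\alpha}{\gamma}\e[L_S(\bar{\bw}_t)]+(\alpha+\gamma)\Big[\frac{(4+t/n)\eta^2}{n}\sum_{k=1}^t\e[L_S(\bw_k)]+\eta^2L(\bw_1)\Big],
\]
up to constants.

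\textbf{Step 3 (optimization bounds).} From the proof of Theorem~\ref{thm:opt-cvx}, before Jensen is applied, the telescoped Lyapunov inequality gives
\[
\sum_{k=1}^t\e[L_S(\bw_k)-L_S(\bw_S)]\le \frac{1}{2M\eta}\big(\|\bw_1-\bw_S\|^2+4n\alpha\eta^2(L_S(\bw_1)-L_S(\bw_S))\big).
\]
Taking $\e_S$ and using $\e_S[L_S(\bw_1)]=L(\bw_1)$ (since $\bw_1$ is independent of $S$) yields
\begin{align*}
\sum_{k=1}^t\e[L_S(\bw_k)]&\le tL(\bw^*)+\tfrac{1}{\eta}\big(\e_S\|\bw_1-\bw_S\|^2+n\eta^2L(\bw_1)\big),\\
\e[L_S(\bar{\bw}_t)]&\le L(\bw^*)+\tfrac{1}{\eta t}\big(\e_S\|\bw_1-\bw_S\|^2+n\eta^2L(\bw_1)\big),
\end{align*}
both up to constants depending on $M$ and $\alpha$.

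\textbf{Step 4 (assemble).} Substituting Step~3 into Step~2 gives the first three groups of terms in the statement. Adding the optimization error $\frac{1}{\eta t}(\cdots)$ supplies the last term, with $\alpha$ and $M$ absorbed into $\lesssim$.

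The main obstacle is Step~1: passing from the last-iterate stability of Theorem~\ref{thm:stab-cvx} to the averaged output. This requires checking that the horizon-dependent choice of $p$ in its proof still gives a uniform factor $e$ for all intermediate iterates $k\le t$. Everything else is bookkeeping.
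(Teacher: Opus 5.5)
Your proposal is correct and follows the paper's proof essentially step for step. You bound the averaged iterate's stability via the nondecreasing right-hand side of Theorem~\ref{thm:stab-cvx} plus Jensen, apply Lemma~\ref{lem:general-stablity}, use the telescoped Lyapunov bound \eqref{eq:cvx-opt6} together with $\e[L_S(\bw_S)]\le L(\bw^*)$, and absorb $\alpha$ and $M$ into $\lesssim$, exactly as the paper does. Your Step~1 caveat is unnecessary: Theorem~\ref{thm:stab-cvx} holds for every horizon (each with its own choice of $p$), and its bound is nondecreasing in $t$, which is precisely the monotonicity the paper invokes.
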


To simplify the results in Theorem~\ref{thm:epr-cvx}, we specify the choices of $t,\gamma$, and $\eta$ to derive optimal EPR bounds of order $O\big(\sqrt{1/n}\big)$ for SAGA.
\begin{corollary}\label{co:epr-convex}
Let the assumptions in Theorem \ref{thm:epr-cvx} hold and assume that $\e_S[\|\bw_1-\bw_S\|^2]<\infty$. Then we can take $t\asymp n$, $\eta \asymp \frac{1}{\sqrt{n L(\bw_1)}}$, and $\gamma = \sqrt{n L(\bw_1)}$ to derive
\[
\ebb[L(\bar{\bw}_t)-L(\bw^*)]\lesssim \sqrt{L(\bw_1)/n}.
\]
\end{corollary}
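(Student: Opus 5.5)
We plug the choices $t\asymp n$, $\eta\asymp 1/\sqrt{nL(\bw_1)}$ and $\gamma=\sqrt{nL(\bw_1)}$ into the bound of Theorem~\ref{thm:epr-cvx} and check that each term is $O(\sqrt{L(\bw_1)/n})$. This mirrors the proof of Corollary~\ref{co:svrg-epr-convex}, with $mt$ replaced by $t$ and $m$ replaced by $n$ inside the initialization factor. We use $\e_S[\|\bw_1-\bw_S\|^2]\lesssim 1$ and $L(\bw^*)\le L(\bw_1)$. We also need $\eta<1/(2\alpha)$; this holds once the implied constant in $\eta\asymp 1/\sqrt{nL(\bw_1)}$ is chosen small enough, with $n$ large, or with $L(\bw_1)$ treated as bounded below.

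\textbf{Step 1: the common factor.} The quantity $B:=\e_S[\|\bw_1-\bw_S\|^2]+n\eta^2L(\bw_1)$ appears repeatedly. Since $n\eta^2L(\bw_1)\asymp 1$, we get $B\lesssim 1$. This replaces the SVRG observation $m\eta^2L(\bw_1)\lesssim1$, which there relied on $m\lesssim n$. Here it holds automatically because $\eta^2\asymp 1/(nL(\bw_1))$.

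\textbf{Step 2: term-by-term bounds.}
\begin{itemize}
\item $L(\bw^*)/\gamma\le L(\bw_1)/\sqrt{nL(\bw_1)}=\sqrt{L(\bw_1)/n}$.
\item $B/(\gamma\eta t)\lesssim 1/(\gamma\eta n)=1/n$, and $1/n\lesssim\sqrt{L(\bw_1)/n}$ up to constants, exactly as in \eqref{eq:svrg-cor-epr4}.
\item Since $1+t/n\asymp1$ and $1+\gamma\lesssim\gamma$, we get $(1+\gamma)(1+t/n)\eta^2tL(\bw^*)/n\lesssim\gamma\eta^2L(\bw_1)=\sqrt{L(\bw_1)/n}$.
\item $(1+\gamma)(1+t/n)\eta B/n\lesssim\gamma\eta/n=1/n$.
\item $B/(\eta t)\lesssim 1/(\eta n)=\sqrt{L(\bw_1)/n}$.
\end{itemize}

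\textbf{Step 3: the main obstacle.} The remaining term is $(1+\gamma)\eta^2L(\bw_1)$. It comes from the $16e\alpha\eta^2L(\bw_1)$ term in Theorem~\ref{thm:stab-cvx}, which for SAGA lacks the $1/n$ factor present in the SVRG case. With the chosen parameters it equals $\sqrt{nL(\bw_1)}\cdot L(\bw_1)/(nL(\bw_1))=1/\sqrt{nL(\bw_1)}$. This is not obviously $\lesssim\sqrt{L(\bw_1)/n}$ unless $L(\bw_1)$ is bounded below by a constant.

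I would first revisit the stability bound. Recomputing $\Phi_1$ should give $2\eta^2\|\nabla\ell(\bw_1;\bz_i)-\nabla\ell(\bw_1;\bz_i')\|^2$, whose expectation is at most $16\alpha\eta^2L(\bw_1)$. This matches the theorem, so the term genuinely lacks a $1/n$ factor. Averaging over $i$ in Lemma~\ref{lem:general-stablity} does not help either, because the factor $\frac{1}{2n}\sum_i$ merely averages.

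So I would handle this term by treating $L(\bw_1)$ as a constant of order one, as the corollary's $\lesssim$ implicitly permits. Under that convention, $1/\sqrt{nL(\bw_1)}\asymp\sqrt{L(\bw_1)/n}$. Summing all terms then gives $\ebb[L(\bar{\bw}_t)-L(\bw^*)]\lesssim\sqrt{L(\bw_1)/n}$.
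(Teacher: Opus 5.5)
Your overall route (plug the parameters into Theorem~\ref{thm:epr-cvx} and bound each term by $\sqrt{L(\bw_1)/n}$) is the same as the paper's, and Steps~1--2 are correct. Step~3, however, rests on an arithmetic slip. The correct evaluation is $\sqrt{nL(\bw_1)}\cdot L(\bw_1)/(nL(\bw_1))=\sqrt{nL(\bw_1)}/n=\sqrt{L(\bw_1)/n}$, not $1/\sqrt{nL(\bw_1)}$. You already computed this same quantity correctly in your third bullet, where you wrote $\gamma\eta^2L(\bw_1)=\sqrt{L(\bw_1)/n}$.

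So the ``main obstacle'' does not exist. The $1/n$ factor that the SAGA term $16e\alpha\eta^2L(\bw_1)$ lacks relative to SVRG is already supplied by $\eta^2\asymp 1/(nL(\bw_1))$. Hence $(1+\gamma)\eta^2L(\bw_1)\lesssim\gamma\eta^2L(\bw_1)\asymp\sqrt{L(\bw_1)/n}$, which is exactly how the paper disposes of this term in Eq.~\eqref{eq:epr-cvx7}.

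As written, your conclusion needs the extra hypothesis $L(\bw_1)\asymp 1$, which is not part of the corollary. That hypothesis collapses the bound to $1/\sqrt{n}$ and throws away the dependence on $L(\bw_1)$, which is the point of stating the rate as $\sqrt{L(\bw_1)/n}$ (see the remark after Corollary~\ref{co:svrg-epr-convex} about initializing at a point of small risk).

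Replace Step~3 by the one-line computation above. The proof then goes through under the same implicit conventions the paper uses:
\begin{itemize}
\item $\alpha$ and $\e_S[\|\bw_1-\bw_S\|^2]$ are treated as constants;
\item $nL(\bw_1)\gtrsim 1$, which is needed for both $1+\gamma\lesssim\gamma$ and $1/n\lesssim\sqrt{L(\bw_1)/n}$;
\item the step-size requirement $\eta<1/(2\alpha)$ holds, which you rightly flag and the paper leaves implicit.
\end{itemize}
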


\section{Experimental Results}\label{sec:simulation}
In this section, we conduct numerical experiments on SVRG (Algorithm~\ref{algo: svrg}) and SAGA (Algorithm~\ref{algo: saga}) to support our theoretical findings. Specifically, we consider the binary logistic regression problem, where the loss function is given by
\[
\ell(\bw; \bz):=\log(1+\exp(-y \langle\bw, \bx\rangle)).
\]
Here, $\bz:=(\bx, y)\in \rbb^d\times \{-1, 1\}$ represents the data sample, where $\bx\in\rbb^d$ corresponds to the feature vector and $y\in\{-1,1\}$ represents the binary label.
We use four real datasets from the LIBSVM library \citep{chang2008libsvm}, whose information is summarized in Table~\ref{tab:dataset}. For multi-class datasets, labels are mapped to binary by assigning values in the lower half to $-1$ and those in the upper half to $1$. Feature vectors are normalized to unit $\ell_2$-norm. For each repetition, $80\%$ of each dataset is randomly selected as the training set $S$, and a single randomly chosen example is replaced to form the neighboring dataset $S'$.
To test stability, we initialize both methods at the same point and run them in parallel on $S$ and $S'$, producing coupled iterate sequences: $\{\bx_k^{t}\}_{t,k}$ and $\{\widetilde{\bx}_k^{t}\}_{t,k}$ for SVRG, and $\{\bw_t\}_{t\ge 1}$ and $\{\bw_t'\}_{t\ge 1}$ for SAGA.
We then record the empirical distance $d_k^t := \|\bx_k^{t} - \widetilde{\bx}_k^{t}\|_2$ (SVRG) and $d_t := \|\bw_t - \bw_t'\|_2$ (SAGA) at each iteration, and report the mean and standard deviation over 100 independent repetitions as a function of the number of training epochs (data passes).

\begin{table}[h]
\centering
\begin{tabular}{|c|c|c|c|c|}
\hline
Dataset & $mnist$ & $a9a$ & $w6a$ & $mushrooms$ \\ \hline
$n$ & 60000 & 32561 & 17188 & 8124 \\ \hline
$d$ & 780 & 123 & 300 & 112 \\ \hline
\end{tabular}
\caption{Summary of datasets used in the experiments. Here, $n$ denotes the sample size and $d$ the number of features.}
\label{tab:dataset}
\end{table}

\paragraph{Comparison of SVRG, SAGA, and SGD with different step sizes.}
We first compare the three methods on the $mnist$ dataset with four constant step sizes $\eta\in\{0.01, 0.05, 0.25, 1\}$, running for $8$ epochs.
The inner-loop length of SVRG is set to $m = n$, where $n$ is the number of training samples.
Figure~\ref{fig:mnist_lr} shows the evolution of the empirical distance during training.
We observe that for both SVRG and SAGA, the stability distance grows with the number of epochs and the step size, which is consistent with our stability bounds established in Theorems~\ref{thm:svrg-stab-cvx} and~\ref{thm:stab-cvx}.
Additionally, SVRG and SAGA exhibit similar behavior to SGD, which suggests that their stability properties are comparable to SGD in this setting.

\begin{figure}[t]
    \centering
    \begin{subfigure}[b]{0.325\linewidth}
        \centering
        \includegraphics[width=\linewidth]{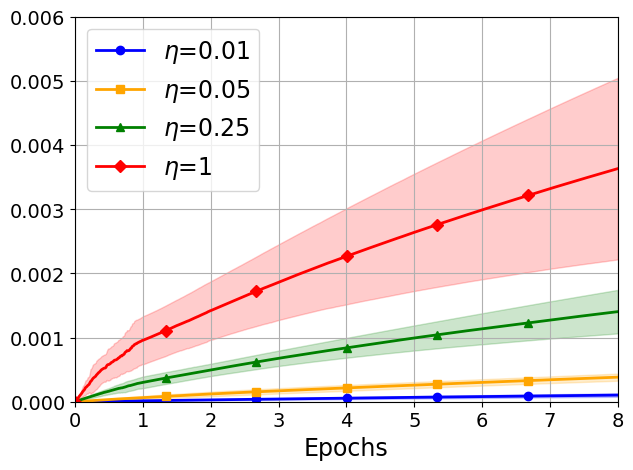}
        \caption{SVRG}
        \label{fig:mnist_svrg_lr}
    \end{subfigure}
    \begin{subfigure}[b]{0.325\linewidth}
        \centering
        \includegraphics[width=\linewidth]{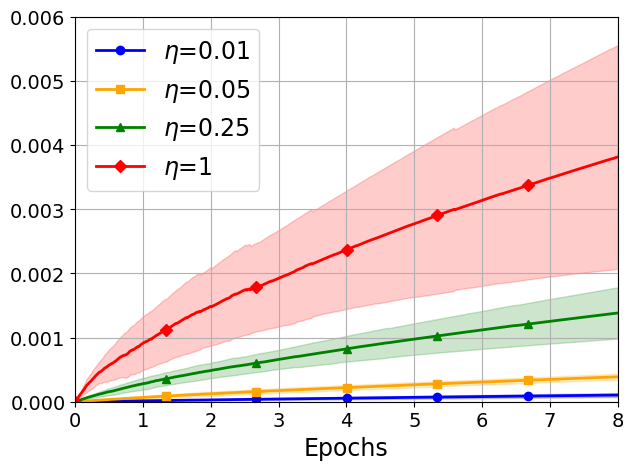}
        \caption{SAGA}
        \label{fig:mnist_saga_lr}
    \end{subfigure}
    \begin{subfigure}[b]{0.325\linewidth}
        \centering
        \includegraphics[width=\linewidth]{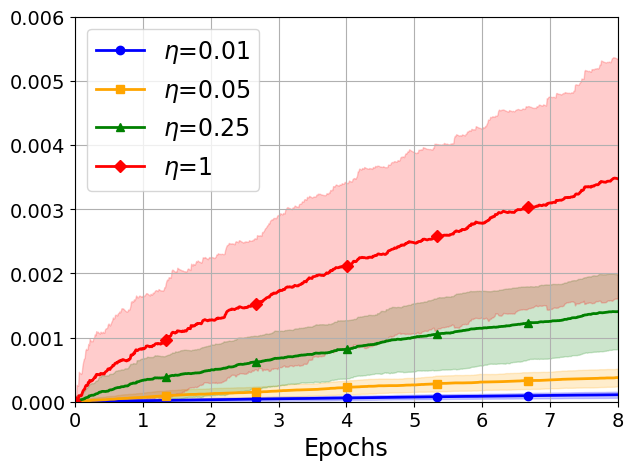}
        \caption{SGD}
        \label{fig:mnist_sgd_lr}
    \end{subfigure}
    \caption{Evolution of empirical distance for SVRG, SAGA, and SGD with different step sizes $\eta\in\{0.01,0.05,0.25,1\}$ on $mnist$ dataset. Shaded regions indicate $\pm 1$ standard deviation over 100 runs.}
    \label{fig:mnist_lr}
\end{figure}

\paragraph{Effect of inner-loop length $m$ in SVRG.}
Next, we investigate the effect of the inner-loop length $m$ on the stability of SVRG. To this end, we fix $\eta = 0.1$ and vary the inner-loop length over $m \in \{0.01n, 0.1n, n, 10n\}$ on the $a9a$ and $w6a$ datasets for $10$ epochs. The remaining settings are the same as those in the previous experiment.
Figure~\ref{fig:svrg_m} reports the mean empirical distance over 100 runs. We observe that the stability of SVRG is relatively robust to the choice of $m$, with slight degradation in performance for large $m$ (e.g., $m=10n$).

\begin{figure}[t]
    \centering
    \begin{subfigure}[b]{0.49\linewidth}
        \centering
        \includegraphics[width=\linewidth]{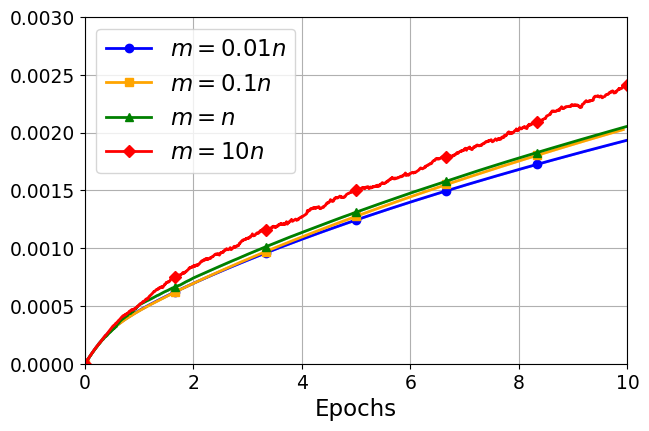}
        \caption{$a9a$}
        \label{fig:a9a_svrg_m}
    \end{subfigure}
    \hfill
    \begin{subfigure}[b]{0.482\linewidth}
        \centering
        \includegraphics[width=\linewidth]{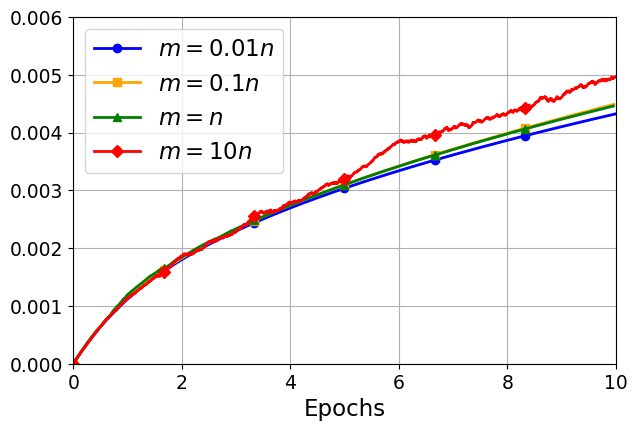}
        \caption{$w6a$}
        \label{fig:w6a_svrg_m}
    \end{subfigure}
    \caption{Evolution of empirical distance for SVRG with fixed step size $\eta=0.1$ and different inner-loop sizes $m\in\{0.01n, 0.1n, n, 10n\}$.}
    \label{fig:svrg_m}
\end{figure}

\paragraph{Empirical distance vs.\ theoretical bound.}
To provide stronger validation of Theorems~\ref{thm:svrg-stab-cvx} and~\ref{thm:stab-cvx}, we conduct additional experiments that directly compare the empirical distance with the theoretical stability bound. To this aim, we run SVRG and SAGA on the $mushrooms$ dataset for $100$ epochs with step size $\eta=0.1$, and inner-loop length $m=n$ for SVRG. All other experimental settings are kept consistent with those in the previous experiment.
Figure~\ref{fig:mushrooms_tb} reports the comparison between the empirical distances and the theoretical bounds.
From the results, we observe that the theoretical bound consistently upper bounds empirical distance throughout the training process. Moreover, the difference decreases as the iteration progresses, indicating that the bound becomes tighter over time.

\begin{figure}[t]
    \centering
    \begin{subfigure}[b]{0.49\linewidth}
        \centering
        \includegraphics[width=\linewidth]{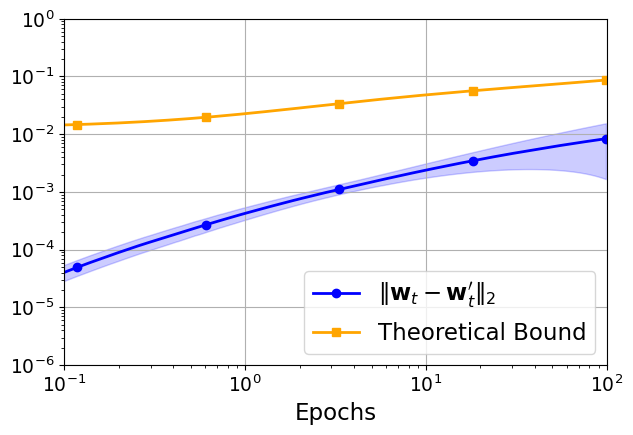}
        \caption{SVRG}
        \label{fig:mushrooms_svrg_tb}
    \end{subfigure}
    \hfill
    \begin{subfigure}[b]{0.485\linewidth}
        \centering
        \includegraphics[width=\linewidth]{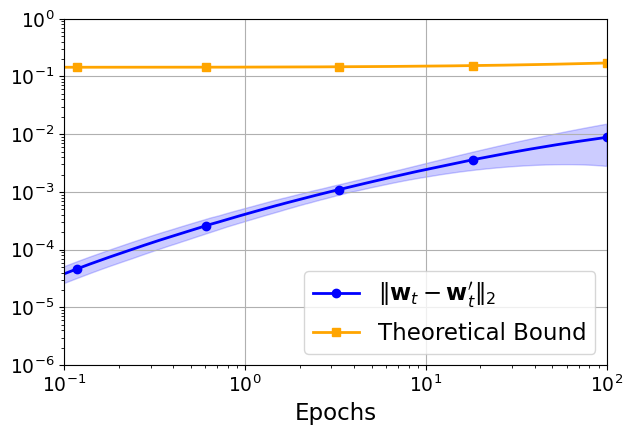}
        \caption{SAGA}
        \label{fig:mushrooms_saga_tb}
    \end{subfigure}
    \caption{Comparison between the empirical distance and the theoretical stability bound established in Theorems~\ref{thm:svrg-stab-cvx} (for SVRG) and~\ref{thm:stab-cvx} (for SAGA), when training logistic regression models on the $mushrooms$ dataset. Shaded regions indicate $\pm 1$ standard deviation over 100 runs.}
    \label{fig:mushrooms_tb}
\end{figure}

\section{Conclusion}\label{sec:conclusion}
In this paper, we develop the learning theory of SVRG, a representative variance reduction (VR) method, by considering both its generalization and convergence behavior. We leverage algorithmic stability to study the generalization and apply techniques from optimization theory to analyze the convergence. By carefully balancing these two components, we further derive optimal excess population risk (EPR) bounds. To overcome the difficulties arising from the complicated algorithmic structure, we introduce several novel techniques, including the reformulation as an SGD update plus a correction term and the construction of tailored Lyapunov functions to derive recursive inequalities in both the stability and convergence analyses. The proposed approach is not specific to SVRG, and we demonstrate its generality by extending the framework to SAGA.

There remain several interesting problems worthy of further investigation. First, we consider convex and strongly convex problems in this paper. A natural extension would be to investigate SVRG and SAGA for nonconvex problems. For instance, applying our framework to analyze the training of neural networks with these methods, potentially by leveraging the weak convexity of the associated loss functions, appears to be a promising research direction. Second, while we focus on SVRG and SAGA here, it would be valuable to extend our analytical techniques to other VR methods. Finally, the EPR bounds developed in this work hold in expectation. Developing high-probability guarantees would provide a deeper understanding of the robustness of these VR methods.

\section*{Acknowledgements}
The work of Y. Lei was supported by the Hong Kong Research Grants Council under the GRF projects 17302624 and 17305425. The work of Z. Wang was supported by the Wallenberg AI, Autonomous Systems and Software Program (WASP) funded by the Knut and Alice Wallenberg Foundation. The work of X. Yuan was supported by the Hong Kong Research Grants Council under the GRF project 17309824.

\appendix
\bigskip\bigskip
{\noindent\Large \bf Appendix}

\section{Proofs for Section~\ref{sec:saga}}\label{sec:saga-proofs}
This appendix contains the proofs of all results stated for SAGA in Section~\ref{sec:saga}.
\subsection{Proof of Stability Bounds\label{sec:proof-stab-cvx}}
\begin{proof}[Proof of Theorem~\ref{thm:stab-cvx}]
Recall the definition of $G_t$ and $d_t$ in Eq.~\eqref{eq:def-G-d}, and similarly denote
\[
    G_t^{(i)}:=\bw_t^{(i)}-\eta \nabla \ell(\bw_t^{(i)}; \bz_{i_t}^{(i)}), \quad d_t^{(i)}:=\nabla \ell(\phi_{t, i_t}^{(i)}; \bz_{i_t}^{(i)})-\frac{1}{n} \sum_{j=1}^n \nabla \ell(\phi_{t, j}^{(i)}; \bz_j^{(i)}),
\]
where $\bz_j^{(i)}$ represents the $j$-th data sample in $S^{(i)}$.
Then from the update formula of SAGA, we have
\begin{equation*}
 \bw_{t+1}=G_t+\eta d_t,\quad\bw_{t+1}^{(i)}=G_t^{(i)}+\eta d_t^{(i)}.
\end{equation*}
Therefore, we obtain by following similar arguments as Eq.~\eqref{eq:svrg-stab-expand}-\eqref{eq:svrg-stab1} that
\begin{equation}\label{eq:cvx-stab2}
    \e[\|\bw_{t+1}-\bw_{t+1}^{(i)}\|^2]\leq \e[\|G_t-G_t^{(i)}\|^2]+2 \eta^2\e[\|d_t-d_t^{(i)}\|^2]+\eta^2\e[\|\nabla \ell(\bw_t; \bz_{i_t})-\nabla \ell(\bw_t^{(i)}; \bz_{i_t}^{(i)})\|^2].
\end{equation}
To bound the square term $\e[\|d_t-d_t^{(i)}\|^2]$, we know from the definitions of $d_t$ and $d_t^{(i)}$ that
\begin{multline*}
  \|d_t-d_t^{(i)}\|^2 =\|\nabla \ell(\phi_{t, i_t}; \bz_{i_t})-\nabla \ell(\phi_{t, i_t}^{(i)}; \bz_{i_t}^{(i)})\|^2+\Big\|\frac{1}{n} \sum_{j=1}^n\big(\nabla \ell(\phi_{t, j}; \bz_j)-\nabla \ell(\phi_{t, j}^{(i)}; \bz_j^{(i)})\big)\Big\|^2 \\
  -2\Big\langle\nabla \ell(\phi_{t, i_t}; \bz_{i_t})-\nabla \ell(\phi_{t, i_t}^{(i)}; \bz_{i_t}^{(i)}), \frac{1}{n} \sum_{j=1}^n\big(\nabla \ell(\phi_{t, j}; \bz_j)-\nabla \ell(\phi_{t, j}^{(i)}; \bz_j^{(i)})\big)\Big\rangle.
\end{multline*}
To bound the last term in the above equality, note that
\begin{align*}
  &\e_t\Big[\Big\langle\nabla \ell(\phi_{t, i_t}; \bz_{i_t})-\nabla \ell(\phi_{t, i_t}^{(i)}; \bz_{i_t}^{(i)}), \frac{1}{n} \sum_{j=1}^n\big(\nabla \ell(\phi_{t, j}; \bz_j)-\nabla \ell(\phi_{t, j}^{(i)}; \bz_j^{(i)})\big)\Big\rangle\Big]\\
  =&\Big\|\frac{1}{n} \sum_{j=1}^n(\nabla \ell(\phi_{t, j}; \bz_j)-\nabla \ell(\phi_{t, j}^{(i)}; \bz_j^{(i)}))\Big\|^2.
\end{align*}
It then follows that
\begin{equation}\label{eq:cvx-stab-dt}
\e_t[\|d_t-d_t^{(i)}\|^2] \!\leq\! \e_t[\|\nabla \ell(\phi_{t, i_t}; \bz_{i_t})\!-\!\nabla \ell(\phi_{t, i_t}^{(i)}; \bz_{i_t}^{(i)})\|^2] \!=\!\frac{1}{n} \sum_{j=1}^n\|\nabla \ell(\phi_{t, j}; \bz_j)\!-\!\nabla \ell(\phi_{t, j}^{(i)}; \bz_j^{(i)})\|^2.
\end{equation}
Moreover, following similar arguments for the derivation of Eq.~\eqref{eq:svrg-G-square}, we have for any $p>0$ that
\begin{multline}\label{eq:cvx-stab-Gt}
    \e[\|G_t-G_t^{(i)}\|^2] \leq (1+p / n) \e[\|\bw_t-\bw_t^{(i)}\|^2]\\-\frac{1}{n}(\frac{2 \eta}{\alpha}-\eta^2) \sum_{j \neq i} \e[\|\nabla \ell(\bw_t; \bz_j)-\nabla \ell(\bw_t^{(i)}; \bz_j)\|^2] +\frac{1+1 / p}{n} \eta^2 \e[\|\nabla \ell(\bw_t; \bz_i)-\nabla \ell(\bw_t^{(i)}; \bz_i^{\prime})\|^2].
\end{multline}
Applying Eq.~\eqref{eq:cvx-stab-dt} and Eq.~\eqref{eq:cvx-stab-Gt} in Eq.~\eqref{eq:cvx-stab2} further gives
\begin{multline}\label{eq:cvx-stab3}
    \e[\|\bw_{t+1}\!-\!\bw_{t+1}^{(i)}\|^2] \!\leq\!\big(1\!+\!\frac{p}{n}\big) \e[\|\bw_t\!-\!\bw_t^{(i)}\|^2]\!-\!\frac{2}{n}(\frac{\eta}{\alpha}\!-\!\eta^2) \!\sum_{j \neq i}\! \e[\|\nabla \ell(\bw_t; \bz_j)\!-\!\nabla \ell(\bw_t^{(i)}; \bz_j)\|^2] \\+\frac{2+1 / p}{n} \eta^2 \e[\|\nabla \ell(\bw_t; \bz_i)\!-\!\nabla \ell(\bw_t^{(i)}; \bz_i^{\prime})\|^2]+\frac{2 \eta^2}{n} \sum_{j=1}^n \e[\|\nabla \ell(\phi_{t, j}; \bz_j)-\nabla \ell(\phi_{t, j}^{(i)}; \bz_j^{(i)})\|^2].
\end{multline}
From the update rule, we have for all $j\in[n]$ that
\begin{equation}\label{eq:phi_update}
   \phi_{t+1, j}= \begin{cases}\bw_t, \quad\text { with prob } \frac{1}{n}, \\ \phi_{t, j}, \quad \text { with prob } 1-\frac{1}{n},\end{cases}
\end{equation}
which implies that
\begin{multline}
    \e\Big[\sum_{j=1}^n\|\nabla \ell(\phi_{t+1, j}; \bz_j)-\nabla \ell(\phi_{t+1,j}^{(i)}; \bz_j^{(i)})\|^2\Big]\\
    = \frac{1}{n} \e\Big[\sum_{j=1}^n\|\nabla \ell(\bw_t; \bz_j)-\nabla \ell(\bw_t^{(i)}; \bz_j^{(i)})\|^2\Big]+\frac{n-1}{n} \e\Big[\sum_{j=1}^n\|\nabla \ell(\phi_{t, j}; \bz_j)-\nabla \ell(\phi_{t, j}^{(i)}; \bz_j^{(i)})\|^2\Big]\label{eq:cvx-stab-phi}.
\end{multline}
Applying Eq.~\eqref{eq:cvx-stab3} + Eq.~\eqref{eq:cvx-stab-phi} $\times 2 \eta^2$, we obtain
\begin{align}
&\e\Big[\|\bw_{t+1}-\bw_{t+1}^{(i)}\|^2+2 \eta^2 \sum_{j=1}^n\|\nabla \ell(\phi_{t+1, j}; \bz_j)-\nabla \ell(\phi_{t+1,j}^{(i)}; \bz_j^{(i)})\|^2\Big]\notag \\
\leq & (1+p / n) \e[\|\bw_t-\bw_t^{(i)}\|^2]-\frac{2}{n}(\frac{\eta}{\alpha}-2 \eta^2) \sum_{j \neq i} \e[\|\nabla \ell(\bw_t; \bz_j)-\nabla \ell(\bw_t^{(i)}; \bz_j)\|^2]\notag \\
& +\frac{4\!+\!1 / p}{n} \eta^2 \e[\|\nabla \ell(\bw_t; \bz_i)\!-\!\nabla \ell(\bw_t^{(i)}; \bz_i^{\prime})\|^2]+2 \eta^2 \sum_{j=1}^n \e[\|\nabla \ell(\phi_{t,j}; \bz_j)-\nabla \ell(\phi_{t,j}^{(i)}; \bz_j^{(i)})\|^2]\label{eq:cvx-stab4}.
\end{align}
Since $\eta \leq \frac{1}{2 \alpha}$, we know $\eta / \alpha-2 \eta^2 \ge 0$. Moreover, by similar arguments as Eq.~\eqref{eq:svrg-8alpha}, we obtain
\begin{equation}\label{eq:cvx-stab-i}
\e[\|\nabla \ell(\bw_t; \bz_{i})-\nabla \ell(\bw_t^{(i)}; \bz_i^{\prime})\|^2] \leq 8\alpha\ebb[L_S(\bw_{t})].
\end{equation}
Then plugging Eq.~\eqref{eq:cvx-stab-i} into Eq.~\eqref{eq:cvx-stab4} leads to
\begin{multline}
\e\Big[\|\bw_{t+1}-\bw_{t+1}^{(i)}\|^2+2 \eta^2 \sum_{j=1}^n\|\nabla \ell(\phi_{t+1, j}; \bz_j)-\nabla \ell(\phi_{t+1,j}^{(i)}; \bz_j^{(i)})\|^2\Big]
\leq  \\(1+p / n) \e[\|\bw_t-\bw_t^{(i)}\|^2]+2 \eta^2 \sum_{j=1}^n \e[\|\nabla \ell(\phi_{t,j}; \bz_j)-\nabla \ell(\phi_{t,j}^{(i)}; \bz_j^{(i)})\|^2]+\frac{8 \alpha(4+1 / p)}{n} \eta^2 \e[L_S(\bw_t)]\label{eq:cvx-stab5}.
\end{multline}
Recall the Lyapunov function $\Phi_t$ defined by Eq.~\eqref{eq:saga-lyapunov}.
Then it follows from Eq.~\eqref{eq:cvx-stab5} that
\begin{equation}\label{eq:cvx-stab5-1}
    \e[\Phi_{t+1}] \leq(1+p / n) \e[\Phi_t]+\frac{8 \alpha(4+1 / p)}{n} \eta^2 \e[L_S(\bw_t)].
\end{equation}
Hence, applying Eq.~\eqref{eq:cvx-stab5-1} recursively gives
\begin{equation}\label{eq:cvx-stab6}
\e[\Phi_{t+1}] \leq \sum_{k=1}^t(1+p / n)^{t-k} \frac{8 \alpha(4+1 / p)}{n} \eta^2 \e[L_S(\bw_k)]+(1+p / n)^t \e[\Phi_1].
\end{equation}
Let $p=n/t$. Then we know that $(1+p / n)^{t-k} \leq (1+p / n)^t = (1+1 / t)^t \leq e, \forall k \ge 1$. Moreover, from $\bw_1=\bw_1^{(i)}$ and $\phi_{1, j}=\bw_1,~\phi_{1, j}^{(i)}=\bw_1^{(i)}$ for all $j\in[n]$, we know that
\[
\Phi_1=2 \eta^2 \sum_{j=1}^n \| \nabla \ell(\bw_1; \bz_j)-\nabla \ell(\bw_1; \bz_j^{(i)})\|^2=2 \eta^2 \| \nabla \ell(\bw_1; \bz_i)-\nabla \ell(\bw_1; \bz'_i)\|^2,
\]
which together with Eq.~\eqref{eq:cvx-stab-i} implies that
\[
\e[\Phi_1] \leq 16\alpha\eta^2\e[L_S(\bw_1)]=16\alpha\eta^2L(\bw_1).
\]
Therefore, we obtain from Eq.~\eqref{eq:cvx-stab6} that
\begin{equation*}
  \e[\|\bw_{t+1}-\bw_{t+1}^{(i)}\|^2] \leq \e[\Phi_{t+1}] \leq \frac{8e\alpha (4+t / n)}{n} \eta^2 \sum_{k=1}^t \e[L_S(\bw_k)]+16e\alpha\eta^2L(\bw_1),
\end{equation*}
which completes the proof.
\end{proof}

\subsection{Proof of Convergence Rates\label{sec:proof-conv-cvx}}
We now turn to convergence analysis. To this aim, we introduce a different Lyapunov function.
\begin{proof}[Proof of Theorem~\ref{thm:opt-cvx}]
    From the update formula $\bw_{t+1}=\bw_t-\eta g_t$, we can expand $\|\bw_{t+1}-\bw_S\|^2$ via
\begin{align}
\|\bw_{t+1}-\bw_S\|^2 & =\|\bw_{t+1}-\bw_t\|^2+\|\bw_t-\bw_S\|^2+2\langle \bw_{t+1}-\bw_t, \bw_t-\bw_S\rangle \notag\\
& =\eta^2\|g_t\|^2+\|\bw_t-\bw_S\|^2-2 \eta\langle \bw_t-\bw_S, g_t\rangle\label{eq:cvx-opt1},
\end{align}
where $g_t=\nabla \ell(\bw_t; \bz_{i_t})-\nabla \ell(\phi_{t, i_t}; \bz_{i_t})+\frac{1}{n} \sum_{j=1}^n \nabla \ell(\phi_{t, j}; \bz_j)$.
Using similar arguments as Eq.~\eqref{eq:svrg-opt-1-1} (with $\bx_k$ replaced by $\bw_t$), we obtain
\begin{equation}\label{eq:cvx-opt-1}
\e_t[\langle \bw_t-\bw_S, g_t\rangle] \ge L_S(\bw_t)-L_S(\bw_S)+\frac{1}{2\alpha n}\sum_{j=1}^n \|\nabla \ell(\bw_t;\bz_j)-\nabla \ell(\bw_S;\bz_j)\|^2.
\end{equation}
To bound $\|g_t\|^2$, from $\nabla L_S(\bw_S)=0$ and $\e_t[g_t]=\nabla L_S(\bw_t)$, we know that
\begin{align}
    \e_t[\|g_t\|^2] =&\e_t\Big[\!\Big\|\nabla \ell(\bw_t; \bz_{i_t})\!-\!\nabla \ell(\bw_S;\bz_{i_t})\!-\!\nabla \ell(\phi_{t, i_t}; \bz_{i_t})\!+\!\nabla \ell(\bw_S;\bz_{i_t})\!+\!\frac{1}{n} \sum_{j=1}^n \nabla \ell(\phi_{t, j}; \bz_j)\Big\|^2\!\Big]\notag\\
     \leq & 2\e_t[\|\nabla \ell(\bw_t; \bz_{i_t})-\nabla \ell(\bw_S;\bz_{i_t})\|^2]\notag\\
     &+2\e_t\Big[\Big\|\nabla \ell(\phi_{t, i_t}; \bz_{i_t})\!-\!\nabla \ell(\bw_S;\bz_{i_t})\!-\!\Big[\frac{1}{n}\! \sum_{j=1}^n\! \nabla \ell(\phi_{t, j}; \bz_j)\!-\!\nabla L_S(\bw_S)\Big]\Big\|^2\Big]\notag\\
    \leq & 2\e_t[\|\nabla \ell(\bw_t; \bz_{i_t})-\nabla \ell(\bw_S;\bz_{i_t})\|^2]+2\e_t[\|\nabla \ell(\phi_{t, i_t}; \bz_{i_t})-\nabla \ell(\bw_S;\bz_{i_t})\|^2]\notag\\
    = & \frac{2}{n} \sum_{j=1}^n\|\nabla \ell(\bw_t; \bz_j)-\nabla \ell(\bw_S; \bz_j)\|^2+\frac{2}{n} \sum_{j=1}^n\|\nabla \ell(\phi_{t, j}; \bz_j)-\nabla \ell(\bw_S; \bz_j)\|^2\label{eq:cvx-opt-1-1},
\end{align}

Taking expectations over both sides of Eq.~\eqref{eq:cvx-opt1}, and applying Eq.~\eqref{eq:cvx-opt-1} and Eq.~\eqref{eq:cvx-opt-1-1} gives %
\begin{align}
    \e_A[\|\bw_{t+1}-\bw_S\|^2] \leq&\e_A[\|\bw_t-\bw_S\|^2]+\e_A\Big[\frac{2\eta^2}{n} \sum_{j=1}^n\|\nabla \ell(\phi_{t, j}; \bz_j)-\nabla \ell(\bw_S; \bz_j)\|^2\Big]\notag\\
    &+\e_A\Big[\frac{2\eta^2-\eta/\alpha}{n} \sum_{j=1}^n\|\nabla \ell(\bw_t; \bz_j)-\nabla \ell(\bw_S; \bz_j)\|^2\Big]-2 \eta\e_A[L_S(\bw_t)-L_S(\bw_S)].\label{eq:cvx-opt2}
\end{align}
On the other hand, we know from the update rule Eq.~\eqref{eq:phi_update} that
\begin{align}
& \e_t\Big[\sum_{j=1}^n\|\nabla \ell(\phi_{t+1, j}; \bz_j)-\nabla \ell(\bw_S; \bz_j)\|^2\Big]\notag \\
= & \frac{1}{n} \sum_{j=1}^n\|\nabla \ell(\bw_t; \bz_j)-\nabla \ell(\bw_S; \bz_j)\|^2+\frac{n-1}{n} \sum_{j=1}^n\|\nabla \ell(\phi_{t, j}; \bz_j)-\nabla \ell(\bw_S; \bz_j)\|^2.\label{eq:cvx-opt-3}
\end{align}
Multiplying both sides of Eq.~\eqref{eq:cvx-opt-3} by $2\eta^2$ and adding it to Eq.~\eqref{eq:cvx-opt2}, we obtain
\begin{align}
& \e_A\Big[\|\bw_{t+1}-\bw_S\|^2+2 \eta^2 \sum_{j=1}^n\|\nabla \ell(\phi_{t+1, j}; \bz_j)-\nabla \ell(\bw_S; \bz_j)\|^2\Big]\notag \\
\leq & \e_A\Big[\|\bw_t-\bw_S\|^2+2 \eta^2 \sum_{j=1}^n\|\nabla \ell(\phi_{t, j}; \bz_j)-\nabla \ell(\bw_S; \bz_j)\|^2\Big]\notag \\
& +\e_A\Big[\frac{4\eta^2-\eta/\alpha}{n} \sum_{j=1}^n\|\nabla \ell(\bw_t; \bz_j)-\nabla \ell(\bw_S; \bz_j)\|^2-2 \eta\big(L_S(\bw_t)-L_S(\bw_S)\big)\Big].\label{eq:cvx-opt3}
\end{align}

Define the Lyapunov function $V_t$ via
\[
V_t:=\|\bw_t-\bw_S\|^2+2 \eta^2 \sum_{j=1}^n\|\nabla \ell(\phi_{t, j}; \bz_j)-\nabla \ell(\bw_S; \bz_j)\|^2.
\]
Next, we discuss two cases for the range of $0<\eta<\frac{1}{2\alpha}$. Firstly, consider $\eta\leq\frac{1}{4\alpha}$, then it follows that $4\eta^2-\eta/\alpha\leq 0$. Therefore, we derive from Eq.~\eqref{eq:cvx-opt3} that
\begin{equation}\label{eq:cvx-opt-case1}
    2 \eta \e_A[L_S(\bw_t)-L_S(\bw_S)] \leq \e_A[V_t]-\e_A[V_{t+1}].
\end{equation}
Otherwise if $\frac{1}{4\alpha}<\eta<\frac{1}{2\alpha}$, letting $\bx := \bw_t$ into Eq.~\eqref{eq:svrg-opt5-1} yields
\begin{equation}\label{eq:cvx-opt-4}
\frac{1}{n} \sum_{j=1}^n\|\nabla \ell(\bw_t; \bz_j)-\nabla \ell(\bw_S; \bz_j)\|^2 \leq 2 \alpha(L_S(\bw_t)-L_S(\bw_S)).
\end{equation}
Plugging Eq.~\eqref{eq:cvx-opt-4} into Eq.~\eqref{eq:cvx-opt3} leads to
\begin{equation}\label{eq:cvx-opt-case2}
4 \eta(1-2\alpha \eta) \e_A[L_S(\bw_t)-L_S(\bw_S)] \leq \e_A[V_t]-\e_A[V_{t+1}].
\end{equation}
Combining the above two cases Eq.~\eqref{eq:cvx-opt-case1} and Eq.~\eqref{eq:cvx-opt-case2}, together with the definition of $M$ in Eq.~\eqref{def:M}, we derive
\begin{equation}\label{eq:cvx-opt5}
    2M\eta \e_A[L_S(\bw_t)-L_S(\bw_S)] \leq \e_A[V_t]-\e_A[V_{t+1}].
\end{equation}
Summing over Eq.~\eqref{eq:cvx-opt5} from $1$ to $t$ gives
\begin{equation}\label{eq:cvx-opt6}
2 M \eta\sum_{k=1}^t \e_A[L_S(\bw_k)-L_S(\bw_S)] \leq V_1-\e_A[V_{t+1}] \leq V_1.
\end{equation}
To estimate $V_1$, since $\phi_{1,j}=\bw_1$ for all $j\in[n]$, we know from Eq.~\eqref{eq:cvx-opt-4} that
\begin{align*}
V_1 & =\|\bw_1-\bw_S\|^2+2 \eta^2 \sum_{j=1}^n\|\nabla \ell(\bw_1; \bz_j)-\nabla \ell(\bw_S; \bz_j)\|^2 \\
& \leq\|\bw_1-\bw_S\|^2+4 n \alpha \eta^2(L_S(\bw_1)-L_S(\bw_S)).
\end{align*}
Therefore, it follows from Eq.~\eqref{eq:cvx-opt6} and the convexity of $L_S$ that
\begin{align}
\e_A[L_S(\bar{\bw}_t)]-L_S(\bw_S) \leq &\frac{1}{t}\sum_{k=1}^t \e_A[L_S(\bw_k)-L_S(\bw_S)]\notag\\
\leq & \frac{1}{2M \eta t}\big[\|\bw_1-\bw_S\|^2+4 n \alpha \eta^2(L_S(\bw_1)-L_S(\bw_S))\big] \label{eq:cvx-opt7},
\end{align}
which completes the proof.
\end{proof}

\subsection{Proofs of EPR Bounds\label{sec:proof-exc-cvx}}
We combine the previous results on stability and convergence to derive EPR bounds in Theorem~\ref{thm:epr-cvx}.
\begin{proof}[Proof of Theorem~\ref{thm:epr-cvx}]
It follows from Theorem \ref{thm:stab-cvx} that
\begin{equation}\label{eq:epr-cvx1}
    \e[\|\bw_{t+1}-\bw_{t+1}^{(i)}\|^2] \leq \frac{8e\alpha (4+t / n)}{n} \eta^2 \sum_{k=1}^t \e[L_S(\bw_k)]+16e\alpha\eta^2L(\bw_1).
\end{equation}
Note that the RHS of Eq.~\eqref{eq:epr-cvx1} is an increasing function in terms of $t$. Then, from the convexity of $\|\cdot\|^2$ we know
\begin{equation}\label{eq:epr-cvx2}
    \e[\|\bar{\bw}_t-\bar{\bw}_t^{(i)}\|^2]\leq\frac{1}{t}\sum_{k=1}^t\e[\|\bw_k-\bw_k^{(i)}\|^2]\leq \frac{8e\alpha (4+t / n)}{n} \eta^2 \sum_{k=1}^t \e[L_S(\bw_k)]+16e\alpha\eta^2L(\bw_1).
\end{equation}
Applying Lemma \ref{lem:general-stablity} and plugging in Eq.~\eqref{eq:epr-cvx2}, we know that
\begin{align}
\e[L(\bar{\bw}_t)-&L_S(\bar{\bw}_t)] \leq  \frac{\alpha}{\gamma} \e[L_S(\bar{\bw}_t)]+\frac{\alpha+\gamma}{2 n} \sum_{i=1}^n \e[\|\bar{\bw}_t-\bar{\bw}_t^{(i)}\|^2] \notag\\
\leq & \frac{\alpha}{\gamma} \e[L_S(\bar{\bw}_t)]+\frac{4(\alpha+\gamma) e \alpha(4+t / n)}{n} \eta^2 \sum_{k=1}^t \e[L_S(\bw_k)]+8(\alpha+\gamma) e \alpha \eta^2 L(\bw_1)\label{eq:epr-cvx3}.
\end{align}
Next we bound the term involving $\e[L_S(\bar{\bw}_t)]$ and $\sum_{k=1}^t \e[L_S(\bw_k)]$. Since $\bw_S$ is a minimizer of $L_S$, we obtain
\begin{equation}\label{eq:epr-cvx2-0}
    L(\bw^*) = \e[L_S(\bw^*)] \ge \e[L_S(\bw_S)].
\end{equation}
Note that $\e[L_S(\bw_1)]=L(\bw_1)$ and $L_S(\bw_S)\ge 0$ since the loss function $\ell$ is nonnegative. Then,  combining Eq.~\eqref{eq:cvx-opt7} and Eq.~\eqref{eq:epr-cvx2-0} leads to
\begin{equation}\label{eq:epr-cvx4}
\e[L_S(\bar{\bw}_t)] \leq L(\bw^*)+\frac{1}{2M\eta t}\big(\e_S[\|\bw_1-\bw_S\|^2]+4 n \alpha \eta^2 L(\bw_1)\big)
\end{equation}
and
\begin{equation}\label{eq:epr-cvx5}
\sum_{k=1}^t \e[L_S(\bw_k)] \leq t L(\bw^*)+\frac{1}{2M\eta}\big(\e_S[\|\bw_1-\bw_S\|^2]+4 n \alpha \eta^2 L(\bw_1)\big).
\end{equation}
Plugging Eq.~\eqref{eq:epr-cvx4} and Eq.~\eqref{eq:epr-cvx5} into Eq.~\eqref{eq:epr-cvx3} and omitting the factor $M\asymp1$ gives
\begin{align}
&\e[L(\bar{\bw}_t)-L_S(\bar{\bw}_t)] \lesssim \frac{1}{\gamma}\Big(L(\bw^*)+\frac{1}{\eta t}\big(\e_S[\|\bw_1-\bw_S\|^2]+n \eta^2L(\bw_1)\big)\Big)\notag \\
+ & \frac{(1\!+\!\gamma)(1\!+\!t / n)}{n} \eta^2\Big(t L(\bw^*)\!+\!\frac{1}{\eta}\big(\e_S[\|\bw_1\!-\!\bw_S\|^2]\!+\!n \eta^2L(\bw_1)\big)\Big)\!+\! (1\!+\!\gamma) \eta^2 L(\bw_1)\label{eq:epr-cvx-gen}.
\end{align}
Combining the generalization error in Eq.~\eqref{eq:epr-cvx-gen} and optimization error in Eq.~\eqref{eq:epr-cvx4} leads to
\begin{align*}
\e[L(\bar{\bw}_t)]-L(\bw^*) = &\e[L(\bar{\bw}_t)-L_S(\bar{\bw}_t)] + \e[L_S(\bar{\bw}_t)-L(\bw^*)]\notag\\
\lesssim & \frac{1}{\gamma}\Big(L(\bw^*)+\frac{1}{\eta t}\big(\e_S[\|\bw_1-\bw_S\|^2]+n \eta^2L(\bw_1)\big)\Big) \notag\\
& + \!\frac{(1\!+\!\gamma)(1\!+\!t / n)}{n} \eta^2\Big(t L(\bw^*)\!+\!\frac{1}{\eta}\big(\e_S[\|\bw_1\!-\!\bw_S\|^2]\!+\!n \eta^2L(\bw_1)\big)\Big) \notag \\
& + (1+\gamma) \eta^2 L(\bw_1)+\frac{1}{\eta t}\big(\e_S[\|\bw_1-\bw_S\|^2]+n \eta^2L(\bw_1)\big),%
\end{align*}
which completes the proof.
\end{proof}
We now specialize the parameters in Theorem~\ref{thm:epr-cvx} to prove Corollary~\ref{co:epr-convex}.
\begin{proof}[Proof of Corollary~\ref{co:epr-convex}]
Firstly, it follows from Theorem \ref{thm:epr-cvx} and boundedness of $\e_S[\|\bw_1-\bw_S\|^2]$ that
\begin{multline}\label{eq:epr-cvx6}
    \e[L(\bar{\bw}_t)]-L(\bw^*) \lesssim \frac{L(\bw^*)}{\gamma}+\frac{1}{\gamma\eta t}\big(1+n \eta^2 L(\bw_1)\big) + \frac{(1+\gamma)(1+t / n)\eta^2 t L(\bw^*)}{n}\\
+ \frac{(1+\gamma)(1+t / n)\eta}{n}\big(1+n \eta^2 L(\bw_1)\big) + (1+\gamma) \eta^2 L(\bw_1)+\frac{1}{\eta t}\big(1+n \eta^2 L(\bw_1)\big).
\end{multline}
From the condition on $\gamma$ and $\eta$, and noting that $L(\bw^*) \leq L(\bw_1)$, we have
\begin{equation}\label{eq:epr-cvx7}
    \frac{L(\bw^*)}{\gamma} \leq \frac{L(\bw_1)}{\gamma}\lesssim \sqrt{\frac{L(\bw_1)}{n}},\quad (1+\gamma) \eta^2 L(\bw_1) \lesssim \sqrt{n L(\bw_1)} \cdot\frac{1}{n L(\bw_1)}\cdot L(\bw_1)\lesssim \sqrt{\frac{L(\bw_1)}{n}}.
\end{equation}
Furthermore, we know from the condition $t\asymp n$ that
\begin{equation}\label{eq:epr-cvx8}
    \frac{(1+\gamma)(1+t / n)\eta^2 t L(\bw^*)}{n}\lesssim\frac{\sqrt{nL(\bw_1)}\cdot 1\cdot \frac{1}{nL(\bw_1)} \cdot n L(\bw_1)}{n} =\sqrt{\frac{L(\bw_1)}{n}}.
\end{equation}

To bound the remaining terms, note that
\[
1+n \eta^2 L(\bw_1)\lesssim 1+n\cdot\frac{1}{nL(\bw_1)}\cdot L(\bw_1)\lesssim 1.
\]
Therefore, we can derive
\begin{align}
&\frac{1}{\gamma\eta t}\big(1+n \eta^2 L(\bw_1)\big)\lesssim\frac{1}{\gamma\eta t} \lesssim \frac{1}{n}\lesssim\sqrt{\frac{L(\bw_1)}{n}},\label{eq:epr-cvx9}\\
&\frac{(1+\gamma)(1+t / n)\eta}{n}\big(1+n \eta^2 L(\bw_1)\big)\lesssim\frac{(1+\gamma)(1+t / n)\eta}{n}\lesssim\frac{1}{n}\lesssim\sqrt{\frac{L(\bw_1)}{n}},\label{eq:epr-cvx10}\\
&\frac{1}{\eta t}\big(1+n \eta^2 L(\bw_1)\big)\lesssim\frac{1}{\eta t}\lesssim\frac{1}{n/\sqrt{nL(\bw_1)}}=\sqrt{\frac{L(\bw_1)}{n}}.\label{eq:epr-cvx11}
\end{align}
Plugging Eq.~\eqref{eq:epr-cvx7}-Eq.~\eqref{eq:epr-cvx11} into Eq.~\eqref{eq:epr-cvx6} gives
\[
\ebb[L(\bar{\bw}_t)-L(\bw^*)]\lesssim \sqrt{L(\bw_1)/n},
\]
which completes the proof.
\end{proof}

\section{Analysis under Strong Convexity}\label{app:sc}
This section presents the strongly convex analysis for both SVRG and SAGA, including theorem statements and their proofs. In this setting, we assume the loss function $\ell(\cdot;\bz)$ is nonnegative, $\mu$-strongly convex, and $\alpha$-smooth for all $\bz\in\zcal$, which enables tighter stability and EPR bounds than the convex case.

\subsection{Analysis of SVRG}\label{app:svrg-sc}
In the strongly convex setting, we implement Algorithm~\ref{algo: svrg} with option~\uppercase\expandafter{\romannumeral 2}, which initializes each inner loop with the reference point to facilitate linear convergence of SVRG~\citep{johnson2013accelerating}.

We first present the stability bound of SVRG in the following theorem, which improves upon the convex case (Theorem~\ref{thm:svrg-stab-cvx}) by involving a potential contraction factor of $(c-1)^{-(t-l)}$ for the empirical error $\e[L_S(\bw_{l+1})]$. This happens because $\mu$-strong convexity implies an improved monotonicity property:
\[
\langle \bw-\bw', \nabla \ell(\bw; \bz)-\nabla \ell(\bw';\bz)\rangle \ge \mu\|\bw-\bw'\|^2,~\forall \bw,\bw'\in\wcal.
\]
This property helps establish a tighter estimate for $\e[\|G_k-G_k^{(i)}\|^2]$, which in turn reduces the overall stability bound. The detailed proof can be found in Appendix~\ref{sec:svrg-proof-stab-sc}.

\begin{theorem}[Stability bounds]\label{thm:svrg-stab-sc}
Let $S,S^{(i)}$ be defined in Definition~\ref{def:stability}. Let $\{\bw_t\}_{t\ge 1}$ and $\{\bw_t^{(i)}\}_{t\ge 1}$ be the sequences produced by Algorithm~\ref{algo: svrg} with option \uppercase\expandafter{\romannumeral 2} applied to $S$ and $S^{(i)}$, respectively.
Assume that for any $\bz\in\zcal$, the map $\bw\mapsto\ell(\bw;\bz)$ is nonnegative, $\mu$-strongly convex, and $\alpha$-smooth.
Denote $c:=m\eta\mu$.
If $c>2$ and $\eta\leq \frac{n-2}{2\alpha(1+c)(n-1)}$, then for any $t\ge 1$ we have
\begin{equation*}
    \e\big[\|\bw_{t+1} - \bw_{t+1}^{(i)}\|^2\big] \leq \frac{16\alpha m\eta^2}{(c-1)^t\,n}L(\bw_1)+ \frac{8\alpha m(4+mt/n)}{n}\eta^2\sum_{l=1}^t\frac{1}{(c-1)^{t-l}}\,\e\big[L_S(\bw_{l+1})\big].
\end{equation*}
\end{theorem}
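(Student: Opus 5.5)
I will follow the convex proof of Theorem~\ref{thm:svrg-stab-cvx}, with two changes: use strong convexity in the $G_k$ estimate to obtain an inner-loop contraction, and use option~II ($\bx_0^{t+1}=\bw_t$). The latter means the natural Lyapunov function lives on the reference points. I will take
\[
U_t:=\|\bw_t-\bw_t^{(i)}\|^2+\frac{2m\eta^2}{n}\sum_{j=1}^n\big\|\nabla \ell(\bw_t;\bz_j)-\nabla \ell(\bw_t^{(i)};\bz_j^{(i)})\big\|^2 .
\]
I aim for a recursion of the form $(c-1)\e[U_{t+1}]\le \e[U_t]+\frac{8\alpha m(4+mt/n)}{n}\eta^2\,\e[\,\cdot\,]$, where the last term involves the empirical risk along the inner loop. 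Unrolling it gives the factors $(c-1)^{-(t-l)}$, and the initial term comes from $U_1$, bounded by $\frac{16\alpha m\eta^2}{n}L(\bw_1)$ exactly as in Eq.~\eqref{eq:svrg-U1}.

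\emph{Inner loop.} Starting from Eq.~\eqref{eq:svrg-stab1} together with Eq.~\eqref{eq:svrg-d-square}, I redo the $G_k$ bound. When $i_k\neq i$, I split the cross term $2\eta\langle \bx_k-\bx_k^{(i)},\nabla\ell(\bx_k;\bz_{i_k})-\nabla\ell(\bx_k^{(i)};\bz_{i_k})\rangle$. One part is bounded below using strong monotonicity ($\ge\mu\|\bx_k-\bx_k^{(i)}\|^2$), and the rest by coercivity (Lemma~\ref{lem:coercivity}). This yields a term $-\frac{n-1}{n}\eta\mu\,\e\|\bx_k-\bx_k^{(i)}\|^2$ plus a negative multiple of the gradient differences over $j\neq i$. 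The $i_k=i$ case is handled by Young with parameter $p$, as in Eq.~\eqref{eq:svrg-G-square}.

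\emph{Telescoping.} Rather than iterating multiplicatively, I sum the per-step inequality over $k=0,\dots,m-1$. This is the classical Johnson--Zhang trick. After summing, $\|\bx_m-\bx_m^{(i)}\|^2\ge 0$ is dropped and the initial term $\|\bw_t-\bw_t^{(i)}\|^2$ is kept on the right. The strong-convexity terms produce $\eta\mu\sum_k\e\|\bx_k-\bx_k^{(i)}\|^2=m\eta\mu\,\e\|\bw_{t+1}-\bw_{t+1}^{(i)}\|^2=c\,\e\|\bw_{t+1}-\bw_{t+1}^{(i)}\|^2$, since $\bw_{t+1}$ is uniform over the inner iterates. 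Similarly, the gradient-difference sums convert via Eq.~\eqref{eq:svrg-stab-rule} into the gradient part of $U_{t+1}$. The $d_k$ contributions give $\frac{2m\eta^2}{n}\sum_j\|\cdots\|^2$ at $\bw_t$, which is precisely the gradient part of $U_t$. The $(1+p/n)$ factors and the $p$-dependent terms are controlled as before. I will choose $p\asymp n/(mt)$ so that the accumulated factor stays bounded and the coefficient $4+1/p$ becomes $4+mt/n$. The self-bounding estimate Eq.~\eqref{eq:svrg-8alpha} then turns the $\bz_i$-terms into $8\alpha\,\e[L_S(\bx_k^{t+1})]$. Summed over $k$, this is $8\alpha m\,\e[L_S(\bw_{t+1})]$, which matches the stated $\e[L_S(\bw_{l+1})]$ with the factor $m$.

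\emph{Main obstacle.} The delicate step is the bookkeeping that shows the gradient part of $U_{t+1}$ can be multiplied by $(c-1)$ and still be dominated by the negative coercivity terms. This requires the coercivity coefficient, roughly $\frac{2}{n}(\eta/\alpha-\eta^2)$ restricted to $j\neq i$, to exceed $(1+c)\cdot\frac{2\eta^2}{n}$ with room for the $\frac{n-1}{n}$ losses from $j\neq i$. I expect exactly this to produce the hypothesis $\eta\le\frac{n-2}{2\alpha(1+c)(n-1)}$. Likewise, the $-\frac{n-1}{n}\eta\mu$ loss from the $i_k=i$ case must be absorbed so that the net coefficient on $\|\bw_{t+1}-\bw^{(i)}_{t+1}\|^2$ is at least $c-1$, which is where $c>2$ enters. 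I would first verify these two coefficient inequalities, tuning how the cross term is split between strong monotonicity and coercivity (and the choice of $p$), before assembling the recursion.
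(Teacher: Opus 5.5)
Your outline is the paper's proof. It uses the same Lyapunov function (your $U_t$ is the paper's $Q_t$) and the same per-step inequality. Your ``sum over $k$ and drop $\|\bx_m^{t+1}-\widetilde{\bx}_m^{t+1}\|^2$'' is exactly the paper's step $\frac{1}{m}\sum_{k=0}^{m-1}\le\frac{1}{m}\sum_{k=0}^{m}$, and it is followed by Eq.~\eqref{eq:svrg-stab-rule}, $p=n/(mt)$, the self-bounding estimate and the bound on $U_1$. However, the two coefficient checks you postpone do not close the way you describe.

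\textbf{First issue: the split of the cross term.} The split you describe yields only $-\frac{n-1}{n}\eta\mu$. After summing with $p=n/(mt)$, this gives the coefficient $c-\frac{c}{n}-\frac{1}{t}$ on $\e\|\bw_{t+1}-\bw_{t+1}^{(i)}\|^2$. At $t=1$ this is strictly below $c-1$, and it exceeds $1$ only when $c>\frac{2n}{n-1}$. The assumption $c>2$ cannot repair this, and lowering $p$ instead pushes $1/p$ above $mt/n$. The choice that works is $2=\frac{n}{n-1}+\frac{n-2}{n-1}$, with weight $\frac{n}{n-1}$ on strong monotonicity. After multiplying by the probability $\frac{n-1}{n}$ of $i_k\neq i$, the full $\eta\mu$ survives. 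The per-step factor is then exactly $1+\frac{p}{n}-\eta\mu$, and the summed coefficient is $\kappa:=c-\frac{1}{t}\ge c-1$.

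The leftover weight $\frac{n-2}{n-1}$ on coercivity is the actual source of the factor $\frac{n-2}{n-1}$ in the hypothesis. The $j\neq i$ gradient terms carry $-\frac{1}{n}\big(\frac{(n-2)\eta}{(n-1)\alpha}-2\eta^2\big)$. Absorbing the gradient part of the Lyapunov function needs this to dominate $\frac{2\kappa\eta^2}{n}$. That follows from $\eta\le\frac{n-2}{2\alpha(1+c)(n-1)}$, since then $\frac{(n-2)\eta}{(n-1)\alpha}-2\eta^2\ge 2c\eta^2$.

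\textbf{Second issue: the form of the recursion.} Your target $(c-1)\e[U_{t+1}]\le\e[U_t]+\frac{8\alpha m(4+mt/n)}{n}\eta^2\e[L_S(\bw_{t+1})]$ is not what the bookkeeping gives. Unrolling it would even produce $(c-1)^{-(t-l+1)}$ instead of $(c-1)^{-(t-l)}$. Scaling the gradient part of $U_{t+1}$ by $\kappa$ adds $\frac{2\kappa\eta^2}{n}$ to the coefficient of the $\bz_i$-terms. What you actually obtain is $\kappa\,\e[U_{t+1}]\le\e[U_t]+\frac{8\alpha m(2+2\kappa+mt/n)}{n}\eta^2\e[L_S(\bw_{t+1})]$.

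Dividing by $\kappa$ gives $\e[U_{t+1}]\le\rho\,\e[U_t]+\frac{8\alpha m(4+mt/n)}{n}\eta^2\e[L_S(\bw_{t+1})]$ with $\rho=1/\kappa$, which is the paper's recursion for $Q_t$. This is where $c>2$ is really used: $\kappa\ge c-1>1$ yields both $\rho\le\frac{1}{c-1}$ and $\frac{2+mt/n}{\kappa}+2\le 4+mt/n$. With these two corrections the rest of your plan goes through unchanged.
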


\begin{remark}\label{rmk:svrg-stab-sc}
    Theorem~\ref{thm:svrg-stab-sc} shows that SVRG achieves optimal stability bounds of order $O(1/(\mu n))$ in the strongly convex case. To see this, consider $m=\lceil 3/(\eta\mu) \rceil$, $t\lesssim n/m$, and $\eta=\frac{1}{\mu n+18\alpha}$. Then it follows from \citep[Theorem 1]{johnson2013accelerating} that (see Eq.~\eqref{eq:svrg-sc-epr-opt} for detailed derivation):
    \[
    \e\big[L_S(\bw_{l+1})\big]\leq L(\bw^*)+\frac{L(\bw_1)}{2^l}.
    \]
    Plugging the above inequality into Theorem~\ref{thm:svrg-stab-sc} and using $c-1\ge 2$ and $L(\bw^*)\leq L(\bw_1)\lesssim 1$ gives
    \begin{align*}
        \e\big[\|\bw_{t+1}-\bw_{t+1}^{(i)}\|^2\big] \lesssim & \frac{m\eta^2}{2^t n}+ \frac{m(1+mt/n)}{n}\eta^2\sum_{l=1}^t\frac{1+1/2^l}{2^{t-l}}\\
        \lesssim & \frac{m\eta^2}{2^t n} + \frac{m \eta^2}{n}\Big(2+\frac{t}{2^t}\Big)
        \lesssim \frac{m \eta^2}{n} \asymp \frac{\eta}{\mu n} \leq \frac{1}{(\mu n)^2},
    \end{align*}
    where the second inequality follows from $\sum_{l=1}^t \frac{1}{2^{t-l}}=\sum_{k=0}^{t-1} \frac{1}{2^{k}}= 2(1-\frac{1}{2^t}) \leq 2$ and $m t\lesssim n$, and the third inequality is due to $2^t\ge 1$ and $t\leq 2^t$ for all $t\ge 1$. This matches the stability bounds of SGD in the strongly convex case~\citep{hardt2016train}, and we remove the Lipschitzness assumption there.
\end{remark}

Combining the stability bound established in Theorem~\ref{thm:svrg-stab-sc} with the known linear convergence rate of SVRG~\citep[Theorem 1]{johnson2013accelerating}, we obtain the following EPR bounds for strongly convex problems. The proofs of Theorem~\ref{thm:svrg-epr-sc} and Corollary~\ref{co:svrg-epr-sc} are provided in Appendix~\ref{sec:svrg-proof-exc-sc}.
\begin{theorem}[Excess population risk]\label{thm:svrg-epr-sc}
Assume that for any $\bz\in\zcal$, the map $\bw\mapsto\ell(\bw;\bz)$ is nonnegative, $\mu$-strongly convex, and $\alpha$-smooth.
    Let $\{\bw_t\}_{t\ge 1}$ be the sequence produced by Algorithm~\ref{algo: svrg} with option \uppercase\expandafter{\romannumeral 2} applied to $S$.
    Denote $c:=m\eta\mu$.
    If $c\ge 3$ and $\eta\leq \min\big\{\frac{1}{18\alpha}, \frac{n-2}{2\alpha(1+c)(n-1)}\big\}$, then for any $t\ge 1$ and $\gamma>0$, we have
\begin{multline*}
    \e\big[L(\bw_{t}) - L(\bw^*)\big] \lesssim \frac{1}{\gamma}\Big(L(\bw^*) + \frac{L(\bw_1)}{2^{t}}\Big)\\ + \frac{(1+\gamma) m(1+mt/n)}{n}\eta^2\Big(L(\bw^*) + \frac{tL(\bw_1)}{2^{t}}\Big)
    + \frac{(1+\gamma) m\eta^2 L(\bw_1)}{2^t n} + \frac{L(\bw_1)}{2^{t}}.
\end{multline*}
\end{theorem}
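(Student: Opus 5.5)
We follow the convex-case template (Theorem~\ref{thm:svrg-epr-cvx}) and decompose
\[
\e[L(\bw_t)]-L(\bw^*)=\e[L(\bw_t)-L_S(\bw_t)]+\e[L_S(\bw_t)-L(\bw^*)].
\]
The generalization gap will come from Lemma~\ref{lem:general-stablity} combined with Theorem~\ref{thm:svrg-stab-sc}. The optimization error will come from the linear convergence of SVRG under option~II, namely \citep[Theorem 1]{johnson2013accelerating}.

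\textbf{Step 1 (optimization).} Johnson--Zhang give
\[
\e_A[L_S(\bw_{l+1})-L_S(\bw_S)]\le \rho^l\,(L_S(\bw_1)-L_S(\bw_S)),\qquad \rho=\frac{1}{\mu\eta(1-2\alpha\eta)m}+\frac{2\alpha\eta}{1-2\alpha\eta}.
\]
Under $c=m\eta\mu\ge 3$ and $\eta\le 1/(18\alpha)$ we have $1-2\alpha\eta\ge 8/9$. Hence $\rho\le \frac{9}{8}\cdot\frac13+\frac{2/18}{8/9}=\frac{3}{8}+\frac18=\frac12$.

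Taking expectation over $S$ and using $0\le \e[L_S(\bw_S)]\le L(\bw^*)$ and $\e[L_S(\bw_1)]=L(\bw_1)$, we obtain
\[
\e[L_S(\bw_{l+1})]\le L(\bw^*)+\frac{L(\bw_1)}{2^l}.
\]
At the index $t$ itself this gives $\e[L_S(\bw_t)]\le L(\bw^*)+2L(\bw_1)/2^{t}$. The constant factor is absorbed by $\lesssim$, which handles the off-by-one.

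\textbf{Step 2 (stability).} The condition $c\ge3>2$ together with the step-size condition lets us apply Theorem~\ref{thm:svrg-stab-sc}. Since $c-1\ge2$, we have $(c-1)^{-(t-l)}\le 2^{-(t-l)}$ and $(c-1)^{-t}\le 2^{-t}$. Substituting Step~1 into the sum gives
\[
\sum_{l}2^{-(t-l)}\Big(L(\bw^*)+2^{-l}L(\bw_1)\Big)\le 2L(\bw^*)+t\,2^{-t}L(\bw_1).
\]
The resulting stability bound is
\[
\e\|\bw_{t}-\bw_{t}^{(i)}\|^2\lesssim \frac{m\eta^2}{2^t n}L(\bw_1)+\frac{m(1+mt/n)}{n}\eta^2\Big(L(\bw^*)+\frac{tL(\bw_1)}{2^t}\Big),
\]
with indices shifted by one, which again costs only constants. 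This bound is symmetric in $i$, so averaging over $i$ is immediate.

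\textbf{Step 3 (combine).} Plug the Step~2 bound into Lemma~\ref{lem:general-stablity}, treating $\alpha$ as a constant so that $\alpha+\gamma\lesssim 1+\gamma$. This yields
\[
\frac{1}{\gamma}\e[L_S(\bw_t)]+(1+\gamma)\times(\text{stability bound}).
\]
Bounding $\e[L_S(\bw_t)]$ by Step~1 gives the first group of terms in the theorem. Adding the optimization error $\e[L_S(\bw_t)]-L(\bw^*)\lesssim L(\bw_1)/2^t$ gives the final term.

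\textbf{Main obstacle.} The delicate part is Step~1: verifying that the stated step-size and $c$ conditions really make Johnson--Zhang's contraction factor at most $1/2$. We also have to confirm that their result applies with our choice of $\bw_{t+1}$ drawn from $\{\bx_k^{t+1}\}_{k=0}^{m-1}$ and with option~II. Their proof uses exactly the uniform choice of the output and inner initialization at the reference point, so it should apply.

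Beyond that, the remaining work is index bookkeeping between $\bw_t$ and $\bw_{t+1}$, which only affects constants.
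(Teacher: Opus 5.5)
Your proposal is correct and follows essentially the same route as the paper. You check that Johnson--Zhang's contraction factor is at most $1/2$ under $c\ge 3$ and $\eta\le 1/(18\alpha)$, which gives $\e[L_S(\bw_l)]\le L(\bw^*)+L(\bw_1)/2^{l-1}$; you feed this into Theorem~\ref{thm:svrg-stab-sc} using $(c-1)^{-1}\le 1/2$, and combine via Lemma~\ref{lem:general-stablity} with the optimization error. The only difference is that you substitute the optimization bound into the stability bound before invoking the lemma rather than after, which is immaterial.
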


By specifying parameters $m, t, \eta$, and $\gamma$ in Theorem~\ref{thm:svrg-epr-sc}, we derive a near optimal EPR bound of order $O\big(\sqrt{\log_2(\mu n)}/(\mu n)\big)$ in the following corollary.
\begin{corollary}\label{co:svrg-epr-sc}
Let the assumptions in Theorem \ref{thm:svrg-epr-sc} hold and assume $\mu>1/n$. We can take $m = \lceil 3/(\eta\mu)\rceil$, $t \asymp \log_2 (\mu n)$, $\eta = \frac{1}{\mu n+ 18\alpha}$, and $\gamma = \frac{\mu n}{\sqrt{\log_2 (\mu n)}}$ to derive
\[
    \e\big[L(\bw_t)\big] - L(\bw^*) \lesssim \frac{\sqrt{\log_2 (\mu n)}}{\mu n}.
\]
\end{corollary}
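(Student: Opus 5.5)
We will prove Corollary~\ref{co:svrg-epr-sc} by substituting the prescribed parameters into Theorem~\ref{thm:svrg-epr-sc} and bounding each of its four terms by $\sqrt{\log_2(\mu n)}/(\mu n)$.

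\textbf{Step 1: the hypotheses of Theorem~\ref{thm:svrg-epr-sc} hold.} With $\eta=1/(\mu n+18\alpha)$ we have $\eta\le 1/(18\alpha)$. With $m=\lceil 3/(\eta\mu)\rceil$ we get $c=m\eta\mu\in[3,3+\eta\mu]\subset[3,4]$, because $\eta\mu\le \mu/(\mu n)=1/n\le 1$.

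The second step-size condition, $\eta\le \frac{n-2}{2\alpha(1+c)(n-1)}$, asks for roughly $\eta\lesssim 1/(10\alpha)$ since $c\le 4$. For $n\ge 3$ the factor $(n-2)/(n-1)$ is at least $1/2$, so it suffices that $\eta\le 1/(20\alpha)$. This step is the only delicate one. The choice $\eta\le 1/(18\alpha)$ does not quite give $1/(20\alpha)$, so the constant $18$ must be compared with $2(1+c)(n-1)/(n-2)$. If the margin fails, I will argue that a constant-factor change in the $18$ only affects the universal constants hidden in $\lesssim$. The remaining small-$n$ cases can be absorbed into those constants. I also take $L(\bw_1)\lesssim 1$, as in Remark~\ref{rmk:svrg-stab-sc}, and use $L(\bw^*)\le L(\bw_1)$.

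\textbf{Step 2: record the scalings.} We have $m\asymp 1/(\eta\mu)\asymp (\mu n+\alpha)/\mu$, so $m\eta^2\asymp \eta/\mu\le 1/(\mu^2 n)$. Choosing $t\asymp\log_2(\mu n)$ with a large enough constant gives $2^{-t}\le 1/(\mu n)$. Since $\mu>1/n$, we have $\mu n>1$, so $\log_2(\mu n)$ is positive (ensure $t\ge1$).

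The factor $1+mt/n$ must be $O(1)$, i.e. $mt\lesssim n$. Here $m\asymp n+\alpha/\mu$, so this would force $t=O(1)$ and conflict with $t\asymp\log(\mu n)$. I will therefore keep this factor explicitly: $1+mt/n\lesssim t(1+\alpha/(\mu n))\lesssim t$, treating $\alpha/\mu$ as a constant relative to $n$, as the $\lesssim$ convention permits.

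\textbf{Step 3: bound the four terms.}

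\emph{Term 1.} The first term is $\gamma^{-1}\big(L(\bw^*)+L(\bw_1)2^{-t}\big)\lesssim \gamma^{-1}=\sqrt{\log_2(\mu n)}/(\mu n)$.

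\emph{Term 2.} For the second term, $(1+\gamma)\,m\eta^2\,(1+mt/n)\,n^{-1}\,\big(L(\bw^*)+tL(\bw_1)2^{-t}\big)$, substitute $m\eta^2\lesssim 1/(\mu^2n)$, $\gamma=\mu n/\sqrt{\log_2(\mu n)}$ and $1+mt/n\lesssim t\asymp\log_2(\mu n)$. Since $t2^{-t}\le 1$, the last factor is $O(1)$. This gives
\[
\frac{\mu n}{\sqrt{\log_2(\mu n)}}\cdot\frac{1}{\mu^2 n}\cdot\frac{\log_2(\mu n)}{n}=\frac{\sqrt{\log_2(\mu n)}}{\mu n}.
\]
The choice of $\gamma$ is exactly what balances this term against the first.

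\emph{Terms 3 and 4.} The third term is $(1+\gamma)m\eta^2L(\bw_1)/(2^tn)\lesssim \gamma/(\mu^2n^2)\cdot 2^{-t}$, which is even smaller. The fourth term is $L(\bw_1)2^{-t}\lesssim 1/(\mu n)$.

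Summing the four bounds gives $\e[L(\bw_t)]-L(\bw^*)\lesssim \sqrt{\log_2(\mu n)}/(\mu n)$.
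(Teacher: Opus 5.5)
Your route is the same as the paper's: verify the hypotheses of Theorem~\ref{thm:svrg-epr-sc}, keep the factor $1+mt/n\lesssim\log_2(\mu n)$ instead of assuming $mt\lesssim n$, and bound the four terms so that $\gamma=\mu n/\sqrt{\log_2(\mu n)}$ balances the first two. Your Step~3 estimates match the paper's term by term.

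\textbf{Step 1 is left open, and the fallback you offer is not available.} The corollary fixes $\eta=1/(\mu n+18\alpha)$. Changing the constant $18$ would therefore prove a different statement, not merely shift hidden constants. Likewise, for $n\le 2$ the right-hand side of $\eta\le\frac{n-2}{2\alpha(1+c)(n-1)}$ is nonpositive or undefined. So Theorem~\ref{thm:svrg-epr-sc} cannot be invoked there at all, and nothing gets ``absorbed''; one simply assumes $n\ge 3$, as the paper does.

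The margin does not actually fail; you lost the slack only by rounding $c$ up to $4$. You already have $c\le 3+\eta\mu$ and $\eta\mu<1/n$. For $n\ge3$ this gives $c<10/3$, and therefore $2(1+c)\frac{n-1}{n-2}<2\cdot\frac{13}{3}\cdot 2=\frac{52}{3}<18$. Hence $\eta<\frac{1}{18\alpha}<\frac{n-2}{2\alpha(1+c)(n-1)}$. The paper uses the cruder bound $c<3.5$, which gives $2\cdot 4.5\cdot 2=18$ and the same conclusion.

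\textbf{A smaller point in Step 2.} The bound $1+mt/n\lesssim t$ should not be justified by treating $\alpha/\mu$ as a constant. Doing so would erase the $\mu$-dependence the corollary is tracking, since $\mu$ may be as small as order $1/n$. Instead, use the hypothesis $\mu n>1$, which gives $18\alpha/(\mu n)<18\alpha\lesssim 1$. This is precisely the role the assumption $\mu>1/n$ plays in the paper's proof.
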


\subsubsection{Proof of Stability Bounds\label{sec:svrg-proof-stab-sc}}
We now prove refined stability bounds for SVRG under strong convexity.
\begin{proof}[Proof of Theorem~\ref{thm:svrg-stab-sc}]
    Consider a fixed outer-loop index $t\ge 1$. We start by plugging Eq.~\eqref{eq:svrg-d-square} into Eq.~\eqref{eq:svrg-stab1} to obtain
\begin{multline}\label{eq:svrg-sc-stab1}
    \e[\|\bx_{k+1}-\bx_{k+1}^{(i)}\|^2]\leq \e[\|G_k-G_k^{(i)}\|^2]\\
    +\frac{2 \eta^2}{n} \sum_{j=1}^n \e\big[\big\|\nabla \ell(\bw_t; \bz_j) - \nabla \ell(\bw_t^{(i)}; \bz_j^{(i)})\big\|^2\big]+\eta^2\e[\|\nabla \ell(\bx_k; \bz_{i_k})-\nabla \ell(\bx_k^{(i)}; \bz_{i_k}^{(i)})\|^2].
\end{multline}
To bound the term $\e\big[\|G_k-G_k^{(i)}\|^2\big]$, first consider the case $i_k\neq i$ (with probability $1-1/n$), which implies $\bz_{i_k}^{(i)}=\bz_{i_k}$. From the definitions of $G_k$ and $G_k^{(i)}$, we have
\begin{align}
  \|G_k-G_k^{(i)}\|^2 = & \|\bx_k-\bx_k^{(i)}\|^2+\eta^2\|\nabla \ell(\bx_k; \bz_{i_k})-\nabla \ell(\bx_k^{(i)};\bz_{i_k})\|^2\notag\\
  &-2 \eta\langle \bx_k-\bx_k^{(i)}, \nabla \ell(\bx_k; \bz_{i_k})-\nabla \ell(\bx_k^{(i)};\bz_{i_k})\rangle.\label{eq:sc-G-G}
\end{align}
By the coercivity of $\ell$, we have
\begin{equation}\label{eq:sc-co}
    \langle \bx_k-\bx_k^{(i)}, \nabla \ell(\bx_k; \bz_{i_k})-\nabla \ell(\bx_k^{(i)};\bz_{i_k})\rangle \ge \frac{1}{\alpha}\|\nabla \ell(\bx_k; \bz_{i_k})-\nabla \ell(\bx_k^{(i)}; \bz_{i_k})\|^2.
\end{equation}
Moreover, we know from $\mu$-strong convexity of $\ell$ that
\begin{equation}\label{eq:sc-mu}
    \langle \bx_k-\bx_k^{(i)}, \nabla \ell(\bx_k; \bz_{i_k})-\nabla \ell(\bx_k^{(i)};\bz_{i_k})\rangle \ge \mu\|\bx_k-\bx_k^{(i)}\|^2.
\end{equation}
Combining Eq.~\eqref{eq:sc-co} and Eq.~\eqref{eq:sc-mu} gives
\begin{align*}
    &2 \eta\langle \bx_k-\bx_k^{(i)}, \nabla \ell(\bx_k; \bz_{i_k})-\nabla \ell(\bx_k^{(i)};\bz_{i_k})\rangle\\
    = &\Big(\frac{n-2}{n-1}+\frac{n}{n-1}\Big)\eta \langle \bx_k-\bx_k^{(i)}, \nabla \ell(\bx_k; \bz_{i_k})-\nabla \ell(\bx_k^{(i)};\bz_{i_k})\rangle\\
    \ge & \frac{(n-2)\eta}{(n-1)\alpha}\|\nabla \ell(\bx_k; \bz_{i_k})-\nabla \ell(\bx_k^{(i)}; \bz_{i_k})\|^2+\frac{n}{n-1}\eta\mu\|\bx_k-\bx_k^{(i)}\|^2.
\end{align*}
Hence it follows from Eq.~\eqref{eq:sc-G-G} that
\begin{equation}\label{eq:sc-G-G-1}
    \|G_k-G_k^{(i)}\|^2\leq \Big(1-\frac{n}{n-1}\eta\mu\Big)\|\bx_k-\bx_k^{(i)}\|^2-\Big(\frac{(n-2)\eta}{(n-1)\alpha}-\eta^2\Big)\|\nabla \ell(\bx_k; \bz_{i_k})-\nabla \ell(\bx_k^{(i)};\bz_{i_k})\|^2.
\end{equation}
Otherwise if $i_k=i$ (with probability $1/n$), then $\bz_{i_k}=\bz_i$ and $\bz_{i_k}^{(i)}=\bz_i^{\prime}$. By Young's inequality, for any $p>0$, we have that
\begin{align}
    \|G_k-G_k^{(i)}\|^2&=\|\bx_k-\bx_k^{(i)}-\eta\big(\nabla \ell(\bx_k; \bz_i)-\nabla \ell(\bx_k^{(i)}; \bz_i^{\prime})\big)\|^2\notag\\
    &\leq (1+p)\|\bx_k-\bx_k^{(i)}\|^2+(1+1/p)\eta^2\|\nabla \ell(\bx_k; \bz_i)-\nabla \ell(\bx_k^{(i)}; \bz_i^{\prime})\|^2.\label{eq:sc-G-G-2}
\end{align}
Combining the two cases Eq.~\eqref{eq:sc-G-G-1} and Eq.~\eqref{eq:sc-G-G-2} leads to
\begin{multline}\label{eq:svrg-sc-G}
\e\big[\|G_k - G_k^{(i)}\|^2\big]\leq \Big(1+\frac{p}{n}-\eta\mu\Big) \e\big[\|\bx_k - \bx_k^{(i)}\|^2\big]\\
- \frac{1}{n}\Big(\frac{(n-2)\eta}{(n-1)\alpha} - \eta^2\Big) \sum_{j \neq i} \e\big[\big\|\nabla \ell(\bx_k; \bz_j) - \nabla \ell(\bx_k^{(i)}; \bz_j)\big\|^2\big] + \frac{1+1/p}{n} \eta^2 \e\big[\big\|\nabla \ell(\bx_k; \bz_i) - \nabla \ell(\bx_k^{(i)}; \bz_i^{\prime})\big\|^2\big].
\end{multline}
Substituting Eq.~\eqref{eq:svrg-sc-G} into Eq.~\eqref{eq:svrg-sc-stab1} gives
\begin{align}\label{eq:svrg-sc-stab2}
    &\e[\|\bx_{k+1}-\bx_{k+1}^{(i)}\|^2]\notag\\
    \leq & \Big(1+\frac{p}{n}-\eta\mu\Big) \e\big[\|\bx_k - \bx_k^{(i)}\|^2\big]- \frac{1}{n}\Big(\frac{(n-2)\eta}{(n-1)\alpha} - 2\eta^2\Big) \sum_{j \neq i} \e\big[\big\|\nabla \ell(\bx_k; \bz_j) - \nabla \ell(\bx_k^{(i)}; \bz_j)\big\|^2\big]\notag\\
    &+ \frac{2+1/p}{n} \eta^2 \e\big[\big\|\nabla \ell(\bx_k; \bz_i) - \nabla \ell(\bx_k^{(i)}; \bz_i^{\prime})\big\|^2\big]+\frac{2 \eta^2}{n} \sum_{j=1}^n \e\big[\big\|\nabla \ell(\bw_t; \bz_j) - \nabla \ell(\bw_t^{(i)}; \bz_j^{(i)})\big\|^2\big].
\end{align}

Next we analyze the outer loop. Recall notations $\bx_k:=\bx_k^{t+1}$ and $\bx_k^{(i)}:=\widetilde{\bx}_k^{t+1}$. Since $\bw_{t+1}$ (resp., $\bw_{t+1}^{(i)}$) is uniformly chosen from $\{\bx_k^{t+1}\}_{k=0}^{m-1}$ (resp., $\{\widetilde\bx_k^{t+1}\}_{k=0}^{m-1}$), and $\bx_0^{t+1}=\bw_t$ (resp., $\widetilde\bx_0^{t+1} = \bw_t^{(i)}$) from option \uppercase\expandafter{\romannumeral 2}, we have
\begin{align}
    \e\big[\|\bw_{t+1} - \bw_{t+1}^{(i)}\|^2\big] = & \frac{1}{m}\sum_{k=0}^{m-1}\e\big[\|\bx_k^{t+1} - \widetilde\bx_k^{t+1}\|^2\big]\leq \frac{1}{m}\sum_{k=0}^{m}\e\big[\|\bx_k^{t+1} - \widetilde\bx_k^{t+1}\|^2\big]\notag\\
    = & \frac{1}{m}\e\big[\|\bw_t - \bw_t^{(i)}\|^2\big]+\frac{1}{m}\sum_{k=0}^{m-1}\e\big[\|\bx_{k+1}^{t+1} - \widetilde\bx_{k+1}^{t+1}\|^2\big].\label{eq:svrg-sc-avg}
\end{align}
Applying Eq.~\eqref{eq:svrg-sc-stab2} once for each $k$ in the summation term on the RHS of Eq.~\eqref{eq:svrg-sc-avg} gives
\begin{align}
    \e\big[\|\bw_{t+1} - \bw_{t+1}^{(i)}\|^2\big] \leq & \frac{1}{m}\e\big[\|\bw_t - \bw_t^{(i)}\|^2\big] + \frac{1+p/n-\eta\mu}{m}\sum_{k=0}^{m-1}\e\big[\|\bx_k^{t+1} - \widetilde{\bx}_k^{t+1}\|^2\big] \notag\\
    & - \frac{1}{mn}\Big(\frac{(n-2)\eta}{(n-1)\alpha} - 2\eta^2\Big)\sum_{k=0}^{m-1}\sum_{j\neq i}\e\big[\big\|\nabla\ell(\bx_k^{t+1};\bz_j) - \nabla\ell(\widetilde\bx_k^{t+1};\bz_j)\big\|^2\big] \notag\\
    & + \frac{2+1/p}{mn}\eta^2\sum_{k=0}^{m-1}\e\big[\big\|\nabla\ell(\bx_k^{t+1};\bz_i) - \nabla\ell(\widetilde\bx_k^{t+1};\bz_i')\big\|^2\big] \notag\\
    & + \frac{2\eta^2}{n}\sum_{j=1}^n\e\big[\big\|\nabla\ell(\bw_t;\bz_j) - \nabla\ell(\bw_t^{(i)};\bz_j^{(i)})\big\|^2\big].\label{eq:svrg-sc-avg2}
\end{align}
For notational simplicity, we denote
\[
\rho := \frac{1}{m(\eta\mu-p/n)}.
\]
Setting $p = \frac{n}{mt}$, we know from $c=m\eta\mu > 2$ and $t\ge 1$ that
\begin{equation}\label{eq:svrg-sc-rho}
    0<\frac{1}{c}<\rho = \frac{1}{c-1/t} \leq \frac{1}{c-1} < 1.
\end{equation}
For the second term on the RHS of Eq.~\eqref{eq:svrg-sc-avg2}, we have
\begin{equation}\label{eq:svrg-sc-avg2-term}
    \frac{1+p/n-\eta\mu}{m}\sum_{k=0}^{m-1}\e\big[\|\bx_k^{t+1} - \widetilde{\bx}_k^{t+1}\|^2\big] = \Big(1-\frac{1}{\rho m}\Big)\e\big[\|\bw_{t+1} - \bw_{t+1}^{(i)}\|^2\big].
\end{equation}
Plugging Eq.~\eqref{eq:svrg-sc-avg2-term} into Eq.~\eqref{eq:svrg-sc-avg2} and rearranging gives
\begin{align}
    \e\big[\|\bw_{t+1} - \bw_{t+1}^{(i)}\|^2\big]
    \leq & \rho\,\e\big[\|\bw_t - \bw_t^{(i)}\|^2\big] \notag\\
    & - \frac{\rho}{n}\Big(\frac{(n-2)\eta}{(n-1)\alpha} - 2\eta^2\Big)\sum_{k=0}^{m-1}\sum_{j\neq i}\e\big[\big\|\nabla\ell(\bx_k^{t+1};\bz_j) - \nabla\ell(\widetilde\bx_k^{t+1};\bz_j)\big\|^2\big] \notag\\
    & + \frac{(2+mt/n)\rho}{n}\eta^2\sum_{k=0}^{m-1}\e\big[\big\|\nabla\ell(\bx_k^{t+1};\bz_i) - \nabla\ell(\widetilde\bx_k^{t+1};\bz_i')\big\|^2\big] \notag\\
    & + \frac{2\rho m\eta^2}{n}\sum_{j=1}^n\e\big[\big\|\nabla\ell(\bw_t;\bz_j) - \nabla\ell(\bw_t^{(i)};\bz_j^{(i)})\big\|^2\big].\label{eq:svrg-sc-outer}
\end{align}

To proceed, define the Lyapunov function $Q_t$ via
\[
    Q_t := \|\bw_t - \bw_t^{(i)}\|^2 + \frac{2m\eta^2}{n}\sum_{j=1}^n \big\|\nabla\ell(\bw_t;\bz_j) - \nabla\ell(\bw_t^{(i)};\bz_j^{(i)})\big\|^2.
\]
Applying Eq.~\eqref{eq:svrg-sc-outer} + Eq.~\eqref{eq:svrg-stab-rule}$\times 2m\eta^2$, we obtain
\begin{align}\label{eq:svrg-sc-P-recur}
    \e\big[Q_{t+1}\big] \leq & \rho\,\e\big[Q_t\big] - \left[\frac{\rho}{n}\Big(\frac{(n-2)\eta}{(n-1)\alpha} - 2\eta^2\Big) - \frac{2\eta^2}{n}\right]\sum_{k=0}^{m-1}\sum_{j\neq i}\e\big[\big\|\nabla\ell(\bx_k^{t+1};\bz_j) - \nabla\ell(\widetilde\bx_k^{t+1};\bz_j)\big\|^2\big] \notag\\
    & + \left(\frac{(2+mt/n)\rho}{n}\eta^2 + \frac{2\eta^2}{n}\right)\sum_{k=0}^{m-1}\e\big[\big\|\nabla\ell(\bx_k^{t+1};\bz_i) - \nabla\ell(\widetilde\bx_k^{t+1};\bz_i')\big\|^2\big].
\end{align}
Since $\eta\leq \frac{n-2}{2\alpha(1+c)(n-1)}$, which together with Eq.~\eqref{eq:svrg-sc-rho} implies that
\[
    \frac{\rho}{n}\Big(\frac{(n-2)\eta}{(n-1)\alpha} - 2\eta^2\Big) > \frac{1}{c n}\Big(\frac{(n-2)\eta}{(n-1)\alpha} - 2\eta^2\Big) \ge\frac{1}{cn}\big(2(1+c)\eta^2-2\eta^2\big)= \frac{2\eta^2}{n}.
\]
Then, dropping the nonnegative term in Eq.~\eqref{eq:svrg-sc-P-recur} leads to
\begin{align}
    \e\big[Q_{t+1}\big] &\leq \rho\,\e\big[Q_t\big] + \left(\frac{(2+mt/n)\rho}{n}\eta^2 + \frac{2\eta^2}{n}\right)\sum_{k=0}^{m-1}\e\big[\big\|\nabla\ell(\bx_k^{t+1};\bz_i) - \nabla\ell(\widetilde\bx_k^{t+1};\bz_i')\big\|^2\big]\notag\\
    &\leq \rho\,\e\big[Q_t\big] + \frac{8\alpha(4+mt/n)}{n}\eta^2\sum_{k=0}^{m-1}\e\big[L_S(\bx_k^{t+1})\big] \notag\\
    &= \rho\,\e\big[Q_t\big] + \frac{8\alpha m(4+mt/n)}{n}\eta^2\,\e\big[L_S(\bw_{t+1})\big],\label{eq:svrg-sc-P-simple}
\end{align}
where we used $\rho<1$ and Eq.~\eqref{eq:svrg-8alpha} to derive the second inequality and the fact that $\bw_{t+1}$ is uniformly chosen from $\{\bx_k^{t+1}\}_{k=0}^{m-1}$ for the equality.
Applying Eq.~\eqref{eq:svrg-sc-P-simple} recursively yields
\begin{equation}\label{eq:svrg-sc-P-recursive}
    \e\big[Q_{t+1}\big] \leq \rho^t\,\e\big[Q_1\big] + \frac{8\alpha m(4+mt/n)}{n}\eta^2\sum_{l=1}^t\rho^{t-l}\,\e\big[L_S(\bw_{l+1})\big].
\end{equation}
From $\bw_1 = \bw_1^{(i)}$, we know that
\begin{align}\label{eq:svrg-sc-P1}
    \e\big[Q_1\big] = & \frac{2m\eta^2}{n}\sum_{j=1}^n\e\big[\big\|\nabla\ell(\bw_1;\bz_j) - \nabla\ell(\bw_1;\bz_j^{(i)})\big\|^2\big] \notag\\
    = & \frac{2m\eta^2}{n}\e\big[\|\nabla\ell(\bw_1;\bz_i) - \nabla\ell(\bw_1;\bz_i')\|^2\big] \leq \frac{16\alpha m\eta^2}{n}L(\bw_1).
\end{align}
Plugging Eq.~\eqref{eq:svrg-sc-P1} into Eq.~\eqref{eq:svrg-sc-P-recursive}, together with Eq.~\eqref{eq:svrg-sc-rho}, we conclude
\begin{align*}
    &\e\big[\|\bw_{t+1} - \bw_{t+1}^{(i)}\|^2\big] \leq \e\big[Q_{t+1}\big] \\
    \leq & \frac{16\alpha m\eta^2}{n}\Big(\frac{1}{c-1}\Big)^t L(\bw_1) + \frac{8\alpha m(4+mt/n)}{n}\eta^2\sum_{l=1}^t\Big(\frac{1}{c-1}\Big)^{t-l}\,\e\big[L_S(\bw_{l+1})\big],
\end{align*}
which completes the proof.
\end{proof}

\subsubsection{Proof of EPR Bounds\label{sec:svrg-proof-exc-sc}}
We first recall the following convergence result from \citep[Theorem 1]{johnson2013accelerating}. It shows that SVRG converges linearly for strongly convex problems.
\begin{theorem}[Optimization error~\citep{johnson2013accelerating}]
    \label{thm:svrg-opt-sc}
    Assume for any $\bz\in\zcal$, the map $\bw\mapsto\ell(\bw;\bz)$ is nonnegative, $\mu$-strongly convex, and $\alpha$-smooth.
    Let $\{\bw_t\}_{t\ge 1}$ be produced by Algorithm~\ref{algo: svrg} with option \uppercase\expandafter{\romannumeral 2} applied to $S$. Denote $\bw_S:=\argmin_{\bw\in\wcal} L_S(\bw)$ and $c:=m\eta\mu$. If $\eta<\frac{1}{2\alpha}$ and $c$ is large enough such that
    \[
    \rho:=\frac{1}{c(1-2 \alpha\eta)}+\frac{2 \alpha \eta}{1-2 \alpha \eta}<1,
    \]
    then for any $t\ge 1$ we have
    \[
    \e_A\big[L_S(\bw_t) - L_S(\bw_S)\big] \leq \rho^{t-1} (L_S(\bw_1) - L_S(\bw_S)).
    \]
\end{theorem}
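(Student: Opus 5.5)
We follow the classical Lyapunov argument of Johnson and Zhang, using option II, so that $\bx_0^{t+1}=\bw_t$. Fix an outer index $t$ and write $\bx_k:=\bx_k^{t+1}$, $g_k:=g_k^{t+1}$. The first step bounds the variance of the SVRG direction. Using Young's inequality and $\ebb\|\zeta-\ebb\zeta\|^2\le\ebb\|\zeta\|^2$, exactly as in Eq.~\eqref{eq:svrg-opt-1-2}, we get
\[
\e_k\|g_k\|^2\le \frac{2}{n}\sum_{j=1}^n\|\nabla\ell(\bx_k;\bz_j)-\nabla\ell(\bw_S;\bz_j)\|^2+\frac{2}{n}\sum_{j=1}^n\|\nabla\ell(\bw_t;\bz_j)-\nabla\ell(\bw_S;\bz_j)\|^2 .
\]
Each sum is then converted into a function gap via Eq.~\eqref{eq:svrg-opt5-1}, which uses convexity, $\alpha$-smoothness and $\nabla L_S(\bw_S)=0$. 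This yields
\[
\e_k\|g_k\|^2\le 4\alpha\big(L_S(\bx_k)-L_S(\bw_S)\big)+4\alpha\big(L_S(\bw_t)-L_S(\bw_S)\big).
\]

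Next we expand $\|\bx_{k+1}-\bw_S\|^2$ as in Eq.~\eqref{eq:svrg-opt1}. For the cross term we use only convexity: $\e_k\langle\bx_k-\bw_S,g_k\rangle=\langle\bx_k-\bw_S,\nabla L_S(\bx_k)\rangle\ge L_S(\bx_k)-L_S(\bw_S)$. This gives the one-step inequality
\[
\e\|\bx_{k+1}-\bw_S\|^2\le \e\|\bx_k-\bw_S\|^2-2\eta(1-2\alpha\eta)\,\e[L_S(\bx_k)-L_S(\bw_S)]+4\alpha\eta^2\,\e[L_S(\bw_t)-L_S(\bw_S)].
\]
Summing over $k=0,\dots,m-1$ and dropping $\e\|\bx_m-\bw_S\|^2\ge0$, we use $\bx_0=\bw_t$ together with the fact that $\bw_{t+1}$ is uniform over $\{\bx_k\}_{k=0}^{m-1}$. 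This gives $\sum_{k=0}^{m-1}\e[L_S(\bx_k)-L_S(\bw_S)]=m\,\e[L_S(\bw_{t+1})-L_S(\bw_S)]$, and hence
\[
2\eta m(1-2\alpha\eta)\,\e[L_S(\bw_{t+1})-L_S(\bw_S)]\le \e\|\bw_t-\bw_S\|^2+4\alpha m\eta^2\,\e[L_S(\bw_t)-L_S(\bw_S)].
\]

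The last step applies $\mu$-strong convexity of $L_S$, which is inherited from each $\ell(\cdot;\bz_j)$: $\|\bw_t-\bw_S\|^2\le\frac{2}{\mu}(L_S(\bw_t)-L_S(\bw_S))$. Dividing by $2\eta m(1-2\alpha\eta)$, and using $\eta<1/(2\alpha)$ so the divisor is positive, gives $\e[L_S(\bw_{t+1})-L_S(\bw_S)]\le\rho\,\e[L_S(\bw_t)-L_S(\bw_S)]$ with $\rho=\frac{1}{m\eta\mu(1-2\alpha\eta)}+\frac{2\alpha\eta}{1-2\alpha\eta}$, as claimed. Iterating from $t=1$ yields the factor $\rho^{t-1}$; note that $\bw_1$ is deterministic, so the base case is just $L_S(\bw_1)-L_S(\bw_S)$.

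The main obstacle is obtaining exactly this $\rho$. Using the sharper coercivity bound of Eq.~\eqref{eq:svrg-opt-1-1} instead of plain convexity would change the constants. The variance bound must therefore be expressed in function gaps with constant exactly $4\alpha$, both at $\bx_k$ and at $\bw_t$. A second point needing care is bookkeeping the $k=0$ term $\bx_0=\bw_t$ inside the uniform average defining $\bw_{t+1}$, which works cleanly because the random selection is over indices $0,\dots,m-1$.
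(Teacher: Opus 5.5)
Your argument is correct and is exactly the classical Johnson--Zhang proof of their Theorem~1, which the paper cites without reproving. Each step reproduces the stated $\rho$: the $4\alpha$ variance bound via Eq.~\eqref{eq:svrg-opt5-1}, the plain-convexity cross term, the option~II telescoping with $\bx_0^{t+1}=\bw_t$ and uniform selection of $\bw_{t+1}$, and the strong-convexity bound $\|\bw_t-\bw_S\|^2\le\frac{2}{\mu}(L_S(\bw_t)-L_S(\bw_S))$.

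The coercivity route of Eq.~\eqref{eq:svrg-opt-1-1} is not actually an obstacle. For $\eta<\frac{1}{2\alpha}$ it gives the smaller factor $\frac{1}{c}+2\alpha\eta=(1-2\alpha\eta)\rho\le\rho$, which implies the claim as well.
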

Combining the stability bound from Theorem~\ref{thm:svrg-stab-sc} and the convergence result from Theorem~\ref{thm:svrg-opt-sc}, we are ready to prove the EPR bound for SVRG under strong convexity.
\begin{proof}[Proof of Theorem~\ref{thm:svrg-epr-sc}]
Since $c\ge 3$, we have $\frac{1}{c-1}\leq \frac{1}{2}$. Then it follows from Theorem~\ref{thm:svrg-stab-sc} that
\begin{align}
    \e\big[\|\bw_{t} - \bw_{t}^{(i)}\|^2\big] \leq & \frac{16\alpha m\eta^2}{2^{t-1} n} L(\bw_1) + \frac{8\alpha m(4+mt/n)}{n}\eta^2\sum_{l=1}^{t-1} \frac{1}{2^{t-1-l}}\,\e\big[L_S(\bw_{l+1})\big]\notag\\
    \leq & \frac{16\alpha m\eta^2}{2^{t-1} n} L(\bw_1) + \frac{8\alpha m(4+mt/n)}{n}\eta^2\sum_{l=1}^{t} \frac{1}{2^{t-l}}\,\e\big[L_S(\bw_{l})\big].\label{eq:svrg-sc-epr1}
\end{align}
Applying Lemma~\ref{lem:general-stablity} together with Eq.~\eqref{eq:svrg-sc-epr1}, we obtain
\begin{multline}\label{eq:svrg-sc-epr2}
    \e\big[L(\bw_{t}) - L_S(\bw_{t})\big]
    \leq \frac{\alpha}{\gamma}\e\big[L_S(\bw_{t})\big]
    + \frac{\alpha+\gamma}{2n}\sum_{i=1}^n\e\big[\|\bw_{t}-\bw_{t}^{(i)}\|^2\big] \\
    \leq \frac{\alpha}{\gamma}\e\big[L_S(\bw_{t})\big]
    + \frac{4(\alpha+\gamma)\alpha m(4+mt/n)}{n}\eta^2\sum_{l=1}^t \frac{1}{2^{t-l}}\,\e\big[L_S(\bw_{l})\big]
    + \frac{16(\alpha+\gamma)\alpha m\eta^2}{2^t n} L(\bw_1).
\end{multline}
We next bound the terms involving $\e[L_S(\bw_l)]$. Since $c\ge 3$ and $\eta\leq \frac{1}{18\alpha}$, we know that
\[
\frac{1}{c(1-2 \alpha\eta)}+\frac{2 \alpha \eta}{1-2 \alpha \eta}\leq \frac{1}{3\cdot8/9} + \frac{1/9}{8/9}=\frac{1}{2}.
\]
Then it follows from Theorem~\ref{thm:svrg-opt-sc} and $0\leq \e[L_S(\bw_S)]\leq L(\bw^*)$ that
\begin{equation}\label{eq:svrg-sc-epr-opt}
    \e\big[L_S(\bw_l)\big] - L(\bw^*) \leq \e\big[L_S(\bw_l) - L_S(\bw_S)\big] \leq \frac{L(\bw_1)}{2^{l-1}},~\forall l\ge 1,
\end{equation}
which together with $\sum_{l=1}^t \frac{1}{2^{t-l}}=\sum_{k=0}^{t-1} \frac{1}{2^{k}}= 2(1-\frac{1}{2^t}) \leq 2$ further implies
\begin{equation}\label{eq:svrg-sc-epr-sum}
    \sum_{l=1}^{t} \frac{1}{2^{t-l}}\,\e\big[L_S(\bw_{l})\big]
    \leq L(\bw^*)\sum_{l=1}^t \frac{1}{2^{t-l}} + L(\bw_1)\sum_{l=1}^t \frac{1}{2^{t-l}}\cdot \frac{1}{2^{l-1}}
    \leq 2 L(\bw^*) + \frac{tL(\bw_1)}{2^{t-1}}.
\end{equation}
Plugging Eq.~\eqref{eq:svrg-sc-epr-opt} with $l=t$ and Eq.~\eqref{eq:svrg-sc-epr-sum} into Eq.~\eqref{eq:svrg-sc-epr2} and omitting constant factors yields
\begin{align}
    &\e\big[L(\bw_{t}) - L_S(\bw_{t})\big] \notag\\
    \lesssim & \frac{1}{\gamma}\Big(L(\bw^*) + \frac{L(\bw_1)}{2^{t}}\Big)
    + \frac{(1+\gamma) m(1+mt/n)}{n}\eta^2\Big(L(\bw^*) + \frac{tL(\bw_1)}{2^{t}}\Big)
    + \frac{(1+\gamma) m\eta^2 L(\bw_1)}{2^t n}.\label{eq:svrg-sc-epr-gen}
\end{align}
Combining the generalization error Eq.~\eqref{eq:svrg-sc-epr-gen} and the optimization error Eq.~\eqref{eq:svrg-sc-epr-opt} with $l=t$ gives
\begin{multline*}
    \e\big[L(\bw_{t}) - L(\bw^*)\big] \lesssim \frac{1}{\gamma}\Big(L(\bw^*) + \frac{L(\bw_1)}{2^{t}}\Big)\\ + \frac{(1+\gamma) m(1+mt/n)}{n}\eta^2\Big(L(\bw^*) + \frac{tL(\bw_1)}{2^{t}}\Big)
    + \frac{(1+\gamma) m\eta^2 L(\bw_1)}{2^t n} + \frac{L(\bw_1)}{2^{t}},
\end{multline*}
which completes the proof.
\end{proof}

\begin{proof}[Proof of Corollary~\ref{co:svrg-epr-sc}]
We first verify the conditions required in Theorem \ref{thm:svrg-epr-sc}. It follows from $m \ge 3/(\eta\mu)$ that $c=m\eta\mu\ge3$. Additionally, from the choice of $\eta$ and $\alpha\ge \mu$, we have
\[
m\leq 3/(\eta\mu)+1 = 3(n+18\alpha/\mu)+1<3.5(n+18\alpha/\mu)=3.5/(\eta\mu),
\]
which further gives $c=m\eta\mu<3.5$. This implies $\frac{n-2}{2\alpha(1+c)(n-1)}> \frac{1}{18\alpha}$ considering $n\ge 3$, and hence we have
\[
\eta = \frac{1}{\mu n + 18\alpha} < \frac{1}{18\alpha}= \min\Big\{\frac{1}{18\alpha}, \frac{n-2}{2\alpha(1+c)(n-1)}\Big\}.
\]
Then we can apply Theorem \ref{thm:svrg-epr-sc} and use $L(\bw^*)\leq L(\bw_1)\lesssim 1$ to obtain
\begin{equation}\label{eq:svrg-epr-sc-co1}
    \e\big[L(\bw_{t}) - L(\bw^*)\big] \lesssim \frac{1}{\gamma}\Big(1 + \frac{1}{2^{t}}\Big) + \frac{(1+\gamma) m(1+mt/n)}{n}\eta^2\Big(1 + \frac{t}{2^{t}}\Big)
    + \frac{(1+\gamma) m\eta^2}{2^t n} + \frac{1}{2^{t}}.
\end{equation}
To proceed, note from $\mu n>1$ and the choices of $m, \eta$, and $t$ that
\begin{equation}\label{eq:svrg-epr-sc-mtn}
    \frac{mt}{n} \asymp \frac{t}{\eta\mu n} \asymp \frac{\mu n+18\alpha}{\mu n} \log_2(\mu n) < (1+18\alpha) \log_2(\mu n) \asymp \log_2(\mu n).
\end{equation}
Applying Eq.~\eqref{eq:svrg-epr-sc-mtn} and the basic inequality $t\leq 2^t$, we derive
\begin{equation}\label{eq:svrg-epr-sc-term2}
    \frac{(1+\gamma) m(1+mt/n)}{n}\eta^2\Big(1+\frac{t}{2^t}\Big) \lesssim \frac{\frac{\mu n}{\sqrt{\log_2(\mu n)}}\cdot \frac{1}{\eta\mu}\cdot\log_2(\mu n)}{n}\eta^2 = \eta\sqrt{\log_2(\mu n)}\leq \frac{\sqrt{\log_2(\mu n)}}{\mu n}.
\end{equation}
Since $1\leq t \asymp \log_2(\mu n)$, we have $\frac{1}{\mu n} \asymp \frac{1}{2^t} \lesssim 1$. Therefore, we can bound the remaining terms on the RHS of Eq.~\eqref{eq:svrg-epr-sc-co1} as follows:
\begin{align}
    &\frac{1}{\gamma}\Big(1+\frac{1}{2^t}\Big) \lesssim \frac{1}{\gamma} = \frac{\sqrt{\log_2(\mu n)}}{\mu n},\label{eq:svrg-epr-sc-term1}\\
    &\frac{(1+\gamma) m\eta^2}{2^t n} \lesssim \frac{\frac{\mu n}{\sqrt{\log_2(\mu n)}}\cdot \frac{1}{\eta\mu}\cdot\eta^2}{n} = \frac{\eta}{\sqrt{\log_2(\mu n)}} \lesssim \frac{\sqrt{\log_2(\mu n)}}{\mu n},\label{eq:svrg-epr-sc-term3}\\
    &\frac{1}{2^t} \asymp \frac{1}{\mu n} \lesssim \frac{\sqrt{\log_2(\mu n)}}{\mu n}.\label{eq:svrg-epr-sc-term4}
\end{align}
Plugging Eqs.~\eqref{eq:svrg-epr-sc-term2}--\eqref{eq:svrg-epr-sc-term4} into Eq.~\eqref{eq:svrg-epr-sc-co1} gives
\[
\e\big[L(\bw_t)\big] - L(\bw^*) \lesssim \frac{\sqrt{\log_2(\mu n)}}{\mu n},
\]
which completes the proof.
\end{proof}

\subsection{Analysis of SAGA}\label{app:saga-sc}
We proceed to study the behavior of SAGA for strongly convex problems. We first establish a refined stability bound for SAGA in the following theorem. Compared with the result in Theorem \ref{thm:stab-cvx}, we have a potential contraction factor $(1+1 / t-\eta\mu)^{t-k}$ for the empirical error $\e[L_S(\bw_k)]$. The detailed proof is given in Appendix~\ref{sec:proof-stab-sc}.

\begin{theorem}[Stability bounds]\label{thm:stab-sc}
Let $S,S^{(i)}$ be defined in Definition~\ref{def:stability}. Let $\{\bw_t\}_{t\ge 1}$ and $\{\bw_t^{(i)}\}_{t\ge 1}$ be the sequences produced by Algorithm~\ref{algo: saga} applied to $S$ and $S^{(i)}$, respectively.
Assume that for any $\bz\in\zcal$, the map $\bw\mapsto\ell(\bw;\bz)$ is nonnegative, $\mu$-strongly convex and $\alpha$-smooth.
If $\eta \leq \min\big\{\frac{1}{2 \mu n}, \frac{n-2}{6\alpha(n-1)}\big\}$, then for any $t\ge 1$, we have
\[
\e[\|\bw_{t+1}-\bw_{t+1}^{(i)}\|^2] \leq \sum_{k=1}^t(1+1 / t-\eta\mu)^{t-k} \frac{8 \alpha(6+t / n)}{n} \eta^2 \e[L_S(\bw_k)]+32\alpha\eta^2(1+1 / t-\eta\mu)^t L(\bw_1).
\]
\end{theorem}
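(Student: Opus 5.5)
The plan is to reproduce the convex SAGA argument (proof of Theorem~\ref{thm:stab-cvx}), with one change. In the $G_t$ step I will use strong convexity, exactly as in the proof of Theorem~\ref{thm:svrg-stab-sc}. I start from the decomposition $\bw_{t+1}=G_t+\eta d_t$ and use Eq.~\eqref{eq:cvx-stab2} and Eq.~\eqref{eq:cvx-stab-dt} unchanged. These bound $\e[\|\bw_{t+1}-\bw_{t+1}^{(i)}\|^2]$ by three terms: $\e[\|G_t-G_t^{(i)}\|^2]$, the term $\frac{2\eta^2}{n}\sum_j\e\|\nabla\ell(\phi_{t,j};\bz_j)-\nabla\ell(\phi^{(i)}_{t,j};\bz^{(i)}_j)\|^2$, and $\eta^2$ times the stochastic gradient difference.

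For $G_t$, the case $i_t\neq i$ is handled as in Eq.~\eqref{eq:sc-G-G-1}. I split $2\eta\langle\cdot,\cdot\rangle$ with weights $\frac{n-2}{n-1}$ and $\frac{n}{n-1}$, and apply coercivity to the first piece and $\mu$-strong monotonicity to the second. The case $i_t=i$ is handled with Young's inequality using parameter $p$, as in Eq.~\eqref{eq:sc-G-G-2}. Averaging over $i_t$ gives the analogue of Eq.~\eqref{eq:svrg-sc-G}: the contraction factor on $\e\|\bw_t-\bw_t^{(i)}\|^2$ becomes $1+p/n-\eta\mu$, and the $j\neq i$ gradient-difference terms carry the negative coefficient $-\frac1n\big(\frac{(n-2)\eta}{(n-1)\alpha}-2\eta^2\big)$ after adding the $\eta^2$ term. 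Next I add $2\eta^2$ times the $\phi$-table identity Eq.~\eqref{eq:cvx-stab-phi}. This contributes $\frac{2\eta^2}{n}\sum_j\|\nabla\ell(\bw_t;\bz_j)-\nabla\ell(\bw^{(i)}_t;\bz^{(i)}_j)\|^2$ plus $\frac{n-1}{n}$ times the old table term. The old table term then appears with total coefficient $2\eta^2(\frac{n-1}{n}+\frac1n)=2\eta^2$, so it does \emph{not} contract.

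This is the main obstacle. A factor $(1+p/n-\eta\mu)$ on $\|\bw_t-\bw_t^{(i)}\|^2$ but only $1$ on the table part prevents a clean recursion for $\Phi_t$ with factor $1+p/n-\eta\mu$. I would fix this by reweighting the Lyapunov function to $\Psi_t:=\|\bw_t-\bw_t^{(i)}\|^2+4\eta^2\sum_j\|\nabla\ell(\phi_{t,j};\bz_j)-\nabla\ell(\phi^{(i)}_{t,j};\bz^{(i)}_j)\|^2$. With the doubled weight, the table coefficient becomes $\frac{2\eta^2}{n}+4\eta^2\frac{n-1}{n}=4\eta^2(1-\frac1{2n})$. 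Since $\eta\le\frac1{2\mu n}$ gives $\eta\mu\le\frac1{2n}$, we have $1-\frac1{2n}\le 1+p/n-\eta\mu$, so both components contract by the same factor. The price is that the fresh-gradient term now has weight $\frac{4\eta^2}{n}$, which gives the requirement $\frac{(n-2)\eta}{(n-1)\alpha}-2\eta^2\ge 4\eta^2$. That is exactly $\eta\le\frac{n-2}{6\alpha(n-1)}$, which confirms the choice. The index-$i$ terms collect a coefficient $\frac{(2+1/p)+4}{n}\eta^2=\frac{6+1/p}{n}\eta^2$. By the self-bounding step Eq.~\eqref{eq:cvx-stab-i} this becomes $\frac{8\alpha(6+1/p)}{n}\eta^2\e[L_S(\bw_t)]$, matching the stated constant.

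The result is $\e[\Psi_{t+1}]\le(1+p/n-\eta\mu)\e[\Psi_t]+\frac{8\alpha(6+1/p)}{n}\eta^2\e[L_S(\bw_t)]$. Unrolling it with $p=n/t$ gives the factors $(1+1/t-\eta\mu)^{t-k}$ and $1/p=t/n$. For the initial term, $\Psi_1=4\eta^2\|\nabla\ell(\bw_1;\bz_i)-\nabla\ell(\bw_1;\bz_i')\|^2$ has expectation at most $32\alpha\eta^2L(\bw_1)$, by the argument of Eq.~\eqref{eq:svrg-8alpha} together with $\e[\ell(\bw_1;\bz_i)]=L(\bw_1)$. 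Finally, $\|\bw_{t+1}-\bw_{t+1}^{(i)}\|^2\le\Psi_{t+1}$ yields the claimed bound.
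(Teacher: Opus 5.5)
Your proposal is correct and matches the paper's proof essentially step for step. The shared ingredients are: the strongly convex bound on $\|G_t-G_t^{(i)}\|^2$ with factor $1+p/n-\eta\mu$; the Lyapunov function with weight $4\eta^2$ on the gradient table (the paper's $H_t$); the condition $\eta\le 1/(2\mu n)$, which makes the table coefficient $(4-2/n)\eta^2$ contract by $1+p/n-\eta\mu$; the condition $\eta\le\frac{n-2}{6\alpha(n-1)}$, which lets you drop the $j\neq i$ terms; the self-bounding constant $8\alpha(6+1/p)/n$; $\mathbb{E}[H_1]\le 32\alpha\eta^2L(\bw_1)$; and $p=n/t$.
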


\begin{remark}\label{rmk:saga-stab-sc}
    Similar to SVRG (see Remark~\ref{rmk:svrg-stab-sc}), SAGA also achieves optimal stability bounds of order $O(1/(\mu n))$ in the strongly convex case, as indicated by Theorem~\ref{thm:stab-sc}. To see this, we know from the analysis in the proof of \citep[Theorem 1]{defazio2014saga} that if we set
    \[
    \eta\leq\frac{1}{2\mu n+12\alpha}\leq \min\Big\{\frac{1}{2\mu n}, \frac{n-2}{6\alpha(n-1)}\Big\},
    \]
    then $\e[\|\bw_k-\bw_S\|^2]\lesssim(1-\eta\mu)^k$. Hence, from the $\alpha$-smoothness of $L_S$ and optimality condition $\nabla L_S(\bw_S)=0$, we further obtain
    \[
    \e[L_S(\bw_k)]-L(\bw^*)\leq \e[L_S(\bw_k)-L_S(\bw_S)]\leq\frac{\alpha}{2}\e[\|\bw_k-\bw_S\|^2]\lesssim (1-\eta\mu)^k.
    \]
    Plugging the above inequality into Theorem~\ref{thm:stab-sc} and choosing $t\asymp n$ such that $1 / t\leq \eta\mu/2$ gives
    \begin{align}
        &\e[\|\bw_{t+1}-\bw_{t+1}^{(i)}\|^2] \notag\\
        \lesssim & \sum_{k=1}^t(1+1 / t-\eta\mu)^{t-k} \frac{1+t/n}{n} \eta^2 \big[L(\bw^*)+(1-\eta\mu)^k\big]+\eta^2(1+1 / t-\eta\mu)^t L(\bw_1)\notag\\
        \leq & \frac{(1+t/n)\eta^2}{n}\Big(L(\bw^*)\sum_{k=1}^t(1+1 / t-\eta\mu)^{t-k}+t(1+1 / t-\eta\mu)^t\Big)+\eta^2(1+1 / t-\eta\mu)^t L(\bw_1)\notag\\
        \leq & \frac{(1+t/n)\eta^2}{n}\Big(\frac{L(\bw^*)}{\eta\mu-1/t}+t\Big)+\eta^2 L(\bw_1)
        \leq  \frac{(1+t/n)\eta^2}{n}\Big(\frac{2 L(\bw^*)}{\eta\mu}+t\Big)+\eta^2 L(\bw_1) \lesssim \frac{1}{(\mu n)^2}.\notag%
    \end{align}
\end{remark}

Combining the stability bound established in Theorem \ref{thm:stab-sc} and the existing convergence analysis of SAGA in the strongly convex case \citep{defazio2014saga}, we obtain the following EPR bound. The proofs of Theorem~\ref{thm:epr-sc} and Corollary~\ref{co:epr-sc} are given in Appendix~\ref{sec:proof-exc-sc}.
\begin{theorem}[Excess population risk\label{thm:epr-sc}]
    Assume that for any $\bz\in\zcal$, the map $\bw\mapsto\ell(\bw;\bz)$ is nonnegative, $\mu$-strongly convex, and $\alpha$-smooth. Suppose $\e_S[\|\bw_1-\bw_S\|^2]<\infty$ and $n^{-1}\lesssim\mu$.
    Let $\{\bw_t\}_{t\ge 1}$ be the sequence produced by Algorithm~\ref{algo: saga} applied to $S$ with $\eta\leq\frac{1}{2\mu n+12\alpha}$.
    Then for any $t> 1/(\eta\mu)\asymp n$ and $\gamma>0$, we have
    \begin{multline*}
        \e[L(\bw_t)]-L(\bw^*) \lesssim \frac{1}{\gamma}\Big(L(\bw^*)+(1-\eta\mu)^t/\mu\Big)\\+  \frac{(1+\gamma)(1+t / n)}{n} \eta^2\Big(\frac{L(\bw^*)}{\eta\mu-1/t}+t(1+1 / t-\eta\mu)^t/\mu\Big)
        + (1+\gamma) \eta^2 L(\bw_1) + (1-\eta\mu)^t/\mu.
    \end{multline*}
\end{theorem}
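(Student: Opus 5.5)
The plan mirrors the proofs of Theorem~\ref{thm:epr-cvx} and Theorem~\ref{thm:svrg-epr-sc}. We split $\e[L(\bw_t)]-L(\bw^*)$ into the generalization gap $\e[L(\bw_t)-L_S(\bw_t)]$ and the optimization part $\e[L_S(\bw_t)]-L(\bw^*)\le \e[L_S(\bw_t)-L_S(\bw_S)]$; the latter inequality uses $\e[L_S(\bw_S)]\le L(\bw^*)$.

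\textbf{Optimization part.} We use the linear convergence of SAGA quoted in Remark~\ref{rmk:saga-stab-sc}. For $\eta\le 1/(2\mu n+12\alpha)$ it gives $\e_A[\|\bw_k-\bw_S\|^2]\lesssim (1-\eta\mu)^k C_S$, where $C_S$ involves $\|\bw_1-\bw_S\|^2$ and $\eta^2$ times the gradient-table discrepancy at $\bw_1$. By $\alpha$-smoothness of $L_S$ and $\nabla L_S(\bw_S)=0$,
\[
\e[L_S(\bw_k)-L_S(\bw_S)]\le \tfrac{\alpha}{2}\e[\|\bw_k-\bw_S\|^2].
\]
To obtain the stated factor $1/\mu$, bound $\e_S\|\bw_1-\bw_S\|^2$ through strong convexity: $\|\bw_1-\bw_S\|^2\le \frac{2}{\mu}(L_S(\bw_1)-L_S(\bw_S))$, whose expectation is at most $2L(\bw_1)/\mu$. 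Treat $L(\bw_1)$ and $\alpha$ as constants, as in the corollaries; the table term is bounded via \eqref{eq:svrg-opt5-1} by $n\eta^2\alpha L(\bw_1)\lesssim 1/\mu$. The result is
\[
\e[L_S(\bw_k)]-L(\bw^*)\lesssim (1-\eta\mu)^k/\mu \quad \text{for all } k .
\]

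\textbf{Generalization part.} Apply Lemma~\ref{lem:general-stablity} to $A(S)=\bw_t$ with the stability bound of Theorem~\ref{thm:stab-sc}, applied with $t-1$ in place of $t$, or equivalently its monotone-in-$t$ version as in the proof of Theorem~\ref{thm:epr-cvx}. The hypotheses of Theorem~\ref{thm:stab-sc} follow from $\eta\le 1/(2\mu n+12\alpha)$ and $n\ge 3$. This gives
\[
\tfrac{\alpha}{\gamma}\e[L_S(\bw_t)]+(\alpha+\gamma)\Big[\sum_{k=1}^t(1+1/t-\eta\mu)^{t-k}\tfrac{(6+t/n)\eta^2}{n}\e[L_S(\bw_k)]+\eta^2(1+1/t-\eta\mu)^tL(\bw_1)\Big].
\]
Bound the first term by the optimization estimate. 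For the last term, $t>1/(\eta\mu)$ gives $1+1/t-\eta\mu<1$, which yields $(1+\gamma)\eta^2L(\bw_1)$. In the sum, insert $\e[L_S(\bw_k)]\le L(\bw^*)+(1-\eta\mu)^k/\mu$. The $L(\bw^*)$ part is a geometric sum with ratio $1+1/t-\eta\mu\in(0,1)$, bounded by $1/(\eta\mu-1/t)$. For the remaining part, use $(1+1/t-\eta\mu)^{t-k}(1-\eta\mu)^k\le(1+1/t-\eta\mu)^t$ for each $k$, which gives $t(1+1/t-\eta\mu)^t/\mu$. Also $(6+t/n)\lesssim 1+t/n$.

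Summing the generalization and optimization bounds and absorbing constants gives the stated inequality. The main obstacle is the optimization step. It requires extracting from \citep{defazio2014saga} a clean rate $\e[L_S(\bw_k)-L_S(\bw_S)]\lesssim(1-\eta\mu)^k/\mu$ valid under our step size, including control of the initial Lyapunov value so that it is $O(1/\mu)$ in expectation over $S$. If the cited constant involves a different contraction factor, I would rederive it with the Lyapunov function $V_t$ from the proof of Theorem~\ref{thm:opt-cvx}. Strong convexity adds a $-\eta\mu\|\bw_t-\bw_S\|^2$ term. The table weight $2\eta^2$ then contracts at rate $1-1/n\le 1-2\eta\mu$, using $\eta\le 1/(2\mu n)$. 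Together these give $\e V_{t+1}\le(1-\eta\mu)\e V_t$.
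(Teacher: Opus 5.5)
Your proposal is correct and is essentially the paper's proof. It uses the same split into generalization gap and optimization error, and the same linear rate from the proof of \citep[Theorem 1]{defazio2014saga}, converted by smoothness into $\e[L_S(\bw_k)]-L(\bw^*)\lesssim(1-\eta\mu)^k/\mu$. It then applies Theorem~\ref{thm:stab-sc} with $1+1/t-\eta\mu<1$ and the same two geometric-sum estimates. Your bound $\e_S[\|\bw_1-\bw_S\|^2]\le 2L(\bw_1)/\mu$, obtained from strong convexity of $L_S$, is a small improvement over the paper: it makes the finiteness assumption on $\e_S[\|\bw_1-\bw_S\|^2]$ unnecessary. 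Note that the initial table term in the cited rate is actually $2n\eta\,\e[L_S(\bw_1)-L_S(\bw_S)]$, which is at most $L(\bw_1)/\mu$ anyway.

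Your fallback, however, is mis-stated. With table weight $2\eta^2$, the term $\frac{2\eta^2}{n}\sum_j\|\nabla\ell(\phi_{t,j};\bz_j)-\nabla\ell(\bw_S;\bz_j)\|^2$ coming from $\eta^2\e_t[\|g_t\|^2]$ exactly cancels the $1/n$ refresh, so the table part does not contract at all. You would need weight $4\eta^2$, as in $H_t$. That weight contracts at rate $1-\frac{1}{2n}\le 1-\eta\mu$ and requires $6\eta^2\le\eta/\alpha$, which holds since $\eta\le\frac{1}{12\alpha}$.
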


We then specify the parameters to derive EPR bounds of order $O\big(\log^{2/3} n/(\mu n)\big)$, which matches the minimax optimal rate $O\big(1/(\mu n)\big)$ up to a logarithmic factor.
\begin{corollary}\label{co:epr-sc}
Let the assumptions in Theorem \ref{thm:epr-sc} hold and suppose $n^{-1}(\log n)^{2/3}\lesssim \mu$. Then we can take $t\asymp n \log n$, $\eta = \frac{1}{2\mu n + 12\alpha}$, and $\gamma = \mu n/(\log n)^{2/3}$ to derive
$
\ebb[L(\bw_t)-L(\bw^*)]\lesssim \frac{\log^{2/3} n}{\mu n}.
$
\end{corollary}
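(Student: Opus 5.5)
The plan is to take Theorem~\ref{thm:epr-sc} as given and plug in the prescribed choices of $t$, $\eta$, $\gamma$, bounding each term by $\log^{2/3}n/(\mu n)$, in the same way as the proof of Corollary~\ref{co:svrg-epr-sc}. First, check the hypotheses of Theorem~\ref{thm:epr-sc}:
\begin{itemize}
\item $\eta=\frac{1}{2\mu n+12\alpha}$ is admissible.
\item $\eta\mu\asymp 1/n$, because $\mu\gtrsim n^{-1}(\log n)^{2/3}$ makes $\mu n\gtrsim 1$ and $\alpha\gtrsim\mu$ (we treat $\alpha$ as a constant).
\item $t\asymp n\log n$ therefore exceeds $1/(\eta\mu)\asymp n$ for large $n$.
\end{itemize}
We also use $L(\bw^*)\le L(\bw_1)\lesssim 1$ throughout.

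Next, handle the exponential factors. Choose the constant in $t\asymp n\log n$ large enough that $(1-\eta\mu)^t\le e^{-\eta\mu t}\le n^{-2}$. This makes $(1-\eta\mu)^t/\mu\lesssim 1/(\mu n^2)\le 1/(\mu n)$, so the pure optimization term and the corresponding piece of the $1/\gamma$ term are negligible. Taking $t\ge 2/(\eta\mu)$ gives $1/t\le\eta\mu/2$. Hence $\eta\mu-1/t\ge\eta\mu/2$ and $(1+1/t-\eta\mu)^t\le(1-\eta\mu/2)^t\le n^{-2}$ after enlarging the constant. Then $t(1+1/t-\eta\mu)^t/\mu\lesssim n\log n/(\mu n^2)=\log n/(\mu n)$.

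With these estimates the middle term is at most
\[
\frac{(1+\gamma)(1+t/n)\eta^2}{n}\Big(\frac{1}{\eta\mu}+\frac{\log n}{\mu n}\Big)
\lesssim \frac{\gamma\log n}{n}\cdot\frac{\eta}{\mu}\lesssim \frac{\gamma\log n}{\mu^2 n^2},
\]
using $1+t/n\asymp\log n$, $\eta\le 1/(2\mu n)$, and $\gamma\gtrsim1$. The term $(1+\gamma)\eta^2L(\bw_1)$ is $\lesssim\gamma/(\mu n)^2$, which is smaller. The $1/\gamma$ term is $\lesssim 1/\gamma$. The choice $\gamma=\mu n/(\log n)^{2/3}$ does not exactly balance $1/\gamma$ against $\gamma\log n/(\mu n)^2$. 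It gives $1/\gamma=(\log n)^{2/3}/(\mu n)$, but the middle term becomes $(\log n)^{1/3}/(\mu n)$, which is smaller. So every term is $\lesssim(\log n)^{2/3}/(\mu n)$, and this gives the claim.

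The main obstacle is the step $(1+1/t-\eta\mu)^t\le n^{-2}$. It requires $t$ and $1/(\eta\mu)$ to be tied together with explicit constants, namely $t\ge c_0\,n\log n$ with $c_0$ depending on $\alpha/\mu$ through $1/(\eta\mu)=2n+12\alpha/\mu$. This dependence has to be absorbed into $\asymp$. The assumption $\mu\gtrsim(\log n)^{2/3}/n$ is used to ensure $\gamma\ge1$, so that $1+\gamma\asymp\gamma$, and that $\mu n$ dominates the constant terms.
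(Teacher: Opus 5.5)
Your proposal is correct and follows essentially the same route as the paper: check the hypotheses of Theorem~\ref{thm:epr-sc}, then control $(1-\eta\mu)^t$ and $(1+1/t-\eta\mu)^t\le(1-\eta\mu/2)^t$ using $t\ge 2/(\eta\mu)$ and $\log(1+x)\le x$, and then bound each term with $L(\bw^*)\le L(\bw_1)\lesssim 1$ and $\gamma\gtrsim 1$. The only difference is that you drive the exponential factors down to $n^{-2}$ rather than the paper's $\lesssim 1/n$ (the paper additionally converts $1/n\lesssim \mu/(\log n)^{2/3}$ via the hypothesis), so your middle term is $(\log n)^{1/3}/(\mu n)$ instead of the paper's $(\log n)^{2/3}/(\mu n)$, and both routes give the stated rate.
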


\subsubsection{Proof of Stability Bounds\label{sec:proof-stab-sc}}
\begin{proof}[Proof of Theorem~\ref{thm:stab-sc}]
    We start by plugging Eq.~\eqref{eq:cvx-stab-dt} into Eq.~\eqref{eq:cvx-stab2} to derive
\begin{multline}
    \e[\|\bw_{t+1}-\bw_{t+1}^{(i)}\|^2]
    \leq  \e[\|G_t-G_t^{(i)}\|^2]+\\
    \frac{2 \eta^2}{n} \sum_{j=1}^n\e[\|\nabla \ell(\phi_{t, j}; \bz_j)-\nabla \ell(\phi_{t, j}^{(i)}; \bz_j^{(i)})\|^2]+\eta^2\e[\|\nabla \ell(\bw_t; \bz_{i_t})-\nabla \ell(\bw_t^{(i)}; \bz_{i_t}^{(i)})\|^2]\label{eq:sc-stab-0}.
\end{multline}
By similar arguments as in the derivation of Eq.~\eqref{eq:svrg-sc-G}, we obtain
\begin{multline}\label{eq:sc-G-G-3}
    \e[\|G_t-G_t^{(i)}\|^2] \leq (1+p / n-\eta\mu) \e[\|\bw_t-\bw_t^{(i)}\|^2]\\-\frac{1}{n}\Big(\frac{(n\!-\!2)\eta}{(n\!-\!1)\alpha}\!-\!\eta^2\Big) \sum_{j \neq i} \e[\|\nabla \ell(\bw_t; \bz_j)\!-\!\nabla \ell(\bw_t^{(i)}; \bz_j)\|^2] \!+\!\frac{1+1/p}{n} \eta^2 \e[\|\nabla \ell(\bw_t; \bz_i)\!-\!\nabla \ell(\bw_t^{(i)}; \bz_i^{\prime})\|^2].
\end{multline}
Plugging Eq.~\eqref{eq:sc-G-G-3} back to Eq.~\eqref{eq:sc-stab-0}, we obtain
\begin{align}
    &\e[\|\bw_{t+1}-\bw_{t+1}^{(i)}\|^2]\notag\\
    \leq &\big(1\!+\!\frac{p}{n}\!-\!\eta\mu\big) \e[\|\bw_t\!-\!\bw_t^{(i)}\|^2]-\frac{1}{n}\Big(\frac{(n\!-\!2)\eta}{(n\!-\!1)\alpha}\!-\!2\eta^2\Big) \sum_{j \neq i} \e[\|\nabla \ell(\bw_t; \bz_j)\!-\!\nabla \ell(\bw_t^{(i)}; \bz_j)\|^2]\notag \\
    &\!+\!\frac{2\!+\!1 / p}{n} \eta^2 \e[\|\nabla \ell(\bw_t; \bz_i)\!-\!\nabla \ell(\bw_t^{(i)}; \bz_i^{\prime})\|^2]\!+\!\frac{2 \eta^2}{n}\!\! \sum_{j=1}^n \!\e[\|\nabla \ell(\phi_{t, j}; \bz_j)\!-\!\nabla \ell(\phi_{t, j}^{(i)}; \bz_j^{(i)})\|^2]\label{eq:sc-stab1}.
\end{align}
To proceed, recall we have from the update rule of $\{\phi_{t+1,j}\}$ that
\begin{align}
    &\e\Big[\sum_{j=1}^n\|\nabla \ell(\phi_{t+1, j}; \bz_j)-\nabla \ell(\phi_{t+1,j}^{(i)}; \bz_j^{(i)})\|^2\Big] \notag\\
    \!=& \frac{1}{n} \e\Big[\sum_{j=1}^n\|\nabla \ell(\bw_t; \bz_j)\!-\!\nabla \ell(\bw_t^{(i)}; \bz_j^{(i)})\|^2\Big]\!+\!\frac{n\!-\!1}{n} \e\Big[\sum_{j=1}^n\|\nabla \ell(\phi_{t, j}; \bz_j)\!-\!\nabla \ell(\phi_{t, j}^{(i)}; \bz_j^{(i)})\|^2\Big]\label{eq:sc-stab-phi}.
\end{align}
Multiplying both sides of Eq.~\eqref{eq:sc-stab-phi} by $4\eta^2$ and adding it to Eq.~\eqref{eq:sc-stab1}, we derive
\begin{align}
&\e\Big[\|\bw_{t+1}-\bw_{t+1}^{(i)}\|^2+4 \eta^2  \sum_{j=1}^n\|\nabla \ell(\phi_{t+1, j}; \bz_j)-\nabla \ell(\phi_{t+1,j}^{(i)}; \bz_j^{(i)})\|^2\Big]\notag \\
\leq & \big(1\!+\!\frac{p}{n}\!-\!\eta\mu\big) \e[\|\bw_t\!-\!\bw_t^{(i)}\|^2]\!-\!\frac{1}{n}\Big(\frac{(n\!-\!2)\eta}{(n\!-\!1)\alpha}\!-\!6\eta^2\Big) \sum_{j \neq i} \e[\|\nabla \ell(\bw_t; \bz_j)\!-\!\nabla \ell(\bw_t^{(i)}; \bz_j)\|^2]\notag \\
& \!+\!\frac{6\!+\!1 / p}{n} \eta^2 \e[\|\nabla \ell(\bw_t; \bz_i)\!-\!\nabla \ell(\bw_t^{(i)}; \bz_i^{\prime})\|^2]\!+\!\Big(4\!-\!\frac{2}{n}\Big)\eta^2\! \sum_{j=1}^n\! \e[\|\nabla \ell(\phi_{t,j}; \bz_j)\!-\!\nabla \ell(\phi_{t,j}^{(i)}; \bz_j^{(i)})\|^2]\label{eq:sc-stab2}.
\end{align}
Define the Lyapunov function $H_t$ via
\[H_t :=\|\bw_t-\bw_t^{(i)}\|^2+ 4\eta^2 \sum_{j=1}^n \| \nabla \ell(\phi_{t, j}; \bz_j)-\nabla \ell(\phi_{t,j}^{(i)}; \bz_j^{(i)})\|^2.\]
From the condition $\eta\leq 1/(2\mu n)$ and $p>0$, we know that
\[
4-\frac{2}{n}\leq 4(1-\eta\mu)\leq 4(1+p/n-\eta\mu),
\]
which together with Eq.~\eqref{eq:sc-stab2} leads to
\begin{multline}\label{eq:sc-phi-1}
    \e[H_{t+1}]\leq (1+p/n-\eta\mu) \e[H_t]\\-\!\frac{1}{n}\Big(\frac{(n\!-\!2)\eta}{(n\!-\!1)\alpha}\!-\!6\eta^2\Big) \sum_{j \neq i} \e[\|\nabla \ell(\bw_t; \bz_j)\!-\!\nabla \ell(\bw_t^{(i)}; \bz_j)\|^2]
    \!+\!\frac{6\!+\!\frac{1}{p}}{n} \eta^2 \e[\|\nabla \ell(\bw_t; \bz_i)\!-\!\nabla \ell(\bw_t^{(i)}; \bz_i^{\prime})\|^2].
\end{multline}
Since $\eta\leq \frac{n-2}{6\alpha(n-1)}$, we know the coefficient $\frac{(n-2)\eta}{(n-1)\alpha}-6\eta^2\ge 0$. Moreover, recall $\e[\|\nabla \ell(\bw_t; \bz_{i}) -\nabla \ell(\bw_t^{(i)}; \bz_i^{\prime})\|^2]\leq 8\alpha\ebb[L_S(\bw_{t})]$ as shown in Eq.~\eqref{eq:cvx-stab-i}, we thus obtain from Eq.~\eqref{eq:sc-phi-1} that
\begin{equation}\label{eq:sc-phi-2}
    \e[H_{t+1}]\leq (1+p/n-\eta\mu) \e[H_t]+\frac{8\alpha(6+1 / p)}{n} \eta^2 \e[L_S(\bw_t)].
\end{equation}
Applying Eq.~\eqref{eq:sc-phi-2} recursively gives
\begin{equation}\label{eq:sc-phi-3}
    \e[H_{t+1}] \leq \sum_{k=1}^t(1+p / n-\eta\mu)^{t-k} \frac{8 \alpha(6+1 / p)}{n} \eta^2 \e[L_S(\bw_k)]+(1+p / n-\eta\mu)^t \e[H_1].
\end{equation}
From $\bw_1=\bw_1^{(i)}$ and $\phi_{1, j}=\bw_1,~\phi_{1, j}^{(i)}=\bw_1^{(i)}$ for all $j\in[n]$, we know that
\[
H_1=4 \eta^2 \sum_{j=1}^n \| \nabla \ell(\bw_1; \bz_j)-\nabla \ell(\bw_1; \bz_j^{(i)})\|^2=4 \eta^2 \| \nabla \ell(\bw_1; \bz_i)-\nabla \ell(\bw_1; \bz'_i)\|^2,
\]
which together with Eq.~\eqref{eq:cvx-stab-i} implies that
\begin{equation}\label{eq:sc-phi-4}
  \e[H_1] \leq 32\alpha\eta^2\e[L_S(\bw_1)]=32\alpha\eta^2L(\bw_1).
\end{equation}
Plugging Eq.~\eqref{eq:sc-phi-4} into Eq.~\eqref{eq:sc-phi-3} and letting $p=n/t$, we obtain
\begin{align}
    &\e[\|\bw_{t+1}-\bw_{t+1}^{(i)}\|^2] \leq \e[H_{t+1}]\notag\\
    \leq&\sum_{k=1}^t(1+1 / t-\eta\mu)^{t-k} \frac{8 \alpha(6+t / n)}{n} \eta^2 \e[L_S(\bw_k)]+32\alpha\eta^2(1+1 / t-\eta\mu)^t L(\bw_1)\label{eq:sc-stab-f},
\end{align}
which completes the proof.
\end{proof}

\subsubsection{Proof of EPR Bounds\label{sec:proof-exc-sc}}
\begin{proof}[Proof of Theorem~\ref{thm:epr-sc}]
Note for any $t>1/(\eta\mu)$, we have $1+1 / t-\eta\mu<1$. Hence it follows from $\eta\leq\frac{1}{2\mu n+12\alpha}\leq \min\{\frac{1}{2\mu n}, \frac{n-2}{6\alpha(n-1)}\}$ and Eq.~\eqref{eq:sc-stab-f} that
\begin{equation}\label{eq:epr-sc1}
    \e[\|\bw_t-\bw_t^{(i)}\|^2] \leq \sum_{k=1}^t(1+1 / t-\eta\mu)^{t-k} \frac{8 \alpha(6+t / n)}{n} \eta^2 \e[L_S(\bw_k)]+32\alpha\eta^2 L(\bw_1).
\end{equation}
Applying Lemma \ref{lem:general-stablity}, together with Eq.~\eqref{eq:epr-sc1}, we obtain
\begin{multline}
\e[L(\bw_t)-L_S(\bw_t)] \leq \frac{\alpha}{\gamma} \e[L_S(\bw_t)]+\frac{\alpha+\gamma}{2 n} \sum_{i=1}^n \e[\|\bw_t-\bw_t^{(i)}\|^2]
\leq  \frac{\alpha}{\gamma} \e[L_S(\bw_t)]\\+\frac{4(\alpha+\gamma) \alpha(6+t / n)}{n} \eta^2 \sum_{k=1}^t (1+1 / t-\eta\mu)^{t-k} \e[L_S(\bw_k)]+16(\alpha+\gamma) \alpha \eta^2 L(\bw_1)\label{eq:epr-sc2}.
\end{multline}
Next, we bound the terms involving $\e[L_S(\bw_k)]$. From the analysis in the proof of \citep[Theorem 1]{defazio2014saga}, we know that for any $k\ge 1$, it holds that
\[
\e[\|\bw_k-\bw_S\|^2]\leq (1-\eta\mu)^{k-1}\big(\e_S[\|\bw_1-\bw_S\|^2]+2n\eta\e[L_S(\bw_1)-L_S(\bw_S)]\big)
\]
Plugging in the given choice of $\eta$ in the above inequality, and noting that $\e_S[\|\bw_1-\bw_S\|^2]\leq \infty$ and $n^{-1}\lesssim\mu$, we know that
\begin{align*}
    \e[\|\bw_k-\bw_S\|^2]\leq & (1-\eta\mu)^{k-1}\Big(\e_S[\|\bw_1-\bw_S\|^2]+\frac{2n}{2\mu n+12\alpha}L(\bw_1)\Big)\notag\\
    \lesssim &(1-\eta\mu)^k\Big(1+\frac{1}{\mu+1/n}\Big)\lesssim (1-\eta\mu)^k/\mu.
\end{align*}
Hence, from Eq.~\eqref{eq:epr-cvx2-0}, the $\alpha$-smoothness of $L_S$, and $\nabla L_S(\bw_S)=0$, we obtain
\begin{align}
  &\e[L_S(\bw_k)]-L(\bw^*)\leq \e[L_S(\bw_k)-L_S(\bw_S)]\notag\\
  \leq &\langle \nabla L_S(\bw_S), \bw_k-\bw_S\rangle+\frac{\alpha}{2}\e[\|\bw_k-\bw_S\|^2]=\frac{\alpha}{2}\e[\|\bw_k-\bw_S\|^2]\lesssim (1-\eta\mu)^k/\mu \label{eq:epr-sc-opt},
\end{align}
which further leads to
\begin{align}
    &\sum_{k=1}^t (1+1 / t-\eta\mu)^{t-k} \e[L_S(\bw_k)]\lesssim \sum_{k=1}^t (1+1 / t-\eta\mu)^{t-k} \big(L(\bw^*)+(1-\eta\mu)^k/\mu\big)\notag\\
    \lesssim & L(\bw^*)\sum_{k=1}^t(1+1 / t-\eta\mu)^{t-k}+t(1+1 / t-\eta\mu)^t/\mu \leq \frac{L(\bw^*)}{\eta\mu-1/t}+t(1+1 / t-\eta\mu)^t/\mu.\label{eq:epr-sc4}
\end{align}
Applying Eq.~\eqref{eq:epr-sc-opt} with $k=t$ and Eq.~\eqref{eq:epr-sc4} on Eq.~\eqref{eq:epr-sc2} gives
\begin{multline}
  \e[L(\bw_t)-L_S(\bw_t)]
  \lesssim \frac{1}{\gamma}\Big(L(\bw^*)+(1-\eta\mu)^t/\mu\Big)+  \\
  \frac{(1+\gamma)(1+t / n)}{n} \eta^2\Big(\frac{L(\bw^*)}{\eta\mu-1/t}+t(1+1 / t-\eta\mu)^t/\mu\Big)+ (1+\gamma) \eta^2 L(\bw_1). \label{eq:epr-sc-gen}
\end{multline}
Combining the generalization error in Eq.~\eqref{eq:epr-sc-gen} and optimization error in Eq.~\eqref{eq:epr-sc-opt} with $k=t$ leads to
\begin{multline}\label{eq:epr-sc}
    \e[L(\bw_t)]-L(\bw^*) \lesssim \frac{1}{\gamma}\Big(L(\bw^*)+(1-\eta\mu)^t/\mu\Big)\\
    +\frac{(1+\gamma)(1+t / n)}{n} \eta^2\Big(\frac{L(\bw^*)}{\eta\mu-1/t}+t(1+1 / t-\eta\mu)^t/\mu\Big)+ (1+\gamma) \eta^2 L(\bw_1) + (1-\eta\mu)^t/\mu,
\end{multline}
which completes the proof.
\end{proof}
\begin{proof}[Proof of Corollary~\ref{co:epr-sc}]
With the given choices of parameters, we first show that
\begin{equation}\label{eq:epr-sc-1}
    (1-\eta\mu)^t \lesssim \frac{1}{n}.
\end{equation}
To see this, plugging in $\eta=\frac{1}{2\mu n+12\alpha}$, $t \asymp n \log n$ and taking logarithm on both sides of Eq.~\eqref{eq:epr-sc-1}, then it suffices to show that
\[
\log\Big(1-\frac{1}{2n+ 12\alpha/\mu}\Big)\lesssim -\frac{1}{n},
\]
which clearly holds due to the basic inequality $\log(1+x)\leq x$ with $x=-\frac{1}{2n+ 12\alpha/\mu}$ and the condition $n^{-1}\lesssim n^{-1} (\log n)^{2/3}\lesssim\mu$.

Moreover, since $t\asymp n\log n$, we can assume $t\ge 2/(\eta\mu)\asymp n$ without loss of generality. Hence we have
\[
(1+1 / t-\eta\mu)^t\leq (1-\eta\mu/2)^t\lesssim \frac{1}{n}\lesssim \frac{\mu}{(\log n)^{2/3}},
\]
where the second inequality holds by using a similar approach to the proof of Eq.~\eqref{eq:epr-sc-1}. Noting that $1\lesssim \gamma$, we further obtain
\begin{align}
    &\frac{1}{\gamma}\Big(L(\bw^*)+(1-\eta\mu)^t/\mu\Big)\lesssim \frac{(\log n)^{2/3}}{\mu n}\Big(L(\bw^*)+\frac{1}{\mu n}\Big)\lesssim \frac{(\log n)^{2/3}}{\mu n},\label{eq:epr-sc-2}\\
    &\frac{(1+\gamma)(1+t / n)}{n} \eta^2\Big(\frac{L(\bw^*)}{\eta\mu-1/t}+t(1+1 / t-\eta\mu)^t/\mu\Big)\notag\\
    \lesssim & \frac{\frac{\mu n}{(\log n)^{2/3}} \log n}{n}\frac{1}{(\mu n)^2} \Big(\frac{1}{1/n-1/(n \log n)}+n \log n \frac{\mu}{(\log n)^{2/3}}\frac{1}{\mu}\Big)\lesssim \frac{(\log n)^{2/3}}{\mu n},\label{eq:epr-sc-3}
\end{align}
and
\begin{align}
    (1+\gamma) \eta^2 L(\bw_1)\lesssim &\Big(1+\frac{\mu n}{(\log n)^{2/3}}\Big)\frac{1}{(\mu n)^2}\lesssim \frac{1}{\mu n (\log n)^{2/3}}.\label{eq:epr-sc-4}
\end{align}
Plugging Eq.~\eqref{eq:epr-sc-1}-Eq.~\eqref{eq:epr-sc-4} into Eq.~\eqref{eq:epr-sc} and noting that $(1-\eta\mu)^t/\mu \lesssim \frac{1}{\mu n}$ from \eqref{eq:epr-sc-1} gives
\[
\e[L(\bw_t)]-L(\bw^*) \lesssim \frac{(\log n)^{2/3}}{\mu n},
\]
which completes the proof.
\end{proof}

\bibliographystyle{abbrvnat}
\bibliography{learning}

\begin{thebibliography}{50}
\providecommand{\natexlab}[1]{#1}
\providecommand{\url}[1]{\texttt{#1}}
\expandafter\ifx\csname urlstyle\endcsname\relax
  \providecommand{\doi}[1]{doi: #1}\else
  \providecommand{\doi}{doi: \begingroup \urlstyle{rm}\Url}\fi

\bibitem[Agarwal et~al.(2009)Agarwal, Wainwright, Bartlett, and Ravikumar]{agarwal2009information}
A.~Agarwal, M.~J. Wainwright, P.~L. Bartlett, and P.~K. Ravikumar.
\newblock Information-theoretic lower bounds on the oracle complexity of convex optimization.
\newblock In \emph{Advances in Neural Information Processing Systems}, pages 1--9, 2009.

\bibitem[Allen-Zhu(2018)]{allen2018katyusha}
Z.~Allen-Zhu.
\newblock Katyusha: The first direct acceleration of stochastic gradient methods.
\newblock \emph{Journal of Machine Learning Research}, 18\penalty0 (221):\penalty0 1--51, 2018.

\bibitem[Bassily et~al.(2020)Bassily, Feldman, Guzm{\'a}n, and Talwar]{bassily2020stability}
R.~Bassily, V.~Feldman, C.~Guzm{\'a}n, and K.~Talwar.
\newblock Stability of stochastic gradient descent on nonsmooth convex losses.
\newblock \emph{Advances in Neural Information Processing Systems}, 33, 2020.

\bibitem[Blatt et~al.(2007)Blatt, Hero, and Gauchman]{blatt2007convergent}
D.~Blatt, A.~O. Hero, and H.~Gauchman.
\newblock A convergent incremental gradient method with a constant step size.
\newblock \emph{SIAM Journal on Optimization}, 18\penalty0 (1):\penalty0 29--51, 2007.

\bibitem[Bottou et~al.(2018)Bottou, Curtis, and Nocedal]{bottou2018optimization}
L.~Bottou, F.~E. Curtis, and J.~Nocedal.
\newblock Optimization methods for large-scale machine learning.
\newblock \emph{SIAM Review}, 60\penalty0 (2):\penalty0 223--311, 2018.

\bibitem[Bousquet and Bottou(2008)]{bousquet2008tradeoffs}
O.~Bousquet and L.~Bottou.
\newblock The tradeoffs of large scale learning.
\newblock In \emph{Advances in Neural Information Processing Systems}, pages 161--168, 2008.

\bibitem[Bousquet and Elisseeff(2002)]{bousquet2002stability}
O.~Bousquet and A.~Elisseeff.
\newblock Stability and generalization.
\newblock \emph{Journal of Machine Learning Research}, 2\penalty0 (Mar):\penalty0 499--526, 2002.

\bibitem[Chang(2008)]{chang2008libsvm}
C.~Chang.
\newblock {LIBSVM data: Classification, regression, and multi-label}.
\newblock \emph{http://www. csie. ntu. edu. tw/\~{} cjlin/libsvmtools/datasets/}, 2008.

\bibitem[Charles and Papailiopoulos(2018)]{charles2018stability}
Z.~Charles and D.~Papailiopoulos.
\newblock Stability and generalization of learning algorithms that converge to global optima.
\newblock In \emph{International Conference on Machine Learning}, pages 744--753, 2018.

\bibitem[Chen et~al.(2024)Chen, Fernando, Ying, and Chen]{chen2024three}
L.~Chen, H.~Fernando, Y.~Ying, and T.~Chen.
\newblock Three-way trade-off in multi-objective learning: Optimization, generalization and conflict-avoidance.
\newblock \emph{Journal of Machine Learning Research}, 25\penalty0 (193):\penalty0 1--53, 2024.

\bibitem[Chen et~al.(2023)Chen, Zheng, Long, and Su]{chen2023minimax}
S.~Chen, Q.~Zheng, Q.~Long, and W.~J. Su.
\newblock Minimax estimation for personalized federated learning: An alternative between {FedAvg} and local training?
\newblock \emph{Journal of Machine Learning Research}, 24\penalty0 (262):\penalty0 1--59, 2023.

\bibitem[Cotter et~al.(2011)Cotter, Shamir, Srebro, and Sridharan]{cotter2011better}
A.~Cotter, O.~Shamir, N.~Srebro, and K.~Sridharan.
\newblock Better mini-batch algorithms via accelerated gradient methods.
\newblock \emph{Advances in Neural Information Processing Systems}, 24:\penalty0 1647--1655, 2011.

\bibitem[Csiba et~al.(2015)Csiba, Qu, and Richt{\'a}rik]{csiba2015stochastic}
D.~Csiba, Z.~Qu, and P.~Richt{\'a}rik.
\newblock Stochastic dual coordinate ascent with adaptive probabilities.
\newblock In \emph{International Conference on Machine Learning}, pages 674--683. PMLR, 2015.

\bibitem[Defazio et~al.(2014)Defazio, Bach, and Lacoste-Julien]{defazio2014saga}
A.~Defazio, F.~Bach, and S.~Lacoste-Julien.
\newblock {SAGA}: A fast incremental gradient method with support for non-strongly convex composite objectives.
\newblock In \emph{Advances in Neural Information Processing Systems}, pages 1646--1654, 2014.

\bibitem[Dekel et~al.(2012)Dekel, Gilad-Bachrach, Shamir, and Xiao]{dekel2012optimal}
O.~Dekel, R.~Gilad-Bachrach, O.~Shamir, and L.~Xiao.
\newblock Optimal distributed online prediction using mini-batches.
\newblock \emph{Journal of Machine Learning Research}, 13\penalty0 (1), 2012.

\bibitem[Deng et~al.(2025)Deng, Shen, Li, Sun, Li, and Tao]{deng2025towards}
X.~Deng, L.~Shen, S.~Li, T.~Sun, D.~Li, and D.~Tao.
\newblock Towards understanding the generalizability of delayed stochastic gradient descent.
\newblock \emph{IEEE Transactions on Pattern Analysis and Machine Intelligence}, 2025.

\bibitem[Deora et~al.(2024)Deora, Ghaderi, Taheri, and Thrampoulidis]{deora2024optimization}
P.~Deora, R.~Ghaderi, H.~Taheri, and C.~Thrampoulidis.
\newblock On the optimization and generalization of multi-head attention.
\newblock \emph{Transactions on Machine Learning Research}, 2024.

\bibitem[Fang et~al.(2018)Fang, Li, Lin, and Zhang]{fang2018spider}
C.~Fang, C.~J. Li, Z.~Lin, and T.~Zhang.
\newblock Spider: Near-optimal non-convex optimization via stochastic path-integrated differential estimator.
\newblock In \emph{Advances in Neural Information Processing Systems}, pages 689--699, 2018.

\bibitem[Fang et~al.(2019)Fang, Lin, and Zhang]{fang2019sharp}
C.~Fang, Z.~Lin, and T.~Zhang.
\newblock Sharp analysis for nonconvex sgd escaping from saddle points.
\newblock In \emph{Conference on Learning Theory}, pages 1192--1234. PMLR, 2019.

\bibitem[Gower et~al.(2020)Gower, Schmidt, Bach, and Richt{\'a}rik]{gower2020variance}
R.~M. Gower, M.~Schmidt, F.~Bach, and P.~Richt{\'a}rik.
\newblock Variance-reduced methods for machine learning.
\newblock \emph{Proceedings of the IEEE}, 108\penalty0 (11):\penalty0 1968--1983, 2020.

\bibitem[Hardt et~al.(2016)Hardt, Recht, and Singer]{hardt2016train}
M.~Hardt, B.~Recht, and Y.~Singer.
\newblock Train faster, generalize better: Stability of stochastic gradient descent.
\newblock In \emph{International Conference on Machine Learning}, pages 1225--1234, 2016.

\bibitem[Johnson and Zhang(2013)]{johnson2013accelerating}
R.~Johnson and T.~Zhang.
\newblock Accelerating stochastic gradient descent using predictive variance reduction.
\newblock In \emph{Advances in Neural Information Processing Systems}, pages 315--323, 2013.

\bibitem[Kuzborskij and Lampert(2018)]{kuzborskij2018data}
I.~Kuzborskij and C.~Lampert.
\newblock Data-dependent stability of stochastic gradient descent.
\newblock In \emph{International Conference on Machine Learning}, pages 2820--2829, 2018.

\bibitem[Lei and Ying(2020)]{lei2020fine}
Y.~Lei and Y.~Ying.
\newblock Fine-grained analysis of stability and generalization for stochastic gradient descent.
\newblock In \emph{International Conference on Machine Learning}, pages 5809--5819, 2020.

\bibitem[Liu et~al.(2017)Liu, Lugosi, Neu, and Tao]{liu2017algorithmic}
T.~Liu, G.~Lugosi, G.~Neu, and D.~Tao.
\newblock Algorithmic stability and hypothesis complexity.
\newblock In \emph{International Conference on Machine Learning}, pages 2159--2167, 2017.

\bibitem[Liu et~al.(2024)Liu, Zhang, Gu, and Chen]{liu2024general}
X.~Liu, H.~Zhang, B.~Gu, and H.~Chen.
\newblock General stability analysis for zeroth-order optimization algorithms.
\newblock In \emph{The Twelfth International Conference on Learning Representations}, 2024.

\bibitem[Loizou and Richt{\'a}rik(2020)]{loizou2020momentum}
N.~Loizou and P.~Richt{\'a}rik.
\newblock Momentum and stochastic momentum for stochastic gradient, {Newton}, proximal point and subspace descent methods.
\newblock \emph{Computational Optimization and Applications}, 77\penalty0 (3):\penalty0 653--710, 2020.

\bibitem[Mairal(2015)]{mairal2015incremental}
J.~Mairal.
\newblock Incremental majorization-minimization optimization with application to large-scale machine learning.
\newblock \emph{SIAM Journal on Optimization}, 25\penalty0 (2):\penalty0 829--855, 2015.

\bibitem[Meng et~al.(2017)Meng, Wang, Chen, Wang, Ma, and Liu]{meng2017generalization}
Q.~Meng, Y.~Wang, W.~Chen, T.~Wang, Z.~Ma, and T.~Liu.
\newblock Generalization error bounds for optimization algorithms via stability.
\newblock In \emph{Proceedings of the AAAI Conference on Artificial Intelligence}, volume~31, 2017.

\bibitem[Mou et~al.(2018)Mou, Wang, Zhai, and Zheng]{mou2018generalization}
W.~Mou, L.~Wang, X.~Zhai, and K.~Zheng.
\newblock Generalization bounds of {SGLD} for non-convex learning: Two theoretical viewpoints.
\newblock In \emph{Conference on Learning Theory}, pages 605--638, 2018.

\bibitem[Nguyen et~al.(2017)Nguyen, Liu, Scheinberg, and Tak{\'a}{\v{c}}]{nguyen2017sarah}
L.~M. Nguyen, J.~Liu, K.~Scheinberg, and M.~Tak{\'a}{\v{c}}.
\newblock {SARAH}: A novel method for machine learning problems using stochastic recursive gradient.
\newblock In \emph{International Conference on Machine Learning}, pages 2613--2621. JMLR. org, 2017.

\bibitem[Nguyen et~al.(2021)Nguyen, Tran-Dinh, Phan, Nguyen, and Van~Dijk]{nguyen2021unified}
L.~M. Nguyen, Q.~Tran-Dinh, D.~T. Phan, P.~H. Nguyen, and M.~Van~Dijk.
\newblock A unified convergence analysis for shuffling-type gradient methods.
\newblock \emph{Journal of Machine Learning Research}, 22\penalty0 (207):\penalty0 1--44, 2021.

\bibitem[Nikolakakis et~al.(2022)Nikolakakis, Haddadpour, Kalogerias, and Karbasi]{nikolakakis2022black}
K.~Nikolakakis, F.~Haddadpour, D.~Kalogerias, and A.~Karbasi.
\newblock Black-box generalization: Stability of zeroth-order learning.
\newblock \emph{Advances in Neural Information Processing Systems}, 35:\penalty0 31525--31541, 2022.

\bibitem[Nikolakakis et~al.(2025)Nikolakakis, Karbasi, and Kalogerias]{nikolakakis2025select}
K.~E. Nikolakakis, A.~Karbasi, and D.~Kalogerias.
\newblock Select without fear: Almost all minibatch schedules generalize optimally.
\newblock \emph{SIAM Journal on Mathematics of Data Science}, 7\penalty0 (3):\penalty0 965--992, 2025.

\bibitem[Reddi et~al.(2016)Reddi, Hefny, Sra, Poczos, and Smola]{reddi2016stochastic}
S.~Reddi, A.~Hefny, S.~Sra, B.~Poczos, and A.~Smola.
\newblock Stochastic variance reduction for nonconvex optimization.
\newblock In \emph{International Conference on Machine Learning}, pages 314--323, 2016.

\bibitem[Richards and Kuzborskij(2021)]{richards2021stability}
D.~Richards and I.~Kuzborskij.
\newblock Stability \& generalisation of gradient descent for shallow neural networks without the neural tangent kernel.
\newblock \emph{Advances in Neural Information Processing Systems}, 34, 2021.

\bibitem[Robbins and Monro(1951)]{robbins1951stochastic}
H.~Robbins and S.~Monro.
\newblock A stochastic approximation method.
\newblock \emph{Annals of Mathematical Statistics}, pages 400--407, 1951.

\bibitem[Rosasco et~al.(2020)Rosasco, Villa, and V{\~u}]{rosasco2020convergence}
L.~Rosasco, S.~Villa, and B.~C. V{\~u}.
\newblock Convergence of stochastic proximal gradient algorithm.
\newblock \emph{Applied Mathematics \& Optimization}, 82\penalty0 (3):\penalty0 891--917, 2020.

\bibitem[Roux et~al.(2012)Roux, Schmidt, and Bach]{roux2012stochastic}
N.~Roux, M.~Schmidt, and F.~Bach.
\newblock A stochastic gradient method with an exponential convergence rate for finite training sets.
\newblock \emph{Advances in Neural Information Processing Systems}, 25, 2012.

\bibitem[Schliserman and Koren(2022)]{schliserman2022stability}
M.~Schliserman and T.~Koren.
\newblock Stability vs implicit bias of gradient methods on separable data and beyond.
\newblock In \emph{Conference on Learning Theory}, pages 3380--3394, 2022.

\bibitem[Shalev-Shwartz and Zhang(2013)]{shalev2013stochastic}
S.~Shalev-Shwartz and T.~Zhang.
\newblock Stochastic dual coordinate ascent methods for regularized loss minimization.
\newblock \emph{Journal of Machine Learning Research}, 14\penalty0 (2), 2013.

\bibitem[Shalev-Shwartz et~al.(2010)Shalev-Shwartz, Shamir, Srebro, and Sridharan]{shalev2010learnability}
S.~Shalev-Shwartz, O.~Shamir, N.~Srebro, and K.~Sridharan.
\newblock Learnability, stability and uniform convergence.
\newblock \emph{Journal of Machine Learning Research}, 11\penalty0 (Oct):\penalty0 2635--2670, 2010.

\bibitem[Srebro et~al.(2010)Srebro, Sridharan, and Tewari]{srebro2010smoothness}
N.~Srebro, K.~Sridharan, and A.~Tewari.
\newblock Smoothness, low noise and fast rates.
\newblock In \emph{Advances in Neural Information Processing Systems}, pages 2199--2207, 2010.

\bibitem[Sun et~al.(2023)Sun, Shen, and Tao]{sun2023understanding}
Y.~Sun, L.~Shen, and D.~Tao.
\newblock Understanding how consistency works in federated learning via stage-wise relaxed initialization.
\newblock \emph{Advances in Neural Information Processing Systems}, 36:\penalty0 80543--80574, 2023.

\bibitem[Sun et~al.(2024)Sun, Niu, and Wei]{sun2024understanding}
Z.~Sun, X.~Niu, and E.~Wei.
\newblock Understanding generalization of federated learning via stability: Heterogeneity matters.
\newblock In \emph{International Conference on Artificial Intelligence and Statistics}, pages 676--684. PMLR, 2024.

\bibitem[Taheri and Thrampoulidis(2024)]{taheri2024generalization}
H.~Taheri and C.~Thrampoulidis.
\newblock Generalization and stability of interpolating neural networks with minimal width.
\newblock \emph{Journal of Machine Learning Research}, 25\penalty0 (156):\penalty0 1--41, 2024.

\bibitem[Wang et~al.(2013)Wang, Chen, Smola, and Xing]{wang2013variance}
C.~Wang, X.~Chen, A.~J. Smola, and E.~P. Xing.
\newblock Variance reduction for stochastic gradient optimization.
\newblock \emph{Advances in Neural Information Processing Systems}, 26, 2013.

\bibitem[Xiao and Zhang(2014)]{xiao2014proximal}
L.~Xiao and T.~Zhang.
\newblock A proximal stochastic gradient method with progressive variance reduction.
\newblock \emph{SIAM Journal on Optimization}, 24\penalty0 (4):\penalty0 2057--2075, 2014.

\bibitem[Yuan and Li(2023)]{yuan2023sharper}
X.~Yuan and P.~Li.
\newblock Sharper analysis for minibatch stochastic proximal point methods: Stability, smoothness, and deviation.
\newblock \emph{Journal of Machine Learning Research}, 24\penalty0 (270):\penalty0 1--52, 2023.

\bibitem[Zeng and Lei(2026)]{zeng2026stochastic}
S.~Zeng and Y.~Lei.
\newblock Stochastic gradient methods: Bias, stability and generalization.
\newblock \emph{Journal of Machine Learning Research}, 27\penalty0 (6):\penalty0 1--55, 2026.

\end{thebibliography}
\end{document}